\newcommand{\M}{\mathcal{M}}
\newtheorem{definition}{Definition}
\newtheorem{theorem}{Theorem}
\begin{document}

\title{Segmentation and Restoration of Images on Surfaces by Parametric Active Contours with Topology Changes}
\author{Heike~Benninghoff% <-this % stops a space      
\thanks{Deutsches Zentrum f\"ur Luft- und Raumfahrt (DLR), 82234 We\ss ling, Germany, email: heike.benninghoff@dlr.de.} % <-this % stops a space
~and~Harald~Garcke% <-this % stops a space
\thanks{Fakult\"at f\"ur Mathematik, Universit\"at Regensburg, 93040 Regensburg, Germany, email: harald.garcke@ur.de.}}% <-this % stops a space

\date{}

\maketitle

\begin{abstract}
In this article, a new method for segmentation and restoration of images on two-dimensional surfaces is given. Active contour models for image segmentation are extended to images on surfaces. The evolving curves on the surfaces are mathematically described using a parametric approach. For image restoration, a diffusion equation with Neumann boundary conditions is solved in a postprocessing step in the individual regions. Numerical schemes are presented which allow to efficiently compute segmentations and denoised versions of images on surfaces. Also topology changes of the evolving curves are detected and performed using a fast sub-routine. Finally, several experiments are presented where the developed methods are applied on different artificial and real images defined on different surfaces. 
\end{abstract}

\vspace{1cm}
\textbf{Keywords:}  
Image segmentation, images on surfaces, evolving curves on surfaces, active contours, parametric method, Mumford-Shah, Chan-Vese, topology changes, triple junctions, image restoration, finite element approximation.

\section{Introduction}
\label{intro}
We study the problem of segmentation and restoration of images defined on two-dimensional surfaces. The Mumford-Shah functional \cite{Mumford89} can be extended and reformulated for image data given on surfaces. Segmentation aims at dividing an image on a surface in characteristic regions, for example in regions of similar gray-value or similar color. The objective of image restoration is to  denoise the original image while preserving sharp edges in the image. 

Active contours \cite{Kass88} originally developed  for planar images can also be used to segment images on surfaces. We let one or more time-dependent curves $\Gamma(t)$, $t\in[0,T]$, evolve in time on a surface $\M \subset \mathbb{R}^3$ according to a flow such that the curves are attracted by edges or region boundaries. 
 
Existing studies and works on evolution of curves on surfaces and on active contours for images on surfaces differ in the way how the surface and the curves are  described mathematically. A geometric scale space for images on parametric surfaces is introduced in \cite{Kimmel97} using level sets. The image is handled implicitly by considering its iso-gray levels. In \cite{Spira07}, Spira and Kimmel consider flows of curves on parametric surfaces and perform edge detection using  a variant of the geodesic active contours method \cite{Caselles97} for images on surfaces.  In \cite{Spira07}, the authors restrict on surfaces which have a global parameterization. Evolution equations are solved with the level set method \cite{OsherSethian88} by considering the pre-image of the curve resulting in a planar curve in the parameterization plane. However, this approach has intrinsic disadvantages since the pre-image of the curve is used. In \cite{Krueger08}, drawbacks of this approach concerning the scaling behavior are discussed. Another drawback is the fact that the method does not allow to incorporate a balloon term \cite{Krueger08}. 

In \cite{Cheng02}, a different method to represent the surface is proposed, where the surface is modeled by the zero level set of a three-dimensional function. The zero level set of a second three-dimensional function, which is time-dependent, is used additionally. The curve is represented by the intersection of the two zero level sets. This approach is used in \cite{Krueger08} and \cite{Zhou13} for image segmentation with geodesic active contours. A drawback of the method is that a one-dimensional curve evolution problem is extended to a three-dimensional problem. To reduce the effort,  a new narrow band technique is given in \cite{Krueger08}. 

In \cite{Tian09}, image segmentation is performed using the Chan-Vese \cite{Chan01} model for images on surfaces with level set methods in combination with a so-called closest point method. One iteration step of the Chan-Vese model in a small 3D neighborhood of the surface is computed followed by an interpolation step.

Flows of the form $V_n = \kappa_\M + F$ are studied in \cite{Mikula06}, where $V_n$ is the velocity contribution normal to the curve, but tangential to the surface, $\kappa_\M$ is the geodesic curvature of the curve, and $F$ is an external forcing term. The authors restrict on graph surfaces and solve a system of partial differential equations for the parameterization $\vec x$, the tangent angle, the geodesic curvature $\kappa_\M$ and for $\|\vec x_\rho\|$. 

In this work, we make use of a concept developed in \cite{BGN10}, where Barrett et al. consider flows of curves on surfaces like geodesic curvature flow, geodesic surface diffusion and geodesic Willmore flow of curves. Here, we apply the methods of \cite{BGN10} to image segmentation applications resulting in a curvature driven flow: The normal velocity is the sum of the geodesic curvature $\kappa_\M$ weighted with a parameter $\sigma$ and an external forcing term. The forcing term is designed for image segmentation based on an extension of the Chan-Vese method to images on surfaces. As in \cite{BGN10}, a parametric approach is used to describe the curve. The surface is given implicitly, as zero level set of a smooth function $\Phi$. However, for practical image segmentation, the function $\Phi$ need not be explicitly known. Our method requires only a normal vector to the surface at each point on the surface. The developed  segmentation technique can handle multiple phases and networks of curves including triple junctions. 

The second image processing task we consider in this article is image restoration. To denoise a given image, we solve a diffusion equation on the surface $\M$. For related works on surface partial differential equations, we refer to \cite{Dziuk13} and the references therein. 
For a practical application see \cite{Cunderlik13}, where data of the Earth's surface captured on board of satellites are filtered by nonlinear diffusion. An implicit representation of the surface is used in \cite{Bertalmio01}, where the surface is embedded as zero level set of a higher dimensional function and the partial differential equations are solved on a fixed Cartesian grid using a special embedding function. 

Methods to solve total variation problems on surfaces are proposed in \cite{Lai11}. In detail, the method of \cite{Rudin1992} for denoising images on surfaces and the method of \cite{Chan06} (a convexified Chan-Vese model) for segmentation of images on surfaces are generalized. A direct, called \emph{intrinsic}, approach is pursued: Lai and Chan \cite{Lai11} perform the resulting calculations directly on the given surface by using differential geometry techniques and finite elements for images given on triangulated surfaces.  They also provide an overview and a comparison of several approaches for variational problems on surfaces (level set methods, parametric surfaces and direct/intrinsic methods). Total variation based image restoration and segmentation are also considered in \cite{Wu2012}, where the authors propose an extension of the augmented Lagrangian method (see e.g. \cite{Wu2010}) for scalar and vectorial total variation problems to images on surfaces. 

In this article, we will perform restoration of images on surfaces by considering an extension of the Mumford-Shah problem \cite{Mumford89}. The image restoration is performed as a postprocessing step after the segmentation. We solve a diffusion equation with Neumann boundary conditions in the already segmented regions. Thus, the image is not smoothed out across the regions boundaries. 

For both segmentation and restoration we present efficient numerical schemes. Further, topology changes of the parametric curves are detected efficiently. In \cite{Benninghoff2014a}, we used and extended a method of \cite{MikulaUrban12} for efficient detection of topology changes, such that also topology changes involving triple junctions can be handled during the curve evolution. We will extend this idea to curves on surfaces to detect several topology changes including splitting and merging of curves, and creation of triple junctions. 

In practical applications, surfaces are often not given as smooth surfaces but in a discrete form, for example as triangulated surfaces. We present all necessary computational aspects when applying the segmentation and restoration models to real data. 

In summing up, we develop a novel scheme for both segmentation and restoration of images defined on surfaces. Using our parametric approach, the evolution of curves is a one-dimensional problem; the postprocessing image diffusion is a two-dimensional problem computed only once after the segmentation has been finished. Compared to other approaches in the literature, where the curve is embedded in a higher-dimensional space, our method is very efficient from a computational point of view. 
\section{Segmentation and Restoration of Images on Surfaces}
\subsection{Preliminaries}
Let $\mathcal{M} \subset \mathbb R^3$ be a smooth two-dimensional manifold. We assume that we can describe $\mathcal{M}$ by the zero-level set of a function $\Phi$:
\begin{equation}
\mathcal{M} = \left\{ \vec z \in \mathbb{R}^3 \,:\, \Phi(\vec z) = 0 \right\}, 
\end{equation}
where $\Phi \in C^2(\mathbb R^3, \mathbb{R})$ is a function with $\|\nabla\Phi(\vec z)\| > 0$ for $\vec z \in \mathcal{M}$. A unit normal vector field $\vec n_\Phi$ on $\mathcal{M}$ is given by $\vec n_\Phi (\vec z) := \nabla \Phi(\vec z)/\|\nabla\Phi(\vec z)\|$ for $\vec z \in \mathcal{M}$. 

Let $\Gamma \subset \mathcal M$ be a curve on $\mathcal{M}$ parameterized by $\vec x: I \rightarrow \mathcal M$, where $I$ is a one-dimensional reference manifold, for example the unit interval $I= [0,1]$ for open curves or $\mathbb{R}/\mathbb{Z}$ for closed curves. We define $\vec \nu_\Phi: I \rightarrow \mathbb{R}^3$ such that $\vec \nu_\Phi(\rho) := \vec n_\Phi(\vec x(\rho))$ is the surface normal evaluated at $\vec x(\rho)$ for $\rho \in I$. Further, we define $\vec \nu_{\mathcal M}: I \rightarrow \mathbb{R}^3$ by $\vec \nu_\mathcal{M}(\rho) := \vec x_s(\rho) \times \vec \nu_\Phi(\rho)$, where $s$ denotes the arc-length. We  note that $\vec \nu_\mathcal{M}$ is perpendicular to $\vec x_s$, i.e. normal to the curve, but lies in the tangent space to $\mathcal{M}$. Figure \ref{fig:surface_geometry} illustrates a possible surface $\mathcal{M}$, a curve $\Gamma$ and the vector fields $\vec\nu_\Phi$, $\vec x_s$ and $\vec \nu_\M$.

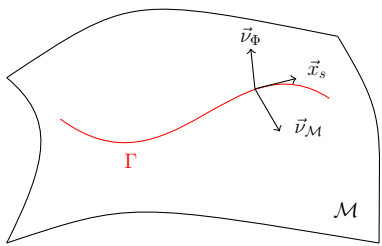
\begin{figure}[t]
\begin{center}
\begin{tikzpicture}[scale=0.55,transform shape]

%\draw[help lines,step=.1] (0,0) grid (10,6);
%\draw[thick,help lines,step=1] (0,0) grid (10,6);

\draw (0,4) .. controls (3,6) and (3,6) .. (8,5);
\draw (0,0) .. controls (3,1) and (3,1) .. (9,0);
\draw (0,4) .. controls (1.2,3) and (1,2) .. (0,0);
\draw (8,5) .. controls (9.2,3) and (9,3) .. (9,0);

\draw[red] (1.3,3) .. controls (4,1) and (5.5,5) .. (7.8,3.5);

\draw[->] (6,3.73)--(5.9,4.7);
\draw[->] (6,3.73)--(7,3.98);
\draw[->] (6,3.73)--(6.6,2.7);

% Text
\draw[red] (3,2) node{\Large $\Gamma$};
\draw (8.2,0.8) node{\Large $\mathcal M$};
\draw (5.9,5) node{\Large $\vec\nu_\Phi$};
\draw (7.5,4.2) node{\Large $\vec x_s$};
\draw (7.3,2.8) node{\Large $\vec \nu_\mathcal{M}$};

\end{tikzpicture}
\end{center}
\caption{Illustration of the vector fields $\vec x_s$, $\vec\nu_\Phi$ and $\vec \nu_\mathcal{M}=\vec x_s \times \vec\nu_\Phi$.}
\label{fig:surface_geometry}
\end{figure}

Further, the vector $\vec x_{ss}$ which is perpendicular to $\vec x_s$ can be written as the sum of its component in $\vec \nu_\Phi$-direction and its component in $\vec \nu_\mathcal{M}$-direction. This motivates the following definition \cite{BGN10}:

\begin{definition}
We define the \emph{geodesic curvature} $\kappa_\mathcal{M}: I \rightarrow \mathbb{R}$ and the \emph{normal curvature} $\kappa_\Phi: I \rightarrow \mathbb{R}$ by
\begin{equation}
\kappa_\M = \vec x_{ss} \,.\, \vec\nu_\mathcal{M}, \quad \kappa_\Phi = \vec x_{ss} \,.\, \vec\nu_\Phi.
\end{equation}
\end{definition}

As a consequence, the vector field $\vec x_{ss}$ can be expressed as 
\begin{equation}
\vec x_{ss} = \kappa_\mathcal{M} \vec\nu_\mathcal{M} + \kappa_\Phi \vec\nu_\Phi.
\label{eq:geodesic_and_normal_curvature}
\end{equation}

\subsection{Active Contours on Surfaces Based on Extensions of the Mumford-Shah and Chan-Vese Models to Images on Surfaces}
Let $u_0:\mathcal{M} \rightarrow [0,1]$ be the intensity function of an image given on a surface $\M$. For segmentation and restoration, we consider an extension of the Mumford-Shah functional \cite{Mumford89}  from the planar case to the case of images and curves on surfaces. For that, we consider the following minimization problem: 

Find a union of curves $\Gamma = \Gamma_1 \cup \ldots \cup \Gamma_{N_C}\subset \M$ and a piecewise smooth  approximation $u:\M \rightarrow \mathbb{R}$ of the original image $u_0$ such that 
\begin{equation}
E^{\mathrm{MS}}(u, \Gamma) = \,\sigma |\Gamma| + \int_{\M\setminus \Gamma} \|\nabla_\M u\|^2 \,\mathrm{d}A + \lambda \int_\M (u_0-u)^2 \,\mathrm{d}A
\label{eq:mumford_shah_geodesic}
\end{equation}
is minimized. Here, $\sigma, \lambda > 0$ are weighting parameters, $|\Gamma|$ denotes the total length of the curves, and  $\nabla_\M u$ is the surface gradient of $u$, also called tangential gradient, cf. \cite{Dziuk13}. Further, $\mathrm{d}A$ denotes the area element and $\|\,.\,\|$ the Euclidean norm. 

We first consider the case of one closed curve $\Gamma$ on $\mathcal{M}$ without any self-intersection which is homotopic to a point. Then, the curve divides $\mathcal{M}$ in two disjoint regions $\Omega_1$ and $\Omega_2$ such that 
\begin{equation*}
\mathcal{M} = \Omega_1 \cup \Gamma \cup \Omega_2.
\end{equation*}
The indices of the regions $\Omega_k$, $k=1,2$, are chosen such that $\vec \nu_\mathcal{M}$ points from $\Omega_2$ to $\Omega_1$. 

For segmentation, we consider a piecewise constant approximation with $u_{|\Omega_k}=c_k \in \mathbb{R}$, $k=1,2$. The Mumford-Shah functional for images on surfaces \eqref{eq:mumford_shah_geodesic} reduces to
\begin{equation}
E(\Gamma, c_1,c_2) = \,\, \sigma |\Gamma| + \lambda \int_{\Omega_1} (u_0-c_1)^2 \,\mathrm{d}A  + \lambda \int_{\Omega_2} (u_0-c_2)^2 \,\mathrm{d}A.
\label{eq:mumford_shah_piecewiese_const_twophase_geodesic}
\end{equation}

Similar to this functional, we can consider the analogue of the planar Chan-Vese model \cite{Chan01} for images on surfaces: 
\begin{equation}
E(\Gamma, c_1, c_2) = \,\, \sigma |\Gamma|  +  \mu \int_{\Omega_1} 1 \,\mathrm{d}A + \lambda_1  \int_{\Omega_1} f_1 \,\mathrm{d}A + \lambda_2 \int_{\Omega_2} f_2 \,\mathrm{d}A,
\label{eq:functional_chanvese_geodesic}
\end{equation}
where $\sigma, \lambda_1, \lambda_2 > 0$, $\mu \geq 0$ are weighting parameters. Similar to the planar Chan-Vese method \cite{Chan01}, the function $f_k$, $k=1,2$, is defined by 
\begin{equation}
f_k(\vec z) = (u_0(\vec z) - c_k)^2, \quad \vec z \in \overline{\Omega_k} \subset\mathcal{M}.
\end{equation}
The model \eqref{eq:functional_chanvese_geodesic} with $\mu=0$ and $\lambda_1=\lambda_2$ is the piecewise constant case \eqref{eq:mumford_shah_piecewiese_const_twophase_geodesic} of the Mumford-Shah model. 

Fixing now the curve $\Gamma$ in \eqref{eq:functional_chanvese_geodesic}, we obtain the following condition for the coefficients $c_k$, $k=1,2$:
\begin{equation}
c_k = \frac{\int_{\Omega_k} u_0 \,\mathrm{d}A}{\int_{\Omega_k} 1 \,\mathrm{d}A}.
\end{equation}
 
Let $\vec x: I = \mathbb{R}/\mathbb{Z} \rightarrow \mathcal M$ be a smooth parameterization of $\Gamma$. 

We now derive a flow for image segmentation as gradient flow using methods of the theory of calculus of variations. Since we consider curves on the surface $\M$, the variations are restricted to lie on $\mathcal M$. Therefore, as in \cite{BGN10}, we consider the variations to be elements of 
\begin{equation}
\underline{V}_\Phi = \left\{\vec\eta : I \rightarrow \mathbb{R}^3 \,:\, \vec\eta \,\text{ is smooth and } \, \vec\eta \,.\, \vec\nu_\phi = 0\right\}
\end{equation}
and define for functions $\vec\eta, \vec\chi : I \rightarrow \mathbb{R}^3$ the following inner product
\begin{equation}
(\vec\eta,\vec\chi)_{2,\mathcal{M},nor} := \int_\Gamma \vec P_\mathcal{M} \vec\eta \,.\, \vec P_\mathcal{M}\vec\chi \,\mathrm{d}s,
\end{equation}
where $\vec P_\mathcal{M}$ is the projection onto the part in direction $\vec\nu_\mathcal{M}$, i.e. $\vec P_\mathcal{M} \vec\eta = (\vec \eta \,.\, \vec\nu_\mathcal{M})\vec\nu_\mathcal{M}$ for $\vec\eta : I \rightarrow \mathbb{R}^3$.

Fixing the coefficients $c_1, c_2$, we consider for $\vec\eta \in \underline{V}_\Phi$ a variation $\vec y: I \times (-\epsilon_0, \epsilon_0) \rightarrow \mathcal M$  with $\vec y(\rho,0)=\vec x(\rho)$  and $\vec y_\epsilon(\rho,0)=\vec \eta(\rho)$.  

Let $\Gamma^\epsilon \subset \mathcal M$ be the image of $\vec y(\,.\,,\epsilon)$, let $\Omega_1^\epsilon, \Omega_2^\epsilon \subset \mathcal M$ be regions such that $\mathcal{M} = \Omega_1^\epsilon \cup \Gamma^\epsilon \cup \Omega_2^\epsilon$. 

Further, let $\vec\nu_\mathcal{M}^\epsilon(\rho) \in T_{\vec y(\rho,\epsilon)}\mathcal M$ be a vector in the tangent space to $\M$ in $\vec y(\rho,\epsilon)$  defined by $\vec\nu_\mathcal{M}^\epsilon(\rho) = \vec y_s(\rho,\epsilon) \times \vec n_\Phi(\vec y(\rho,\epsilon))$. The vector field $\vec\nu_\mathcal{M}^\epsilon(\rho)$ points in the direction $\Omega_1^\epsilon$. The vector $\vec\nu_\mathcal{M}^\epsilon(\rho)$ is normal to $\Gamma^\epsilon$, but lies in the tangent space of $\M$ at the point $\vec y(\rho,\epsilon)$.  

We define 
\begin{equation*}
(\delta E(\Gamma))(\vec\eta) := \left.\frac{\mathrm{d}}{\mathrm{d}\epsilon}\right|_{\epsilon=0} \left( \sigma \int_{\Gamma^\epsilon} 1 \,\mathrm{d}s  +  \mu \int_{\Omega_1^\epsilon} 1 \,\mathrm{d}A +  \lambda_1  \int_{\Omega_1^\epsilon} f_1 \,\mathrm{d}A + \lambda_2 \int_{\Omega_2^\epsilon} f_2 \,\mathrm{d}A\right)
\end{equation*}
on noting that $\vec y(\,.\,,\epsilon)$ and thus $\Gamma^\epsilon$, $\Omega_1^\epsilon$ and $\Omega_2^\epsilon$ depend on $\vec \eta$. We compute
\begin{align}
(\delta  E(\Gamma))(\vec\eta) &= \left.\frac{\mathrm{d}}{\mathrm{d}\epsilon}\right|_{\epsilon=0} \left(\sigma \int_I \|\vec y_\rho\| \, \mathrm{d}\rho + \mu \int_{\Omega_1^\epsilon} 1 \,\mathrm{d}A + \lambda_1 \int_{\Omega_1^\epsilon} f_1 \,\mathrm{d}A + \lambda_2 \int_{\Omega_2^\epsilon} f_2 \,\mathrm{d}A\right) \nonumber \\
&=  \left(\sigma \int_I \frac{\vec y_\rho}{\|\vec y_\rho\|} \,.\, \vec y_{\rho\epsilon} \,\mathrm{d}\rho + \mu \int_I (- \vec y_\epsilon \,.\, \vec\nu_\mathcal{M}^\epsilon) \|\vec y_\rho\| \,\mathrm{d}\rho \,+ \lambda_1 \int_I f_1(\vec y) (- \vec y_\epsilon \,.\, \vec\nu_\mathcal{M}^\epsilon) \|\vec y_\rho\| \,\mathrm{d}\rho \, + \right.\nonumber \\
&\quad \left.\left.+\lambda_2   \int_I f_2(\vec y) ( \vec y_\epsilon \,.\, \vec\nu_\mathcal{M}^\epsilon) \|\vec y_\rho\| \,\mathrm{d}\rho\right) \right|_{\epsilon=0}\nonumber \\
&= \sigma \int_\Gamma \vec x_s \,.\, \vec \eta_s \,\mathrm{d}s +  \int_\Gamma \left(-\mu - \lambda_1 f_1 + \lambda_2 f_2 \right) \vec\nu_\mathcal{M} \,.\,\vec\eta \,\mathrm{d}s \nonumber\\
& = \int_\Gamma \left(-\sigma\vec x_{ss} + \left(-\mu - \lambda_1 f_1 + \lambda_2 f_2 \right) \vec\nu_\mathcal{M}\right) \,.\,\vec\eta \,\mathrm{d}s \nonumber \\
& = \int_\Gamma \left(-\sigma \kappa_\mathcal{M} - \mu - \lambda_1 f_1 + \lambda_2 f_2 \right) \vec\nu_\mathcal{M} \,.\, \vec \eta \, \mathrm{d}s.
\label{eq:dE}
\end{align}
Here, we used a transport theorem for curves on surfaces  \cite{GarckeWieland06}. We applied integration by parts for the second last identity. The last identity follows from \eqref{eq:geodesic_and_normal_curvature} and $\vec\eta \,.\, \vec\nu_\phi=0$. 

A time-dependent function $\vec x: I \times [0,T] \rightarrow \mathcal{M}$ with $\vec x_t(\,.\,,t) \in \underline{V}_\Phi$ is called a solution to the gradient flow equation if 
\begin{equation}
(\vec x_t, \vec\eta)_{2,\mathcal{M},nor} = - \left(\delta E(\Gamma)\right)(\vec\eta) 
\label{eq:definition_gradientflow_surface}
\end{equation}
holds for all $\vec\eta \in \underline{V}_\Phi$. 

Let $\eta \in \underline{V}_\Phi$. We conclude from \eqref{eq:dE} and \eqref{eq:definition_gradientflow_surface} on noting that $\vec\nu_\M \,.\, \vec \eta = \vec\nu_\M \,.\, \vec P_\M \vec \eta$
\begin{equation}
\vec P_\mathcal{M} \vec x_t = -\left(-\sigma \kappa_\mathcal{M} - \mu - \lambda_1 f_1 + \lambda_2 f_2 \right) \vec\nu_\mathcal{M}.
\end{equation}
Let $V_n := \vec x_t \,.\, \vec\nu_\M$ denote the velocity in direction $\vec \nu_\M$, also called \emph{normal velocity}. Then $\vec P_\mathcal{M} \vec x_t = V_n \vec\nu_\mathcal{M}$ and consequently the equation above leads to
\begin{equation}
V_n = \sigma \kappa_\mathcal{M} + F,
\label{eq:Vn_ChanVese_Surfaces}
\end{equation}
where $F$ is given by 
\begin{equation}
F(\vec z) = \mu + \lambda_1 f_1(\vec z) - \lambda_2 f_2(\vec z) = \mu + \lambda_1 (u_0(\vec z)-c_1)^2 - \lambda_2 (u_0(\vec z)-c_2)^2. \label{eq:external_force_term}
\end{equation}

We rewrite equation \eqref{eq:Vn_ChanVese_Surfaces} to a scheme for $\vec x: I  \times [0,T] \rightarrow \mathbb R^3$ and $\kappa_\M, \kappa_\Phi:  I  \times [0,T] \rightarrow \mathbb R$. We assume that $\vec x(\rho,0)$ lies on $\M$. To force the curve to stay on the manifold $\M$, the velocity in direction normal to the surface $\vec x_t \,.\,\vec\nu_\Phi$ must be zero (i.e. $\vec x_t \in \underline{V}_\Phi$). We thus have the following scheme: 

Let $\vec x(I,0) = \Gamma(0) \subset \M$. For $t \in (0,T]$, find  $\vec x(\,.\,,t): I  \rightarrow \mathbb R^3$ and $\kappa_\M(\,.\,,t), \kappa_\Phi(\,.\,,t):  I   \rightarrow \mathbb R$  such that 
\begin{subequations}
\begin{align}
\vec x_t \,.\, \vec \nu_\M \,=& \,\,\sigma \kappa_\M + F, \\
\vec x_t \,.\, \vec \nu_\Phi \,=& \,\,0,  \\
\vec x_{ss} \,=& \,\,\kappa_\M \vec\nu_\M + \kappa_\Phi \vec\nu_\Phi.
\end{align}
\end{subequations}

\subsection{Multiphase Image Segmentation with Possible Triple Junctions}
We extend the above presented two-phase segmentation with a single closed curve to more general situations. We consider a curve network with closed and open curves which partition the image domain in $N_R$ regions. Also triple junctions can occur. 

Therefore, we consider a decomposition of $\M$ in time-depen\-dent regions $\Omega_1(t), \ldots,\Omega_{N_R}(t)$, $t\in [0,T]$, separated by curves $\Gamma_1(t), \ldots, \Gamma_{N_C}(t)$. Each curve is parameterized by a time-dependent function $\vec x_i(\,.\,,t): I_i \rightarrow \mathbb{R}^3$, where $I_i$ is a one-dimensional reference manifold for $i=1, \ldots, N_C$. Similar to the case of one curve, we set $\vec \nu_{\Phi,i} = \vec n_\Phi \circ \vec x_i$ and $\vec \nu_{\M,i} = (\vec x_i)_s \times \vec\nu_{\Phi,i}$. The geodesic curvature $\kappa_{\M,i}$ and the normal curvature $\kappa_{\Phi,i}$ are given by $\kappa_{\M,i} = (\vec x_i)_{ss} \,.\, \vec \nu_{\M,i}$ and $\kappa_{\Phi,i} = (\vec x_i)_{ss} \,.\, \vec \nu_{\Phi,i}$. All quantities are time-dependent. 

We define a piecewise constant image approximation by $u(\,.\,,t) = \sum_{k=1}^{N_R} c_k(t) \chi_{\Omega_k(t)}$, where $\chi_{\Omega_k(t)}$ is the characteristic function on the set $\Omega_{k}(t)$ and the coefficients $c_k(t)$ are computed by 
\begin{equation}
c_k(t) = \frac{\int_{\Omega_k(t)} u_0 \,\mathrm{d}A}{\int_{\Omega_k(t)} 1 \,\mathrm{d}A},
\label{eq:geodesic_coefficients_multiphase}
\end{equation}
i.e. they are set to the mean of $u_0$ in $\Omega_k(t)$. 

Let $\vec x_i(\,.\,,0)$, $i=1, \ldots, N_C$, be parameterizations of given curves $\Gamma_i(0)\subset\M$. We have to solve the following scheme for $t\in(0,T]$: Find $\vec x_i(\,.\,,t):I_i \rightarrow \mathbb{R}^3$, $\kappa_{\M,i}(\,.\,,t), \kappa_{\Phi,i}(\,.\,,t): I_i \rightarrow \mathbb{R}$ such that
\begin{subequations}
\label{eq:scheme_strong_geodesic}
\begin{align}
(\vec x_i)_t \,.\, \vec \nu_{\M,i} \,=& \,\, \sigma \kappa_{\M,i} + F_i, \label{eq:scheme_strong_geodesic_1} \\
(\vec x_i)_t \,.\, \vec \nu_{\Phi,i} \,=& \,\, 0,  \label{eq:scheme_strong_geodesic_2} \\
(\vec x_i)_{ss} \,=& \,\, \kappa_{\M,i} \,\vec\nu_{\M,i} + \kappa_{\Phi,i}\, \vec\nu_{\Phi,i},
\end{align}
\end{subequations}
hold for $i=1, \ldots, N_C$. The external force $F_i$ is defined for $\vec x \in \M$ by
\begin{equation}
F_i(\vec x) = \mu + \lambda_{k^+(i)} (u_0(\vec x) - c_{k^+(i)})^2   - \lambda_{k^-(i)} (u_0(\vec x) - c_{k^-(i)})^2,
\label{eq:external_forcing_term_rac_geodesic0}
\end{equation}
where $k^+(i), k^-(i) \in \{1, \ldots, N_R\}$ are indices of two regions, such that $\vec \nu_{\M, i}$ points from $\Omega_{k^-(i)}$ to $\Omega_{k^+(i)}$. 

In the experiments, presented in Section \ref{sec:results}, we always consider the case $\mu=0$ and $\lambda_k = \lambda$ for all $k=1, \ldots, N_R$, i.e. all segmentations in our demonstrations can be performed with only two weighting parameters $\sigma$ and $\lambda$. The external forcing term is 
\begin{equation}
F_i(\vec x) = \lambda [(u_0(\vec x) - c_{k^+(i)})^2   -  (u_0(\vec x) - c_{k^-(i)})^2].
\label{eq:external_forcing_term_rac_geodesic}
\end{equation}

We also allow open curves, i.e. curves with $\partial \Gamma_i(t) \neq \emptyset$. Since we consider interface curves, i.e. each curve $\Gamma_i(t)$ separates two different regions $\Omega_{k^+(i)}$ and $\Omega_{k^-(i)}$, we exclude free endpoints. This means, we exclude the case that a curve ends in $\M$ without meeting another curve at its endpoint. Further, we consider only smooth, compact surfaces $\M$ without boundary. Thus, endpoints of open curves are therefore part of triple junctions denoted with $\vec\Lambda_k \in \M$, $k=1, \ldots, N_T$. 

For each $k \in \left\{1, \ldots, N_T\right\}$ let the integers $i_{k,1}, i_{k,2}, i_{k,3}$ $\in \{1, \ldots, N_C\}$ denote the indices of curves $\Gamma_{i_{k,l}}$, $l=1,2,3$, $i_{k,1}\neq i_{k,2}\neq i_{k,3} \neq i_{k,1}$ with parameterizations $\vec x_{i_{k,l}} : I_{i_{k,l}} = [0,1] \rightarrow \M$, such that 
\begin{equation*}
\vec{x}_{i_{k,1}}(\rho_{k,1}) = \vec{x}_{i_{k,2}}(\rho_{k,2})= \vec{x}_{i_{k,3}}(\rho_{k,3}) =\vec \Lambda_k,
\end{equation*}
where $\rho_{k,l} \in \{0,1\}$ corresponds to the start or endpoint of the curve $i_{k,l}$, $l=1,2,3$.

At the triple junctions $\vec \Lambda_k$, $k=1, \ldots, N_T$,  an attachment condition and Young's law need to hold: 
\begin{subequations}
\label{eq:triple_junction_cond_geodesic}
\begin{align}
& \text{the triple junction $\vec \Lambda_k$ does not pull apart}, \label{eq:attachment_cond_tj_geodesic}\\
& \sum_{l=1}^3 (-1)^{\rho_{k,l}} \,\vec\tau_{i_{k,l}} = 0, \label{eq:youngs_law_geodesic}
\end{align}
\end{subequations}
where $\vec\tau_{i_{k,l}} := (\vec x_{i_{k,l}})_s$ is a tangent vector field at $\Gamma_{i_{k,l}} \subset \M$, $l=1,2,3$. We refer to \cite{BGN07a} for the planar case of evolution of curves with triple junctions.  

\subsection{Vector-valued Images}
The segmentation method can be easily extended to vector-valued images such as color images. Only the external force $F$ need to be adapted compared to scalar images. The adaptations of $F$ can be done similarly as for planar images \cite{Chan00}, \cite{Benninghoff2014a}. 

Let $\vec u_0 =(u_{0,1},u_{0,2},u_{0,3}): \M \rightarrow \mathbb{R}^3$ represent a color image in the RGB color space. The three components of the vector-valued image function represent the red, green and blue color channel. The external force $F_i$, $i=1,\ldots,N_C$, in \eqref{eq:external_forcing_term_rac_geodesic} has to be modified. We therefore set:
\begin{equation*} 
F_i = \sum_{j=1}^3 \lambda_j \left[ (u_{0,j}-c_{k^+(i),j})^2 - (u_{0,j}-c_{k^-(i),j})^2 \right], 
\end{equation*}
where $\lambda_j$ weights the $j$-th color component, $j=1,2,3$. The coefficients $\vec c_k =(c_{k,1},c_{k,2},c_{k,3})$, $k=1,\ldots, N_R$, are given by setting $\vec c_{k,j}$ to the mean of $u_{0,j}$ in $\Omega_k$. 

Another color space, we will use for segmentation of color images, is the chromaticity-brightness space. Let $\vec v_0 = \vec u_0 / \| \vec u_0\|$ be the chromaticity and $b_0 = \|\vec u_0\|$ the brightness of an image with image function $\vec u_0$. We modify $F_i$ to
\begin{equation*}
F_i = \lambda_C \left[ \|\vec v_0-\vec v_{k^+(i)}\|^2 - \|\vec v_0-\vec v_{k^-(i)}\|^2 \right] +  \lambda_B \left[(b_0-b_{k^+(i)})^2 - (b_0-b_{k^-(i)})^2 \right],
\end{equation*}
where $\lambda_C$ weights the chromaticity term and $\lambda_B$ weights the brightness term. Here, $\vec v_{k}$ is a normalized mean of $\vec v_0$ in the region $\Omega_k$ (see \cite{Benninghoff2014a} for details) and $b_{k}$ is the mean of $b_0$ in $\Omega_k$.

\subsection{Restoration of Images on Surfaces with Edge Enhancement}
\label{subsec:image_restoration}
Searching for a minimizer of the Mumford-Shah functional for images on surfaces \eqref{eq:mumford_shah_geodesic}  involves  both a set of curves $\Gamma$ and  an image approximation $u$. The segmentation technique presented above uses piecewise constant image approximations to divide an image into characteristic regions of similar image intensity or color. For image restoration, more details of the original image $u_0$ should be kept; the piecewise constant approximation would be a too large simplification. Therefore, we aim at approximating $u_0: \M \to \mathbb{R}$ by a piecewise smooth function $u: \M \to \mathbb{R}$. 

We propose to first perform a segmentation of the image with piecewise constant approximations by solving the evolution equations \eqref{eq:scheme_strong_geodesic} with \eqref{eq:triple_junction_cond_geodesic} in case of triple junctions. This is followed by a restoration using the already identified regions. By denoising the image in this way during a postprocessing step, the edges in the image will not be smoothed out if the curves $\Gamma$ match with these edges. 

We thus consider piecewise smooth approximations $u_{|\Omega_k} = u_k$, where $u_k : \Omega_k \rightarrow [0,1]$ is a smooth function defined on the region $\Omega_k \subset \M$, $k=1, \ldots, N_R$. For smoothing the image in the regions $\Omega_k$, we derive surface partial differential equations from the Mumford-Shah functional. Since the curve set $\Gamma$ has already been determined, we can fix $\Gamma$ in the functional \eqref{eq:mumford_shah_geodesic} and consider variations of $u$ of the form $u+\epsilon v$, for $v:\M \rightarrow \mathbb{R}$ and $\epsilon > 0$. We compute
\begin{align*}
\left.\frac{\mathrm{d}}{\mathrm{d}\epsilon}\right|_{\epsilon=0} E^{\mathrm{MS}}(u+\epsilon v, \Gamma) &= \lim_{\epsilon \rightarrow 0} \frac{1}{\epsilon} (E^{\mathrm{MS}}(u+\epsilon v, \Gamma) - E^{\mathrm{MS}}(u, \Gamma))\\
&= \lim_{\epsilon \rightarrow 0} \frac{1}{\epsilon} \left(\int_{\M \setminus \Gamma} \left(2 \epsilon \nabla_\M u \,.\, \nabla_\M v + \epsilon^2 \|\nabla_\M v \|^2 \right) \,\mathrm{d}A  \right. \\
&\quad \left. + \lambda \int_\M  \left(2 \epsilon (u-u_0)\,v  + \epsilon^2 v^2\right)\,\mathrm{d}A \right) \\
&= 2\int_{\M \setminus \Gamma} \nabla_\M u \,.\,\nabla_\M v \,\mathrm{d}A + 2 \lambda \int_\M (u-u_0)\,v\,\mathrm{d}A \\
&= 2 \sum_{k=1}^{N_R} \int_{\Omega_k} \left(\nabla_\M u_k \,.\,\nabla_\M v + \lambda (u_k-u_0)\,v\right)\,\mathrm{d}A.
\end{align*}
For a stationary solution, we search for a function $u$ satisfying $0 = \left.\frac{\mathrm{d}}{\mathrm{d}\epsilon}\right|_{\epsilon=0} E^{\mathrm{MS}}(u+\epsilon v, \Gamma)$. This leads to 
\begin{equation*}
0 = \int_{\Omega_k}\left(\nabla_\M u_k \,.\,\nabla_\M v + \lambda (u_k-u_0)\,v\right)\,\mathrm{d}A 
\end{equation*}
for each $k=1,\ldots,N_R$ and an arbitrary function $v$ which is smooth on $\Omega_k$. Using an integration by parts formula \cite{Dziuk13}, we obtain
\begin{equation}
0 = \int_{\Omega_k} \left(-\Delta_\M u_k + \lambda (u_k-u_0)\right)\,v\,\mathrm{d}A  + \int_{\partial\Omega_k} \nabla_\M u_k \,.\, \vec\mu_k \,v\,\mathrm{d}s 
\end{equation}
where $\Delta_\M$ is the Laplace-Beltrami operator and $\vec\mu_k(\vec p)$ is a unit outer normal vector on $\partial \Omega_k$ in $T_{\vec p}\M$, i.e. it is tangent to the surface for each $\vec p\in \partial\Omega_k \subset M$ but normal to $\partial\Omega_k$ in $\vec p$. Since $\M$ is a smooth, compact surface without boundary, the boundary $\partial \Omega_k$ of the region $\Omega_k$ consists of one or more curves $\Gamma_i$, $i\in\{1,\ldots,N_C\}$. Thus, locally, $\vec\mu_k$ is $\pm\vec\nu_{\M,i}$. Since $v$ is arbitrary chosen, we have to solve the following surface partial differential equation with Neumann boundary condition for $k=1,\ldots, N_R$:
\begin{subequations}
\label{eq:diffusion_eq_geodesic_strong}
\begin{align}
-\frac{1}{\lambda}\Delta_\M u_k + u_k =& u_0, && \text{in} \,\Omega_k,\\
\nabla_\M u_k \,.\, \vec\mu_k =& 0, && \text{on} \,\partial \Omega_k.
\end{align}
\end{subequations}
The smoothing effect is due to the Laplace-Beltrami operator. We can control the smoothing extent using the weighting parameter $\lambda>0$. The smaller $\lambda$, the larger is the denoising. The larger $\lambda$, the closer is the approximation to the original image. The Neumann boundary condition provides that edges in the image are not smoothed out. 

Vector-valued images can be smoothed by considering each component individually like a scalar image.

\section{Numerical Approximation}

\subsection{Finite Element Approximation of the Image Segmentation Scheme}
\label{subsec:geodesic_fe_appr}
% FE scheme
We introduce a finite element approximation for the scheme \eqref{eq:scheme_strong_geodesic} with \eqref{eq:triple_junction_cond_geodesic} in the case of triple junctions. The evolution equations \eqref{eq:scheme_strong_geodesic_1}, for $i=1,\ldots,N_C$, can be interpreted as a weighted geodesic curvature flow with  external forcing terms. Therefore, we make use of the finite element scheme for geodesic curvature flow developed in \cite{BGN10} for closed curves and generalize the approach for possible open curves with triple junctions and for image segmentation problems.

In order to formulate a finite element scheme, we first introduce a spatial and time discretization, introduce discrete function spaces and discrete inner products. 

For $i=1, \ldots, N_C$, let $0=q_0^i < q_1^i < \ldots < q_{N_i}^i = 1$ be a decomposition of the interval $I_i = I = [0,1]$. If $\Gamma_i$ is a closed curve, we make use of the periodicity $N_i=0$, $N_i+1=1$, $-1=N_i-1$, etc.

We introduce the following discrete function spaces
\begin{subequations}
\label{eq:def_function_spaces_2}
\begin{align}
W^h &:= \left\{ (\eta_1, \ldots,\eta_{N_C}) \in \left[C(I,\mathbb{R})\right]^{N_C}  \,:\, \eta_i|_{[q_{j-1}^i, q_j^i]} \,\text{ is  linear, }\,\,\forall i=1,\ldots,N_C, \,j=1, \ldots, N_i\right\},\\
\underline{V}^h 
&:= \left\{ (\vec\eta_1, \ldots,\vec\eta_{N_C}) \in \left[C(I,\mathbb{R}^3)\right]^{N_C} \,:\,\vec\eta_{i_{k,1}}(\rho_{k,1})= \vec\eta_{i_{k,2}}(\rho_{k,2}) = \vec\eta_{i_{k,3}}(\rho_{k,3}), \, \forall k=1,\ldots,N_T, \right. \nonumber\\
& \quad \quad \left. \vec\eta_i|_{[q_{j-1}^i, q_j^i]} \,\text{ is linear, }\,\,\forall i=1,\ldots,N_C, \,j=1, \ldots, N_i\right\}. 
\end{align}
\end{subequations} 

The attachment conditions for triple junctions are incorporated in the definition of the space $\underline{V}^h$.

A basis of $W^h$ is given by functions $\chi_{i,j}:=((\chi_{i,j})_1,$ $\ldots, (\chi_{i,j})_{N_C}) \in W^h$, where $(\chi_{i,j})_k(q_l^k) :=\delta_{ik} \delta_{jl}$ for $i,k= 1, \ldots, N_C$, $j=j_0^i, \ldots, N_i$, $l=j_0^k, \ldots, N_k$, where $j_0^i = 1$ if $\Gamma_i$ is closed and $j_0^i=0$ else. 

Further, let $0=t_0 < t_1 < \ldots < t_M = T$ be a partitioning of the time interval $[0,T]$ into possibly variable time steps $\tau_m := t_{m+1}-t_m$, $m=0, \ldots, M-1$. Let $\vec X^m = (\vec X_1^m, \ldots, \vec X_{N_C}^m) \in \underline{V}^h$ be an approximation of $\vec x(\,.\,,t_m) = (\vec x_1(\,.\,,t_m), \ldots,$ $\vec x_{N_C}(\,.\,,t_m))$. Let $\Gamma^m = (\Gamma_1^m, \ldots, \Gamma_{N_C}^m)$ denote the image of $\vec X^m$. In case of triple junctions let $j_{k,l} \in \{0, N_{i_{k,l}}\}$ denote the index of the corresponding curve endpoint such that $q_{j_{k,l}}^{i_{k,l}}=\rho_{k,l}$, $k=1, \ldots, N_T$, $l=1,2,3$. 

For scalar or vector functions $u=(u_1,\ldots,u_{N_C}),v=(v_1,\ldots,v_{N_C}) \in \left[L^2(I, \mathbb{R}^{(3)})\right]^{N_C}$, the $L^2$-inner product $\langle \cdot\,,\,\cdot\rangle_m$ over the current polygonal curve network $\Gamma^m$ is given by
\begin{equation}
\langle u, v\rangle_m := \int_{\Gamma^m} u\,.\,v \,\mathrm{d}s := \sum_{i=1}^{N_C} \int_{I_i} u_i\,.\,v_i\, \|(\vec X_i^m)_\rho\|\,\mathrm{d}\rho.
\label{eq:def_product_fe1_geodesic}
\end{equation}
We follow the ideas of \cite{BGN07a}, \cite{BGN10}, and define a mass lumped inner product $\langle \cdot\,,\,\cdot\rangle_m^h$ for piecewise continuous functions $u=(u_1, \ldots,$ $u_{N_C})$ and $v=(v_1, \ldots, v_{N_C})$ by 
\begin{align}
\langle u,v\rangle_m^h :=& \frac12 \sum_{i=1}^{N_C} \sum_{j=1}^{N_i} h_{i,j-\frac12}^m \left[ \left(u_i\,.\,v_i\right)\left([q_j^i]^-\right) + \right. \nonumber \\
& \left. +\left(u_i\,.\,v_i\right)\left([q_{j-1}^i]^+\right)\right],
\label{eq:def_product_fe2_geodesic}
\end{align}
where $u_i([q_j^i]^\pm):= \mathrm{lim}_{\epsilon \rightarrow 0, \epsilon > 0} u_i(q_j^i \pm \epsilon)$ and $h_{i,j-\frac12}^m := \| \vec X_i^m(q_j^i)- \vec X_i^m(q_{j-1}^i)\|>0$ is the distance between two neighbor nodes.

Let $h^m := \max_{i=1,\ldots,N_C, j=1,\ldots, N_i} h_{i,j-\frac12}^m$ denote the maximum distance between two neighbor nodes of the polygonal curves. Let $\vec X^m \in \underline{V}^h$ be a given parameterization of the polygonal curve network $\Gamma^m$ satisfying the following assumption:

\vspace{2ex}
$(\mathcal A)$ The distance between two neighbor nodes of $\Gamma^m$ is positive, i.e. $h_{i,j-\frac12}^m > 0$ for $i=1, \ldots, N_C$, $j=1,\ldots, N_i$ and $ \vec X_i^m(q_{j+1}^i) \neq \vec X_i^m(q_{j-1}^i)$ for $i=1, \ldots, N_C$ and $j=1, \ldots, N_i$ if $\partial \Gamma_i^m = \emptyset$ and $j=1, \ldots, N_i-1$ if $\partial \Gamma_i^m \neq \emptyset$. 

\vspace{2ex}
We define $\vec \omega_\Phi^m = (\vec \omega^m_{\Phi,1}, \ldots, \vec\omega^m_{\Phi,N_C})$, by 
\begin{equation*}
\vec\omega^m_{\Phi,i}(q_j^i) = \vec n_{\Phi}(\vec X_i^m(q_j^i)) = \frac{\nabla \Phi(\vec X_i^m(q_j^i))}{\|\nabla \Phi(\vec X_i^m(q_j^i))\|}, 
\end{equation*}
for $i=1, \ldots, N_C$ and $j=j_0^i,\ldots, N_i$, i.e. $\vec\omega^m_{\Phi,i}$ approximates $\vec\nu_{\Phi,i}$ at time $t_m$. Further, the tangent vector field is approximated by 
$\vec \omega^m_d = (\vec\omega^m_{d,1}, \ldots, \vec\omega^m_{d,N_C})$: We set
\begin{equation*}
\vec\omega^m_{d,i}(q_j^i) = \frac{ \vec X_i^m(q_{j+1}^i) - \vec X_i^m(q_{j-1}^i)}{ \| \vec X_i^m(q_{j+1}^i) - \vec X_i^m(q_{j-1}^i)\|}
\end{equation*}
if $\Gamma_i^m$ is closed, or if $\Gamma_i^m$ is an open curve and $j\neq 0, N_i$. For closed curves, we make use of the periodicity $N_i = 0$, $N_i+1 = 1$ and $-1 = N_i-1$.  For the endpoints of an open curve, we define
\begin{equation*}
\vec\omega^m_{d,i}(q_0^i)       = \frac{\vec X_i^m(q_{1}^i) - \vec X_i^m(q_{0}^i)}{ \| \vec X_i^m(q_{1}^i) - \vec X_i^m(q_{0}^i)\|}, \quad 
\vec\omega^m_{d,i}(q_{N_i}^{i}) = \frac{ \vec X_i^m(q_{N_i}^i) - \vec X_i^m(q_{N_i-1}^i)}{ \| \vec X_i^m(q_{N_i}^i) - \vec X_i^m(q_{N_i-1}^i)\|}.
\end{equation*}
Furthermore, we define $\vec \omega^m_\M = (\vec\omega^m_{\M,1}, \ldots, \vec\omega^m_{\M,N_C})$ by 
\begin{equation*}
\vec\omega^m_{\M,i}(q_j^i) = \vec \omega^m_{d,i}(q_j^i) \times \vec\omega^m_{\Phi,i}(q_j^i),
\end{equation*}
Thus, $\vec \omega^m_{\M,i}$ approximates $\vec \nu_{\M,i}$ at time $t_m$. 

The assumption $(\mathcal A)$ is necessary, such that $\vec\omega^m_d$ is well-defined, see also  \cite{BGN10}. 

We define a discrete analogue to the space $\underline{V}_\Phi$ by 
\begin{equation}
\underline{V}^h_\Phi = \left\{ \vec\eta \in \underline{V}^h \,:\, \vec\eta_i \,.\, \vec\omega_{\Phi,i}^m = 0, \quad i=1, \ldots, N_C\right\}.
\label{eq:definition_VhPhi}
\end{equation}

We now propose the following discrete scheme: Let $\vec X^0\in \underline{V}^h$ be a given parameterization of a polygonal curve network $\Gamma^0$. We assume that the initial nodes $\vec X_i^0(q_j^i)$ lie on the surface $\M$. Further, we assume that  assumption $(\mathcal A)$ holds for $\vec X^m$, $m=0, \ldots, M-1$.

For $m=0, \ldots, M-1$, find $\delta \vec X^{m+1} \in \underline{V}_\Phi^h$ and $\kappa_\M^{m+1} \in W^h$ such that 
\begin{subequations}
\label{eq:fem_scheme_geodesic}
\begin{align}
\langle \frac{\delta \vec X^{m+1}}{\tau_m}, \chi \,\vec\omega_\M^m \rangle_m^h -  \sigma\langle \kappa_\M^{m+1}, \chi \rangle_m^h =&\langle F^m, \chi\rangle_m^h,  &&\forall \chi \in W^h, \label{eq:fem_scheme_a_geodesic}\\
\langle \kappa_\M^{m+1}\, \vec\omega_\M^m, \vec\eta\rangle_m^h + \langle \nabla_s \delta \vec X^{m+1}, \nabla_s \vec\eta \rangle_m =& -\langle \nabla_s  \vec X^{m}, \nabla_s \vec\eta \rangle_m, && \forall \vec\eta \in \underline{V}_\Phi^h,\label{eq:fem_scheme_b_geodesic}
\end{align}
\end{subequations}
where $F^m = (F_1^m,\ldots, F_{N_C}^m) \in W^h$, with $F_i^m$, $i=1, \ldots, N_C$, is the piecewise linear function uniquely given by 
\begin{equation}
F_i^m(q_j^i) = \lambda [(u_0(\vec X_i^m(q_j^i)) - c_{k^+(i)}^m)^2 - (u_0(\vec X_i^m(q_j^i)) - c_{k^-(i)}^m)^2 ],
\label{eq:external_forcing_term_rac_geodesic_numerical}
\end{equation}
where $c_{k^\pm(i)}^m$ are approximations of the coefficients $c_{k^\pm(i)}$ at $t_m$, cf. \eqref{eq:geodesic_coefficients_multiphase}. We will later state how the coefficients $c_k^m$, $k=1,\ldots,N_R$, can be computed. 

Having found $\delta \vec X^{m+1}\in \underline{V}_\Phi^h$, we set $\vec X^{m+1}:=\delta \vec X^{m+1} + \vec X^m \in \underline{V}^h$.

% Mild assumption and existence/uniqueness of solution of discrete Scheme
Before we proceed to prove existence and uniqueness of a solution of the scheme \eqref{eq:fem_scheme_geodesic}, we state some very mild assumptions.

\vspace{2ex}
\noindent $(\mathcal A_1) \quad $   Let $i\in \{1, \ldots, N_C\}$. If $\partial \Gamma_i^m = \emptyset$, we assume that 
$\mathrm{dim} \,\mathrm{span}  \{ \vec\omega_{\M,i}^m(q_j^i),  \vec\omega_{\Phi,i}^m(q_j^i)\}_{j=1}^{N_i} = 3.$

\vspace{2ex}
\noindent $(\mathcal A_2)\quad $ For each $k\in \{1, \ldots, N_T\}$, we assume that 
$\mathrm{dim} \,\mathrm{span} \{ \{ \vec\omega_{\M,i_{k,l}}^m(q_j^{i_{k,l}}), \vec\omega_{\Phi,i_{k,l}}^m(q_j^{i_{k,l}})\}_{j=1}^{N_{i_{k,l}}-1} \}_{l=1}^3  = 3.$ 

\vspace{2ex}
\begin{theorem}
Let the assumptions $(\mathcal{A})$, $(\mathcal{A}_1)$ and $(\mathcal{A}_2)$ hold. Then there exists a unique solution $(\delta \vec X^{m+1}, \kappa_\M^{m+1})$ $\in \underline{V}_\Phi^h \times W^h$ to the system \eqref{eq:fem_scheme_geodesic}. 
\end{theorem}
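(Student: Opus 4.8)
The plan is to recognize \eqref{eq:fem_scheme_geodesic} as a finite-dimensional linear system in which the number of unknowns equals the number of test directions: the pair $(\delta\vec X^{m+1},\kappa_\M^{m+1})$ ranges over $\underline{V}_\Phi^h\times W^h$, while \eqref{eq:fem_scheme_a_geodesic} is tested against $W^h$ and \eqref{eq:fem_scheme_b_geodesic} against $\underline{V}_\Phi^h$. Hence the system is square, and by the rank--nullity theorem existence and uniqueness are equivalent to injectivity. So it suffices to show that the associated homogeneous problem (equivalently, the difference $(\delta\vec X,\kappa_\M)$ of two hypothetical solutions, which has $F^m=0$ and a vanishing $\nabla_s\vec X^m$ contribution) admits only the trivial solution. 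I would follow the energy-testing strategy of \cite{BGN07a} and \cite{BGN10}, adapted to the triple-junction setting.

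The second step is the coercivity estimate. In the homogeneous problem I would test \eqref{eq:fem_scheme_a_geodesic} with $\chi=\kappa_\M\in W^h$ and \eqref{eq:fem_scheme_b_geodesic} with $\vec\eta=\delta\vec X\in\underline{V}_\Phi^h$, both admissible choices. Using the symmetry of $\langle\cdot,\cdot\rangle_m^h$ to identify $\langle\delta\vec X,\kappa_\M\vec\omega_\M^m\rangle_m^h=\langle\kappa_\M\vec\omega_\M^m,\delta\vec X\rangle_m^h$ and eliminating the cross term yields
\begin{equation*}
\sigma\,\langle\kappa_\M,\kappa_\M\rangle_m^h+\tfrac{1}{\tau_m}\,\langle\nabla_s\delta\vec X,\nabla_s\delta\vec X\rangle_m=0.
\end{equation*}
Since $\sigma>0$ and $\tau_m>0$, both nonnegative terms vanish. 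Because $(\mathcal A)$ guarantees $h_{i,j-\frac12}^m>0$, the mass-lumped form $\langle\cdot,\cdot\rangle_m^h$ is a genuine inner product on $W^h$, so $\kappa_\M\equiv0$; and $\langle\nabla_s\delta\vec X,\nabla_s\delta\vec X\rangle_m=0$ forces $\nabla_s\delta\vec X=0$, i.e. $\delta\vec X$ is constant along each curve $\Gamma_i^m$. The attachment condition encoded in $\underline{V}^h$ then propagates this across triple junctions, so $\delta\vec X$ is constant on each connected component of the curve network.

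The third step is the geometric rigidity argument, which I expect to be the crux. Feeding $\kappa_\M\equiv0$ back into the homogeneous \eqref{eq:fem_scheme_a_geodesic} gives $\langle\delta\vec X,\chi\,\vec\omega_\M^m\rangle_m^h=0$ for all $\chi\in W^h$; testing with the nodal basis functions $\chi_{i,j}$ (and using the positive lumping weights) yields the nodal orthogonality $\delta\vec X_i(q_j^i)\cdot\vec\omega_{\M,i}^m(q_j^i)=0$ at every node, while membership $\delta\vec X\in\underline{V}_\Phi^h$ gives $\delta\vec X_i(q_j^i)\cdot\vec\omega_{\Phi,i}^m(q_j^i)=0$. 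For a closed curve the constant value $\vec a_i$ is then orthogonal to $\{\vec\omega_{\M,i}^m(q_j^i),\vec\omega_{\Phi,i}^m(q_j^i)\}_{j=1}^{N_i}$, which spans $\mathbb R^3$ by $(\mathcal A_1)$, so $\vec a_i=0$. For an open-curve network, the three curves meeting at a junction $\vec\Lambda_k$ share one common constant (their junction value), which by the interior-node conditions is orthogonal to $\{\{\vec\omega_{\M,i_{k,l}}^m,\vec\omega_{\Phi,i_{k,l}}^m\}_{j=1}^{N_{i_{k,l}}-1}\}_{l=1}^3$; assumption $(\mathcal A_2)$ says these span $\mathbb R^3$, forcing the constant to vanish. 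Since every endpoint of an open curve lies on a triple junction (no free endpoints), this pins $\delta\vec X\equiv0$ on the whole network.

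The main obstacle is precisely this last rigidity step rather than the energy estimate, which is essentially routine. The delicate points are matching the index ranges correctly -- the two spanning hypotheses are tailored to closed curves versus triple-junction networks, and $(\mathcal A_2)$ deliberately uses only interior nodes $j=1,\dots,N_{i_{k,l}}-1$, where $\vec\omega_\M^m$ is unambiguously defined -- and verifying that the connectivity induced by the attachment condition, together with the absence of free endpoints, guarantees every curve is reachable from some junction at which $(\mathcal A_2)$ applies. Once $\delta\vec X\equiv0$ and $\kappa_\M\equiv0$ are established, the homogeneous kernel is trivial, and squareness of the system delivers existence and uniqueness simultaneously.
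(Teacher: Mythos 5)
Your proposal is correct and follows essentially the same route as the paper's proof: reduce to the homogeneous system by linearity, test with $\chi=\kappa_\M$ and $\vec\eta=\delta\vec X$ to kill the curvature and force $\delta\vec X$ to be constant (with attachment conditions propagating the constant across junctions), then derive nodal orthogonality to $\vec\omega_{\M,i}^m$ via the basis functions $\chi_{i,j}$ and to $\vec\omega_{\Phi,i}^m$ via membership in $\underline{V}_\Phi^h$, and invoke $(\mathcal A_1)$, $(\mathcal A_2)$ to conclude the constant vanishes. Your energy identity differs from the paper's only by a factor of $\tau_m$, and your remarks on the interior-node range in $(\mathcal A_2)$ are a faithful reading of the same argument.
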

\begin{proof}
The system \eqref{eq:fem_scheme_geodesic} is linear. Therefore, existence of a solution follows from its uniqueness. To prove uniqueness, consider the following system: Find $\{ \vec X, \kappa_\M \} \in \underline{V}_\Phi^h \times W^h$ such that 
\begin{subequations}
\begin{align}
\langle \vec X, \chi \,\vec\omega_\M^m \rangle_m^h - \sigma \tau_m \langle \kappa_\M, \chi \rangle_m^h &= 0, && \forall \chi \in W^h, \label{eq:uniqueness1_geodesic}\\
\langle \kappa_\M \,\vec\omega_\M^m, \vec\eta \rangle_m^h + \langle \nabla_s \vec X, \nabla_s \vec\eta \rangle_m &=0, && \forall \vec\eta \in \underline{V}_\Phi^h. \label{eq:uniqueness2_geodesic}
\end{align}
\end{subequations}
We obtain choosing $\chi = \kappa_\M \in W^h$ in \eqref{eq:uniqueness1_geodesic} and $\vec\eta = \vec X \in \underline{V}_\Phi^h$ in \eqref{eq:uniqueness2_geodesic} 
\begin{equation*}
\sigma \tau_m \langle \kappa_\M, \kappa_\M \rangle_m^h + \langle \nabla_s \vec X, \nabla_s \vec X\rangle_m = 0.
\end{equation*}
From this equation, we conclude $\kappa_\M \equiv 0$ and $\vec X \equiv \vec X^c$ for a constant $\vec X^c = (\vec X_1^c, \ldots,$ $  \vec X_{N_C}^c) \in (\mathbb{R}^3)^{N_C}$  with $\vec X_{i_{k,1}}^c = \vec X_{i_{k,2}}^c = \vec X_{i_{k,3}}^c$ for all $k\in \{1, \ldots, N_T\}$. Further, $\vec X_i^c \in \mathbb{R}^3$ satisfies
\begin{equation}
\vec X_i^c \,.\,\vec\omega_{\Phi,i}^m(q_j^i) = 0
\label{eq:uniqueness_help}
\end{equation}
for all $i=1,\ldots,N_C$ and $j=j_0^i, \ldots, N_i$, since $\vec X \in \underline{V}_\Phi^h$. Inserting $\kappa\equiv 0$ and $\vec X \equiv \vec X^c$,   \eqref{eq:uniqueness1_geodesic} reduces to 
\begin{equation}
\langle \vec X^c, \chi \,\vec\omega_\M^m   \rangle_m^h  = 0, \quad \forall \chi \in W^h. 
\label{eq:uniqueness1b_geodesic}
\end{equation}
We now choose $\chi = \chi_{i,j} \in W^h$, $i\in \{1, \ldots, N_C\}$, $j\in \{j_0^i, \ldots, N_i\}$ in \eqref{eq:uniqueness1b_geodesic}. This yields
\begin{equation}
\vec X_i^c \,.\, \vec\omega_{\M,i}^m(q_j^i) = 0. 
\label{eq:uniqueness1c_geodesic}
\end{equation}
We conclude $\vec X_i^c \equiv 0$ using \eqref{eq:uniqueness_help}, \eqref{eq:uniqueness1c_geodesic} and the assumptions $(\mathcal A_1)$ and $(\mathcal A_2)$. 

\end{proof}

\subsection{Solution of the Discrete System}
\label{subsec:solution_discrete_system_geodesic}
Let $N = \sum_{i=1}^{N_C} N_i^*$, with $N_i^* = N_i$ for closed curves and $N_i^* = N_i + 1$ for open curves. We make use of a small abuse of notation and consider functions in $W^h$ as elements in $\mathbb{R}^N$ and functions in $\underline{V}^h$ as elements in  
\begin{equation*}
\mathbb{X} = \left\{ (\vec z_1, \ldots, \vec z_{N_C}) \in (\mathbb R^3)^{N} \,:\, [\vec z_{i_{k,1}}]_{j_{k,1}} = [\vec z_{i_{k,2}}]_{j_{k,2}} = [\vec z_{i_{k,3}}]_{j_{k,3}}, \,k=1, \ldots N_T\right\},
\end{equation*}
where $\vec z_i \in (\mathbb R^3)^{N_i^*}$ and $[\vec z_i]_j \in \mathbb{R}^3$ is the $j$-th component of the vector $\vec z_i$. Functions in $\underline{V}_\Phi^h$ are considered as elements in 
\begin{equation*}
\mathbb{X}_\Phi =  \left\{ (\vec z_1, \ldots, \vec z_{N_C}) \in \mathbb{X} \,:\, [\vec z_i]_j \,.\, \vec\omega_{\Phi,i}^m(q_j^i) = 0, \,i=1,\ldots,N_C, \, j=j_0^i, \ldots, N_i\right\},
\end{equation*}
with $j_0^i=0$ for open curves and $j_0^i=1$ for closed curves. Let $\vec P_\Phi: (\mathbb{R}^3)^N \rightarrow \mathbb{X}_\Phi$ denotes the orthogonal projection onto the space $\mathbb{X}_\Phi$. 

In order to state a matrix formulation for the discrete system \eqref{eq:fem_scheme_geodesic}, we introduce the following matrices
\begin{equation*}
M:= \left(
\begin{array}{ccc}
M^1 & \cdots & 0 \\
\vdots &  \ddots & \vdots \\
0 & \ldots & M^{N_C} 
\end{array}
\right),\quad 
\vec N_\M:= \left(
\begin{array}{ccc}
\vec N_\M^1  & \cdots & 0 \\
\vdots  & \ddots & \vdots \\
0 &  \ldots & \vec N_\M^{N_C} 
\end{array}
\right),\quad 
\vec A:= \left(
\begin{array}{ccc}
\vec A^1  & \cdots & 0 \\
\vdots &  \ddots & \vdots \\
0 &  \ldots & \vec A^{N_C} 
\end{array}
\right),
\end{equation*}
where $M^i \in \mathbb{R}^{N_i^* \times N_i^*}$, $\vec N_\M^i \in (\mathbb{R}^3)^{N_i^* \times N_i^*}$ and $\vec A^i \in (\mathbb{R}^{3\times 3})^{N_i^* \times N_i^*}$,  $i=1, \ldots, N_C$, are defined by 
\begin{equation*}
M_{jl}^i := \langle \chi_{i,j}, \chi_{i,l} \rangle_m^h, \quad 
(\vec N_\M^i)_{jl} :=\langle \chi_{i,j}, \chi_{i,l} \,\vec\omega_\M^m \rangle_m^h, \quad 
\vec A_{jl}^i := \langle \nabla_s \chi_{i,j}, \nabla_s \chi_{i,l} \rangle_m \,\vec{\mathrm{Id}}_{3},
\end{equation*}
where $\vec{\mathrm{Id}}_{3}$ denotes the $3 \times 3$ identity matrix. 
We define $b^m = (b_1^m, \ldots, b_{N_C}^m) \in \mathbb{R}^N$ by 
\begin{equation}
b_i^m = (b_{i,j_0^i}^m, \ldots, b_{i,N_i}^m), \quad \text{with }\,\,b_{i,j}^m := \langle F_i^m, \chi_{i,j} \rangle_m^h, \,
 i=1, \ldots, N_C, \,j=j_0^i, \ldots, N_i.
\label{eq:righthandside_geodesic}
\end{equation}

The discrete system \eqref{eq:fem_scheme_geodesic} can be rewritten into the following matrix-vector formulation: Find $\kappa_\M^{m+1}\in \mathbb{R}^N$ and  $\delta \vec X^{m+1}  \in \mathbb{X}_\Phi$, such that
\begin{equation}
\left(
\begin{array}{cc}
-\sigma \tau_m M & \,\,\vec N_\M^T \vec P_\Phi \\
\vec P_\Phi \vec N_\M & \,\,\vec P_\Phi \vec A \vec P_\Phi 
\end{array}
\right) \left(
\begin{array}{c}
\kappa_\M^{m+1} \\
\delta \vec X^{m+1} 
\end{array}
\right) = \left( 
\begin{array}{c}
\tau_m b^m \\
-\vec P_\Phi \vec A \vec X^m
\end{array} \right),
\label{eq:linear_system_geodesic}
\end{equation}
holds, on assuming that $\vec X^0 \in \mathbb{X}$. 

Since $M$ is non-singular, we can apply a Schur complement approach and obtain
\begin{subequations}
\begin{align}
\kappa_\M^{m+1} &= \frac{1}{\sigma \tau_m} M^{-1} \left( \vec N_\M^T \vec P_\Phi \,\delta \vec X^{m+1} - \tau_m b^m\right), \\
\left( \vec P_\Phi \vec A \vec P_\Phi + \frac{1}{\sigma \tau_m} \vec P_\Phi \vec N_\M M^{-1} \vec N_\M^T \vec P_\Phi \right) 
\delta \vec X^{m+1} &= \frac{1}{\sigma } \vec P_\Phi \vec N_\M M^{-1} b^m - \vec P_\Phi \vec A \vec X^m.\label{eq:schur_geodesic}
\end{align}
\end{subequations}
Since $\vec P_\Phi$ is a projection to a subspace of $(\mathbb R^3)^N$, the system matrix of the linear equation \eqref{eq:schur_geodesic} is singular as a mapping of $(\mathbb R^3)^N \rightarrow (\mathbb R^3)^N$. However, considered as a mapping of $\mathbb{X}_\Phi \rightarrow \mathbb{X}_\Phi$ it is non-singular if the assumptions $(\mathcal A_1)$ and $(\mathcal A_2)$ hold. 

Since the system matrix is sparse, \eqref{eq:schur_geodesic} can be efficiently solved with linear effort using an iterative solver (with possible preconditioning) or using a direct solver for sparse matrices. In the examples presented later in Section~\ref{sec:results}, we use the UMFPACK algorithm \cite{Davis04} (direct solver) as MATLAB built-in routine for sparse linear systems. 

In the following, we will use the abbreviation $\vec X_{i,j}^m := \vec X_i^m(q_j^i)$. 

\subsection{Topology Changes}
\label{subsec:num_top_changes}
Our scheme is based on a parametrization of evolving curves on surfaces. Topology changes concerning the curves are not automatically handled which is often considered as the main drawback of parametric methods. 

There are different topology changes that can occur: A curve can split in two sub-curves (splitting), two curves separating the same regions can merge to one single curve (merging), two curves separating different regions can touch and a new curve and two new triple junctions occur (creation of triple junctions) and a curve can shrink and has to be deleted. 

The latter can be simply detected by considering the length of the curve. If the length of a curve is smaller than a predefined tolerance, it will be deleted. The other topology changes will occur if two points from different curves or two points from one curve which are not neighbors have a small distance. A simple comparison of all nodes would lead to an effort of $\mathcal{O}(N^2)$, where $N$ is the total number of nodes of the polygonal curves $\Gamma_1^m, \ldots, \Gamma_{N_C}^m$. Since we can solve the linear equation of our main algorithm \eqref{eq:schur_geodesic} efficiently with linear effort, a sub-algorithm to detect topology changes should not result in a too large computational effort. 

For curves in the plane, we extended an efficient method to detect topology changes \cite{Benninghoff2014a} which was originally developed by Mikula and Urb\'{a}n \cite{MikulaUrban12}, see also \cite{Balazovjech12}. The method to detect topology changes is based on an artificial background grid covering the image domain and consisting of a finite set of arrays (squares). If two nodes from different curves or two nodes from different parts of one curve lie in the same array of the virtual background grid, a topology change likely occurs close to the two points. Using this method, the effort to detect topology changes is $\mathcal{O}(N)$.

In principle, the idea of an artificial background grid can be extended to detect topology changes involving curves on surfaces. One can construct a 3D background grid around the surface $\M$. Again, we can check whether two points of different curves or different parts of one curve belong to the same array of the background grid. 

However, using the Euclidean distance in $\mathbb{R}^3$ to detect topology changes can lead to false detections: Surfaces exist where two points can have a small Euclidean distance but their \textit{geodesic distance}, i.e. the length of the smallest curve on the surface connecting the two points, is large. In such situations, a topology change does not occur. The geodesic distance would be a better indicator for detection of topology changes compared to the Euclidean distance. However, the computation of the geodesic distance between two points on a surface is very expensive from a computational point of view and cannot be used in a sub-algorithm in practice. 

Let $a>0$ be the grid size of the cubes of the 3D background grid. For extending the method to detect topology changes from the planar case \cite{Benninghoff2014a} to the case of curves on surfaces, we have to choose the grid size $a$ small enough to exclude such wrong detections as described above. 

For $\vec p\in \M$ let $T_{\vec p} \M$ denote the tangent space and $N_{\vec p}\M = (T_{\vec p}\M)^\perp$ the normal space. Let $N\mathcal{M} = \left\{(\vec p,\vec n)\,:\, \vec p \in \mathcal{M}, \,\vec n \in N_{\vec p}\mathcal{M}\right\}$ denote the normal bundle. 

For the smooth, embedded hypersurface, we consider the map 
\begin{equation*}
E: N\mathcal{M} \rightarrow \mathbb{R}^3, \quad (\vec p,\vec n)\mapsto \vec p + \vec n.
\end{equation*}

\begin{theorem}[Tubular neighborhood theorem]
Every embedded hypersurface $\mathcal{M}$ of $\mathbb{R}^3$ has a tubular neighborhood, i.e. a neighborhood $U\subset \mathbb{R}^3$ that is the diffeomorphic image under $E: N\mathcal{M} \rightarrow \mathbb{R}^3$ of an open subset $V \subset N\mathcal{M}$ of the form
\begin{equation}
V = \left\{(\vec p,\vec n) \in N\mathcal{M}\,:\, \|\vec n\| < \delta(\vec p)\right\},
\label{eq:tubular_neighborhood}
\end{equation}
for some positive continuous function $\delta:\mathcal{M}\rightarrow \mathbb{R}$. 
\end{theorem}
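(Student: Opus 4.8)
The plan is to apply the inverse function theorem along the zero section and then promote the resulting local diffeomorphism to a global one on a suitably thin tube. First I would note that since $\M$ is a two-dimensional hypersurface, its normal bundle $N\M$ is a smooth three-dimensional manifold, so $E$ maps between manifolds of equal dimension. Because $\M$ is the level set $\{\Phi = 0\}$ with $\nabla\Phi \neq 0$, it carries the global unit normal $\vec n_\Phi = \nabla\Phi / \|\nabla\Phi\|$, which trivializes the normal bundle; I may therefore write points of $N\M$ as $(\vec p, t\,\vec n_\Phi(\vec p))$ and $E(\vec p, t\,\vec n_\Phi(\vec p)) = \vec p + t\,\vec n_\Phi(\vec p)$.

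The first key step is to compute the differential $\mathrm{d}E$ at a point $(\vec p, \vec 0)$ of the zero section $\M_0 = \{(\vec p, \vec 0) : \vec p \in \M\}$. Variations in the base directions map to the tangent vectors of $\M$, spanning $T_{\vec p}\M$, while the variation in the fibre direction maps to $\vec n_\Phi(\vec p)$; that is, $(\vec w, s) \mapsto \vec w + s\,\vec n_\Phi(\vec p)$. Since $T_{\vec p}\M \oplus \mathbb{R}\,\vec n_\Phi(\vec p) = \mathbb{R}^3$, the map $\mathrm{d}E_{(\vec p, \vec 0)}$ is a linear isomorphism. By the inverse function theorem, $E$ is thus a local diffeomorphism on some open neighbourhood $\mathcal O \subset N\M$ of the zero section, and $E|_{\M_0}$ is the identity onto $\M$.

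The substantial work is to shrink $\mathcal O$ to an open set $V$ of the stated form on which $E$ is also injective, hence a global diffeomorphism onto $U := E(V)$. I would argue injectivity by contradiction over compact pieces: fix a compact $K \subset \M$ and suppose that for every $k \in \mathbb{N}$ the map $E$ fails to be injective on $\{(\vec p, \vec n) : \vec p \in K,\ \|\vec n\| < 1/k\}$. This yields distinct pairs $(\vec p_k, \vec n_k)$, $(\vec q_k, \vec m_k)$ with $\vec p_k + \vec n_k = \vec q_k + \vec m_k$ and $\|\vec n_k\|, \|\vec m_k\| \to 0$. By compactness I may pass to convergent subsequences $\vec p_k \to \vec p_*$ and $\vec q_k \to \vec q_*$; since the normal parts vanish in the limit, $\vec p_* = \vec q_*$, so both pairs converge to $(\vec p_*, \vec 0) \in \M_0$, contradicting the local injectivity of $E$ near $(\vec p_*, \vec 0)$ established above. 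Hence $E$ is injective on a tube of some uniform thickness $c_K > 0$ over each compact $K$.

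Finally, for a possibly non-compact $\M$ I would assemble these local thicknesses into a single positive continuous $\delta$: taking an exhaustion $K_1 \subset K_2 \subset \cdots$ of $\M$ by compact sets together with a subordinate partition of unity, I combine the constants $c_{K_j}$ into a continuous positive $\delta$ with $V := \{(\vec p, \vec n) : \|\vec n\| < \delta(\vec p)\} \subset \mathcal O$ on which $E$ stays injective. The restriction $E|_V$ is then an injective local diffeomorphism, hence a diffeomorphism onto the open neighbourhood $U = E(V)$. I expect this last step to be the main obstacle: the sequence argument only delivers injectivity over each compact piece, and guaranteeing that the glued $\delta$ enforces \emph{global} injectivity — ruling out collisions between far-apart portions of $\M$ — forces the tube to be chosen thin enough everywhere simultaneously, which is the technical core of the non-compact case.
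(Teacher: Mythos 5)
The paper gives no proof of this theorem: its ``proof'' is a citation to Lee's and Lang's textbooks, and your outline is essentially the argument in the cited Chapter 6 of Lee. Your first two steps are correct: $\mathrm{d}E_{(\vec p,\vec 0)}(\vec w,s)=\vec w+s\,\vec n_\Phi(\vec p)$ is an isomorphism because $T_{\vec p}\M\oplus\mathbb{R}\,\vec n_\Phi(\vec p)=\mathbb{R}^3$, and the compactness/contradiction argument for injectivity over a compact $K\subset\M$ is sound. Note also that for the way the theorem is actually used in the paper, $\M$ is assumed compact (the very next paragraph sets $\delta_0=\min\{\delta(\vec p):\vec p\in\M\}>0$), so taking $K=\M$ in your argument already yields a constant admissible $\delta$ and finishes the proof in that case, since an injective local diffeomorphism between manifolds of equal dimension is a diffeomorphism onto its open image.

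The genuine gap is the non-compact case, i.e.\ the theorem as actually stated, and it is exactly the point you flag but do not resolve. Knowing constants $c_{K_j}$ such that $E$ is injective on each tube $\{(\vec p,\vec n):\vec p\in K_j,\ \|\vec n\|<c_{K_j}\}$ and blending them with a partition of unity proves nothing: a collision $E(\vec p,\vec n)=E(\vec q,\vec m)$ with $\vec p\in K_i$, $\vec q\in K_j\setminus K_{j-1}$, $i<j$, and $\|\vec n\|$ of size $c_{K_i}\gg c_{K_j}$ lies in no single tube on which injectivity is known, so the glued $\delta$ carries no injectivity statement at all. The standard repair --- the one in the reference the paper cites --- is to encode injectivity in a radius \emph{function} rather than in per-piece constants: set $r(\vec p)=\sup\{\varepsilon\in(0,1]\,:\,E \text{ restricted to } W_\varepsilon(\vec p):=\{(\vec q,\vec m)\in N\M:\|\vec q-\vec p\|<\varepsilon,\ \|\vec m\|<\varepsilon\}\text{ is a diffeomorphism onto its image}\}$. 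This is positive by your inverse-function-theorem step, and it is Lipschitz, $|r(\vec p)-r(\vec q)|\le\|\vec p-\vec q\|$, because $W_{\varepsilon-\|\vec p-\vec q\|}(\vec q)\subset W_\varepsilon(\vec p)$; hence $r$ is continuous. Take $\delta=r/2$. If now $E(\vec p,\vec n)=E(\vec q,\vec m)$ with $\|\vec n\|<\delta(\vec p)$, $\|\vec m\|<\delta(\vec q)$ and w.l.o.g.\ $r(\vec q)\le r(\vec p)$, then $\|\vec p-\vec q\|=\|\vec m-\vec n\|\le\|\vec n\|+\|\vec m\|<r(\vec p)$ and $\|\vec m\|<r(\vec q)/2\le r(\vec p)$, so both pairs lie in $W_{r(\vec p)}(\vec p)$, on which $E$ is injective; hence $(\vec p,\vec n)=(\vec q,\vec m)$. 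With this $\delta$, the map $E|_V$ is an injective local diffeomorphism and therefore a diffeomorphism onto the open neighborhood $U=E(V)$, which is the claimed statement.
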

\begin{proof}
See \cite{Lee02}, Chapter 6, Embedding and Approximation Theorems, or \cite{Lang02}, Chapter 4, Vector Fields and Differential Equations. 
\end{proof}

For images on surfaces, we assume the surface $\M$ to be a compact, embedded hypersurface. As a consequence, set $\delta_0 = \mathrm{min}\left\{\delta(\vec p)\,:\, \vec p \in \M\right\}>0$. For each $\vec p \in \M$ the intersection $B_{\delta_0}(\vec p)\cap\mathcal{M}$ is simply connected, which is a consequence of the fact that $E|_V: V \rightarrow U$ is a  diffeomorphism.

Thus, the key idea when extending the algorithm from the planar case to curves on surfaces is to choose the grid size $a$ of the auxiliary 3D background grid small enough with respect to $\delta_0$, such that points from two different parts of the surface (with nearly opposite normal vector $\vec \nu_\Phi$) cannot lie in one array of the grid.
 
Topology changes are now detected as follows:
\begin{itemize}
\item Construct an underlying 3D grid with grid size $a$ with $a\sqrt{3} < \delta_0$. Note that the intersection of a grid element (=cube of grid length $a$) with $\M$ is simply connected. 
\item Mark the grid elements with the indices of the curves and the mesh points: We successively consider the mesh points $\vec X_{i,j}^m$, $i=1,\ldots,N_C$, $j=j_0^i,\ldots, N_i$. If the corresponding grid array, in which $\vec X_{i,j}^m$ lies, is empty, the grid is marked with $(i,j)$. 
\item If a grid array is already marked with $(i_1,j_1)$ and if $\vec X_{i,j}^m$ and $\vec X_{i_1,j_1}^m$ are no neighbor nodes, a topology change is detected. 
\item Since $\vec X_{i,j}^m$ and $\vec X_{i_1,j_1}^m$ may not be the pair with the smallest distance, we consider a few neighbor nodes around $\vec X_{i,j}^m$ and $\vec X_{i_1,j_1}^m$. Let $\vec X_{i,l}^m$ and $\vec X_{i_1,l_1}^m$ be the pair with the smallest Euclidean distance in these two small groups of nodes. They can be found by a pairwise comparison, which is not computationally expensive since only a few nodes are involved. 
\end{itemize}

The topology changes splitting, merging and creation of triple junctions (see explanations above) are distinguished as follows: 
\begin{itemize}
\item If $i=i_1$, a splitting of the curve $\Gamma_i^m$ is detected.  
\item If $i\neq i_1$, we consider the regions separated by $\Gamma_i^m$ and $\Gamma_{i_1}^m$: If $k^+(i)=k^+(i_1)$  $\land\, k^-(i)=k^-(i_1)$ ,  or alternatively $k^+(i)=k^-(i_1)$ $\land\, k^-(i)=k^+(i_1)$ holds, a merging occurs. 
\item Otherwise, a creation of a new contour and a creation of two new triple junctions occur. 
\end{itemize}

After having detected and identified the topology change, the curves need to be adapted near $\vec X_{i,l}^m$ and $\vec X_{i_1,l_1}^m$. This involves changing the neighbor relations, changing curve indices in case of merging or splitting, and  creation of a small new contour with a few nodes in case of triple junctions. Details are given in \cite{Benninghoff2014a}.

In case of triple junctions, a new curve is created. When creating new nodes, one has to ensure that these nodes lie on the surface $\M$. In the next section, we describe how nodes can be efficiently projected to the surface. 

\subsection{Additional Computational Aspects}
\label{subsec:num_add_comp}
\paragraph{Triangulated surfaces}
In practical applications, a smooth function $\Phi : \mathbb{R}^3 \rightarrow \mathbb{R}$, such that $\M$ is the zero level set of $\Phi$, is usually not provided. Moreover, a surface $\M$ is typically given as a triangulated surface instead of a smooth surface. 

Therefore, we assume that $\M$ is a union of triangles of a triangulation $\mathcal{T}^h$, i.e. $\mathcal{M} = \bigcup_{\sigma^h \in \mathcal{T}^h} \overline{\sigma^h}$. Note, that the function $\Phi$ was only needed to compute $\vec n_\Phi$. Normal vectors to the surface can now be easily computed for each triangle $\sigma^h$. For a point $\vec p$ on a curve $\Gamma^m \subset \M$, we first need to assign $\vec p$ to a triangle $\sigma^h \in \mathcal{T}^h$ in which the node lies, to compute $\vec n_\Phi(\vec p)$. Further, the color data  $u_0$ is often piecewise constant and uniquely given by its value on the triangles. To evaluate $u_0(\vec p)$, we also need to assign the node to its corresponding triangle. 

For each simplex, we can project a vector in $\mathbb{R}^3$ to the simplex plane and can use barycentric coordinates to determine if the projected node lies inside the triangle. Surfaces are often composed of $10^5$ to $10^6$ triangles. Therefore, for a given point, finding the corresponding triangle in which the point lies results in a high computational effort if no additional knowledge is used. 

For $m=0$ and a curve $\Gamma_i^m$, $i\in\{1,\ldots,N_C\}$, with nodes $\vec X_{i,j}^m$, $j=j_0^i, \ldots, N_i$, we perform a global search only for $\vec X_{i,j_0^i}^m$. For $j>j_0^i$, we consider first the simplex to which $\vec X_{i,j-1}^m$ has been assigned. If the node $\vec X_{i,j}^m$ is not located in the same simplex, we start a search considering successively the neighbor simplices. 
For $m>0$, we can assume that a node has moved only slightly on the surface from step $m-1$ to $m$. Therefore, we start the search using the triangle to which the node was assigned in time step $m-1$. Consequently, a global search has to be performed only $N_C$ times at the beginning of the segmentation. 

After the linear system \eqref{eq:schur_geodesic} has been solved, some of the nodes may not lie exactly on the surface. For smooth surfaces (like spheres, tori, etc.), the nodes stay very close to the surface if small time steps are used, see \cite{BGN10}. However, for triangulated surfaces, a reprojection onto the surface is necessary since $\vec \nu_\Phi$ is not continuous. A reprojection onto the surface does not result in an additional computational effort: In the next time step, we need to compute $\vec \nu_\Phi = \vec n_\Phi \circ \vec x$ and need to evaluate $u_0$ again for each node. For both, we have to determine again the closest triangle for a point. As described above, this is done by projection of the original node to the triangle plane and by using barycentric coordinates. I.e. we already need to determine a projection of the original point to the corresponding triangle.

\paragraph{Computation of regions and coefficients}
For the external forcing term, we need to determine the regions $\Omega_k^m$, approximations of  $\Omega_k(t_m)$, $k=1,\ldots,N_R$. The regions $\Omega_k^m$ are separated and thus determined by the union of  discrete curves $\Gamma^m = \Gamma_1^m \cup \ldots \cup \Gamma_{N_C}^m$. Further, we need to compute the coefficients $c_k^m$ which are the average color values of the image function $u_0$ in the corresponding regions.
% or average values of components with respect to the CB or HSV color space. 

For $m=0$, we need to assign each simplex $\sigma^h \in \mathcal{T}^h$ to a region $\Omega_k^0$. For $m>0$, we need to update the assignment only in a neighborhood of the curves. 

Let $\vec p_{\sigma^h,j}$, $j=1,2,3$, denote the vertices of a triangle $\sigma_h$. We assign the simplex to a region $\Omega_k^m$ if its center $\vec p_{\sigma^h} = (\vec p_{\sigma^h,1}+\vec p_{\sigma^h,2}+\vec p_{\sigma^h,3})/3$ belongs to the region. In the rare case, that $\vec p_{\sigma^h}$ lies directly on a curve $\Gamma_i^m$, it is assigned either to $\Omega_{k^+(i)}^m$ or $\Omega_{k^-(i)}^m$. For image segmentation, we do not apply any special treatment to simplices which are truncated by a curve.  

For a simplex $\sigma^h$ close to a curve with center $\vec p_{\sigma^h}$, we can search for the closest node $\vec X_{i,j}^m$ and consider the sign of $(\vec p_{\sigma^h} - \vec X_{i,j}^m ) \,.\,\vec\omega_\M^m(q_j^i)$. 

%Close to the curve, we use the Euclidean distance in $\mathbb{R}^3$ since a computation of the geodesic distance would cause a too high computational effort. 
For $m=0$, we also need to consider simplices which are not close to a curve. The direction $\vec\omega_\M^m(q_j^i)$ cannot be used for remote simplices if the surface $\M$ is curved. Having assigned a small band of simplices around the curves, the remaining simplices can inherit the region index by using the neighbor relation between the simplices of the triangulation. 

Motivated by these thoughts, we propose the following algorithm for computation of the regions: 
\begin{itemize}
\item For all nodes $\vec X_{i,j}^m$, $i=1,\ldots,N_C$, $j=j_0^i,\ldots,N_i$, we consider the triangle $\sigma^h$ to which $\vec X_{i,j}^m$ belongs (see determination of the closest triangle described above). If $(\vec p_{\sigma^h} - \vec X_{i,j}^m  ) \,.\,\vec\omega_\M^m(q_j^i)$ is positive, the simplex is assigned to $\Omega_{k^+(i)}^m$, otherwise to $\Omega_{k^-(i)}^m$. The indices of neighbor simplices of $\sigma^h$ are stored in a list. 
\item We consider the simplices of the auxiliary list, which have not been assigned to a region yet. For a simplex $\sigma^h$ of the list, we search for the closest node point $\vec X_{i,j}^m$ and determine a region index using the sign of $( \vec p_{\sigma^h} - \vec X_{i,j}^m) \,.\,\vec\omega_\M^m(q_j^i)$. We store the neighbor simplices of $\sigma^h$ in a new list. 
\item Having assigned all simplices of the current list to a region, the current list is deleted and the simplices of the new list are considered. We repeat the procedure $n_0$ times. Afterwards, a small band of simplices around each curve is assigned to regions. 
\item For $m=0$, the remaining simplices are considered successively by using again lists of neighbor simplices. After the step $n_0$, we do not determine the closest node $\vec X_{i,j}^m$. A new simplex inherits the region index directly from its neighbor. 
\end{itemize}

Figure \ref{fig:bunny_regions} illustrates the assignment of regions for simplices of a triangulated surface. It shows the Stanford Bunny\footnote{\url{https://graphics.stanford.edu/data/3Dscanrep/}} from the Stanford Computer Graphics Laboratory, cf. \cite{Turk94}, and an image with three small discs on its surface. We used $n_0=4$ levels of neighbor simplices to assign the simplices of a small band around the initial curve to one of the two regions separated by the initial curve. The remaining simplices (marked with dark color in the second subfigure) are finally assigned to a region by heritage of the region index.

The final coefficients are computed as follows: Let $C_k^m = \sum_{\sigma^h \in \Omega_k^m} u_0|_{\sigma^h}$ denote the sum of the color data and $n_k^m$ the number of simplices belonging to $\Omega_k^m$. The coefficient for the Chan-Vese model is computed by setting $c_k^m = C_k^m / n_k^m$. For color spaces like the CB space, we need to make use of normalized means for some components of the color \cite{Benninghoff2014a}. 

For $m>0$, we need to update the coefficients only close the curve, i.e. we consider only the simplices of a small band around the curves (using again $n_0$ levels of neighbor simplices around the curves). If a simplex $\sigma^h$ changes its region assignment from $\Omega_l^m$ to $\Omega_k^m$, we set
\begin{equation}
n_k^m = n_k^m + 1, \quad n_l^m = n_l^m -1, \quad
C_k^m = C_k^m + u_0|_{\sigma^h}, \quad C_l^m = C_l^m - u_0|_{\sigma^h}.
\end{equation}

\begin{figure}[t]
	\centering
		\includegraphics[width=0.25\textwidth]{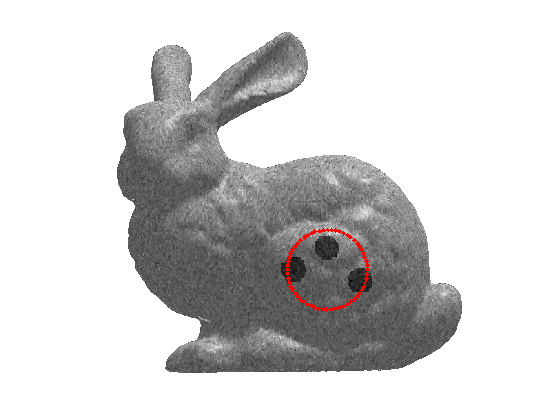}
		\includegraphics[width=0.25\textwidth]{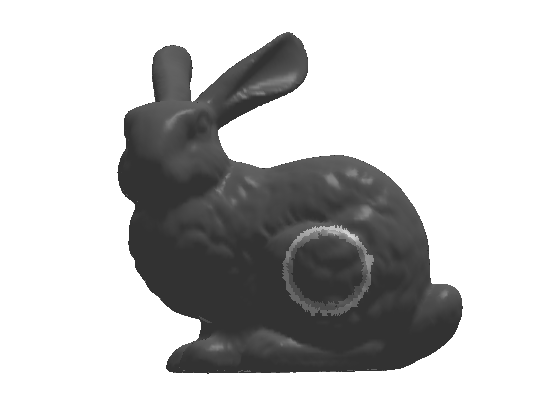} 
		\includegraphics[width=0.25\textwidth]{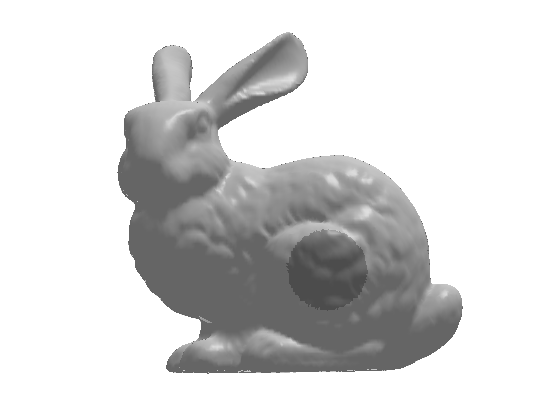}
	\caption{Illustration how triangles are assigned to a region. 1st sub-figure: image and initial curve. 2nd sub-figure: small band after $n_0=4$ steps. 3rd sub-figure: regions colored with mean brightness value after assignment of all triangles. The surface data is from the Stanford Computer Graphics Laboratory, cf. \cite{Turk94}.}
	\label{fig:bunny_regions}
\end{figure}

\paragraph{Global refinement-coarsening strategy}
We can perform a glo\-bal refinement and coarsening of the curves by using two thresholds $l_\mathrm{max}$ and $l_{\mathrm{min}}$ for the average distance between neighboring nodes. Let $N_i^m$ be the number of nodes belonging to a curve $\Gamma_i^m$. If $|\Gamma_i^m|\,/N_i^m > l_{\mathrm{max}}$, we perform a global refinement of the curve by inserting a new node between two neighbor nodes. On the contrary, if $|\Gamma_i^m|\,/N_i^m < l_{\mathrm{min}}$, we perform a global coarsening, i.e. we delete every second node of the polygonal curve which is not a boundary point.

When inserting a new node between two nodes $\vec X_{i,j}^m$ and $\vec X_{i,j+1}^m$ during a refinement, we first compute $\vec p = (\vec X_{i,j}^m + \vec X_{i,j+1}^m)/2$ and determine the closest triangle (again the search for the closest triangle is done efficiently by starting with the triangle in which e.g. $\vec X_{i,j}^m$ lies). Having found the closest triangle $\sigma^h$, $\vec p$ is projected orthogonally to $\sigma^h$. Consequently, all newly generated nodes lie on the surface.

\subsection{Numerical Solution of the Image Restoration Scheme}
\label{subsec:finitedifference_imagedenoising_geodesic}
The scheme \eqref{eq:diffusion_eq_geodesic_strong} for image restoration can be solved numerically with a finite element approach. Again, we consider a polyhedral surface $\M$ given by a set $\mathcal{T}^h$ of triangles. 

The image restoration is performed as postprocessing step using the final regions from the time step $m=M$. The surface thus consists of polyhedral regions $\Omega_k^h := \Omega_k^{M}$, $k=1,\ldots,N_R$. Let $\mathcal{T}_k^h = \{\sigma^h \in \mathcal{T}^h \,:\, \sigma^h \subset \Omega_k^h\}$ denote the set of triangles belonging to $\Omega_k^h$ and let $\vec p_{k,j}$, $j=1, \ldots, N_k^h$, denote the vertices of the triangles belonging to $\mathcal{T}_k^h$.

For each $k=1,\ldots,N_R$, we define the following finite element space 
\begin{equation}
S_k^h := \left\{ u^h \in C(\overline{\Omega_k^h}, \mathbb{R}) \,:\, {u^h}_{|\sigma^h} \,\text{ is linear } \,\,\forall \sigma^h \in \mathcal{T}_k^h\right\}.
\end{equation}
For piecewise continuous functions $u^h, v^h: \Omega_k^h \rightarrow \mathbb{R}^{(3)}$ with possible jumps at  edges of simplices $\sigma^h \in \mathcal{T}_k^h$, we define the mass lumped inner product
\begin{equation}
\langle u^h, v^h \rangle^h := \frac{1}{3} \sum_{\sigma^h \in \mathcal{T}_k^h} |\sigma^h| \sum_{j=1}^3 (u^h\,.\, v^h)((\vec p_{\sigma^h, j})^-),
\end{equation}
where $|\sigma^h|$ denotes the area of $\sigma^h$, and as above $\vec p_{\sigma^h,j}$, $j=1,2,3$, denote the vertices of the triangle $\sigma^h \in \mathcal{T}_k^h$ and 
\begin{equation*}
u^h((\vec p_{\sigma^h, j})^-) := \lim_{\vec p \rightarrow \vec p_{\sigma^h,j},\,\vec p \in \sigma^h} u^h(\vec p).
\end{equation*}
Further, for functions $u^h, v^h \in L^2(\Omega_k^h, \mathbb{R}^{(3)})$, we define
\begin{equation}
\langle u^h, v^h \rangle := \int_{\Omega_k^h} u^h\,.\, v^h \,\mathrm{d}A.
\end{equation}

We consider the following discrete system for each region $k \in \{1,\ldots, N_R\}$: Find $u^h \in S_k^h$ such that 
\begin{equation}
 \frac{1}{\lambda} \langle \nabla_\M u^h, \nabla_\M v^h \rangle + \langle u^h, v^h \rangle^h = \langle u_0, v^h \rangle^h, \quad \forall v^h \in S_k^h,
\label{eq:image_diffusion_geodesic_discrete}
\end{equation}
where $\lambda > 0$ is a weighting parameter (cf. \eqref{eq:mumford_shah_geodesic}).

Let $\{\phi_{k,i}^h\}_{i=1}^{N_k^h}$ with $\phi_{k,i}^h(\vec p_{k,j}) = \delta_{ij}$ denote the standard basis of $S_k^h$. Using this standard basis, we can identify each element in $S_k^h$ with its coefficient vector in $\mathbb{R}^{N_k^h}$. 
Further, we define the matrices $M_k^h, A_k^h \in \mathbb{R}^{N_k^h \times N_k^h}$ by 
\begin{equation*}
(M_k^h)_{ij} := \langle \phi_{k,i}^h, \phi_{k,j}^h \rangle^h, \quad
(A_k^h)_{ij} := \langle \nabla_\M \phi_{k,i}^h, \nabla_\M \phi_{k,j}^h \rangle, \quad i,j=1,\ldots,N_k^h.
\end{equation*}
The entries of the matrices $M_k^h$ and $A_k^h$ are computed by considering each triangle $\sigma^h \in \mathcal{T}_k^h$ and computing the contribution of $\sigma^h$ to the entries corresponding to the indices of its vertices. For computing the contribution of $\sigma^h$ to $A_k^h$ we need to compute surface gradients.

For that, we consider three nodes $\vec p_{k,j_1},\vec p_{k,j_2}$ and $\vec p_{k,j_3}$, $j_1,j_2,j_3 \in \{1,\ldots,N_k^h\}$,  being the vertices of a triangle $\sigma^h$ in $\mathcal{T}_k^h$. For the ease of notation, we assume $j_1=1$, $j_2=2$ and $j_3=3$. One tangential vector is given by
\begin{equation*}
\vec\tau_{1} := \frac{\vec p_{k,3} - \vec p_{k,2}}{\|\vec p_{k,3} - \vec p_{k,2}\|}.
\end{equation*}
A second tangential vector which is orthogonal to $\vec\tau_1$ can be obtained by
\begin{equation*}
\vec\tau_{2} := \frac{\vec p_{k,1} - \vec q_k}{\|\vec p_{k,1} - \vec q_k\|},
\end{equation*}
with $\vec q_k = \vec p_{k,2} + \left((\vec p_{k,1} - \vec p_{k,2})\,.\, \vec\tau_1\right)\vec\tau_1$. 

We note that $\partial_{\vec\tau_1} \phi_{k,1}^h = 0$. For $\partial_{\vec\tau_2} \phi_{k,1}^h$ we consider a curve $\gamma: [0, \|\vec p_{k,1} - \vec q_k\|] \rightarrow \mathbb{R}^3$, $\gamma(\epsilon)= \vec q_k + \epsilon\, \vec \tau_2$. The composition $\phi_{k,1}^h \circ \gamma$ is given by
\begin{equation*}
(\phi_{k,1}^h \circ \gamma)(\epsilon) = \frac{\epsilon}{\|\vec p_{k,1} - \vec q_k\|}.
\end{equation*}
The derivative of $\phi_{k,1}^h$ in direction $\vec\tau_2$ is
\begin{equation*} 
\partial_{\vec\tau_2} \phi_{k,1}^h = \frac{\mathrm{d}}{\mathrm{d}\epsilon} \phi_{k,1}(\gamma(\epsilon))|_{\epsilon=0} =  \frac{1}{\|\vec p_{k,1} - \vec q_k\|}.
\end{equation*}
The surface gradient is then given by
\begin{equation*}
{\nabla_\M \phi_{k,1}^h}|_{\sigma^h} = \partial_{\vec\tau_1} \phi_{k,1}^h \,\vec\tau_1 + \partial_{\vec\tau_2} \phi_{k,1}^h \,\vec\tau_2 
=  \frac{\vec p_{k,1} - \vec q_k}{\|\vec p_{k,1} - \vec q_k\|^2}.
\end{equation*}
Similarly, we compute ${\nabla_\M \phi_{k,2}^h}|_{\sigma^h}$ and ${\nabla_\M \phi_{k,3}^h}|_{\sigma^h}$.

The discrete equation \eqref{eq:image_diffusion_geodesic_discrete} can be rewritten to the following linear system: Find $u^h \in \mathbb{R}^{N_k^h}$ such that 
\begin{equation}
\frac{1}{\lambda} A_k^h u^h + M_k^h u^h = M_k^h U_0,
\label{eq:geodesic_restoration_linear_system_matrix_form}
\end{equation}
holds, where $U_0 \in \mathbb{R}^{N_k^h}$ is given by
\begin{equation}
(U_0)_j = \frac{\sum_{\sigma^h \in \mathcal{T}_{k,j}^h} \,|\sigma^h| \,(u_0)_{|\sigma^h}}{\sum_{\sigma^h \in \mathcal{T}_{k,j}^h}|\sigma^h|}, \quad j=1, \ldots, N_k^h,
\end{equation}
with $\mathcal{T}_{k,j}^h = \left\{\sigma^h \in \mathcal{T}_k^h\,:\, \vec p_{k,j} \in \overline{\sigma^h} \right\}.$ Using this definition of $U_0$, we obtain $\langle u_0, \phi_{k,j}^h \rangle^h = (M_k^h U_0)_j$ for $j=1, \ldots, N_k^h$. 

Note, that the Neumann boundary conditions are automatically incorporated in the scheme \eqref{eq:geodesic_restoration_linear_system_matrix_form}. The finite element approach is based on a weak formulation of \eqref{eq:diffusion_eq_geodesic_strong} which contains the Neumann boundary conditions as natural conditions.

We obtain an element-wise constant image approximation $U^h$ by setting 
\begin{equation}
{U^h}_{|\sigma^h} = \frac{1}{3} \sum_{j=1}^3 u^h(\vec p_{\sigma^h, j}).
\end{equation}
for simplices $\sigma^h \in \mathcal{T}_k^h$. 

By solving the diffusion equation on each region independently, the boundaries of the regions are not smoothed out. Vector-valued images are denoised by applying the method on each component. 

\subsection{Summary of the Image Processing Algorithm}
We propose the following algorithm for image segmentation with postprocessing image restoration: Given a set of polygonal curves $\Gamma^0 = (\Gamma_1^0, \ldots, \Gamma_{N_C}^0)$ and $\vec X^0 = (\vec X_1^0, \ldots, \vec X_{N_C}^0)$ with $\vec X_i^0(I_i)=\Gamma_i^0$, $\vec X_i^0(q_j^i) \in \M$, $i=1,\ldots,N_C$, $j=j_0^i, \ldots, N_i$, perform the following steps for $m=0, 1, \ldots, M-1$: 
\begin{enumerate}
\item \label{step1g} Compute the regions $\Omega_k^m \subset \M$ and the coefficients $c_k^m$, $k=1,\ldots,N_R$, as described in Section \ref{subsec:num_add_comp}. 
\item \label{step2g} Compute $b^m$ as defined in \eqref{eq:righthandside_geodesic} by using the coefficients $c_k^m$ of step \ref{step1g}. Compute $\vec X^{m+1} = \vec X^m + \delta \vec X^{m+1}$ by solving the linear equation \eqref{eq:schur_geodesic}, see Section \ref{subsec:solution_discrete_system_geodesic}.
\item Check if topology changes occur, see Section \ref{subsec:num_top_changes}. In case of a topology change, except for a pure deletion of a curve, repeat the steps \ref{step1g} and \ref{step2g} $n_\mathrm{sub}$-times with a step size of $\tau_m / n_\mathrm{sub}$ and execute the topology change when it occurs in a substep. 
\item If necessary, perform global coarsening or refinement as described in Section \ref{subsec:num_add_comp}
\end{enumerate}

Having found a final segmentation of the image at time $m=M$, perform a restoration by computing a piecewise smooth approximation of the image function as presented in Section \ref{subsec:finitedifference_imagedenoising_geodesic}.
\section{Results and Discussion}
\label{sec:results}

\subsection{Artificial Test Images}

\begin{figure}[t]
\centering
\includegraphics[viewport = 140 280 430 550, width = 0.23\textwidth]{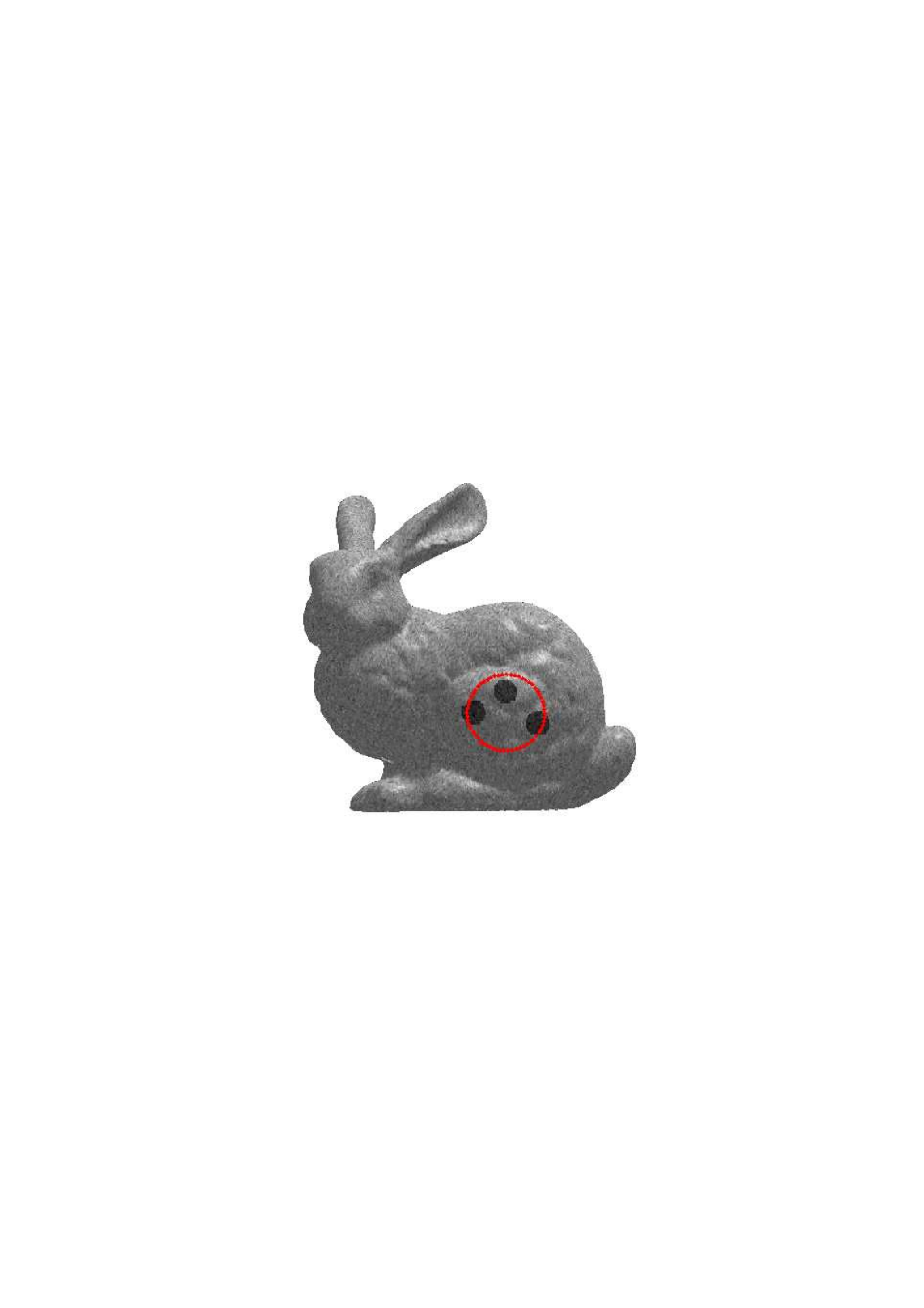}
\includegraphics[viewport = 140 280 430 550, width = 0.23\textwidth]{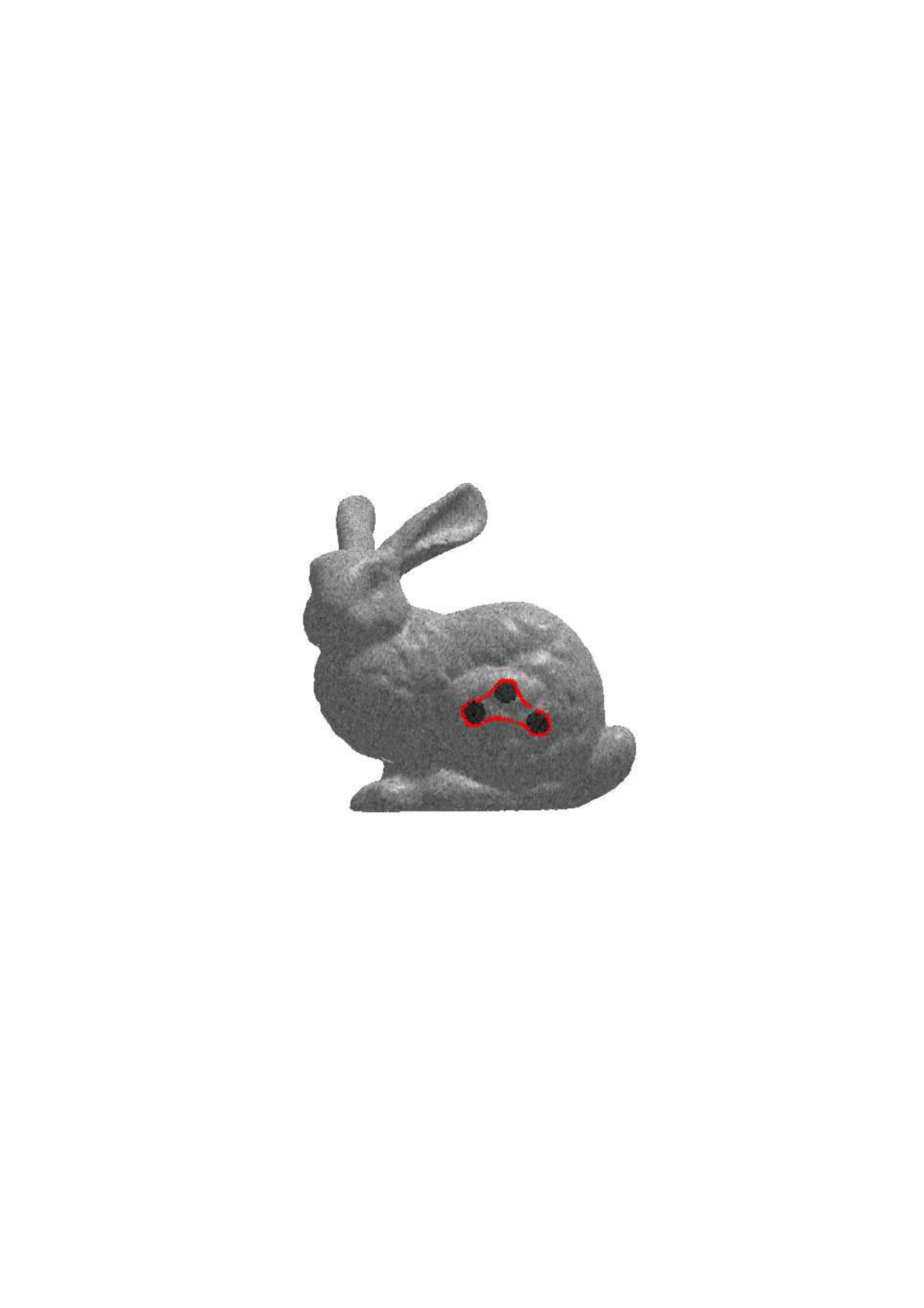}
\includegraphics[viewport = 140 280 430 550, width = 0.23\textwidth]{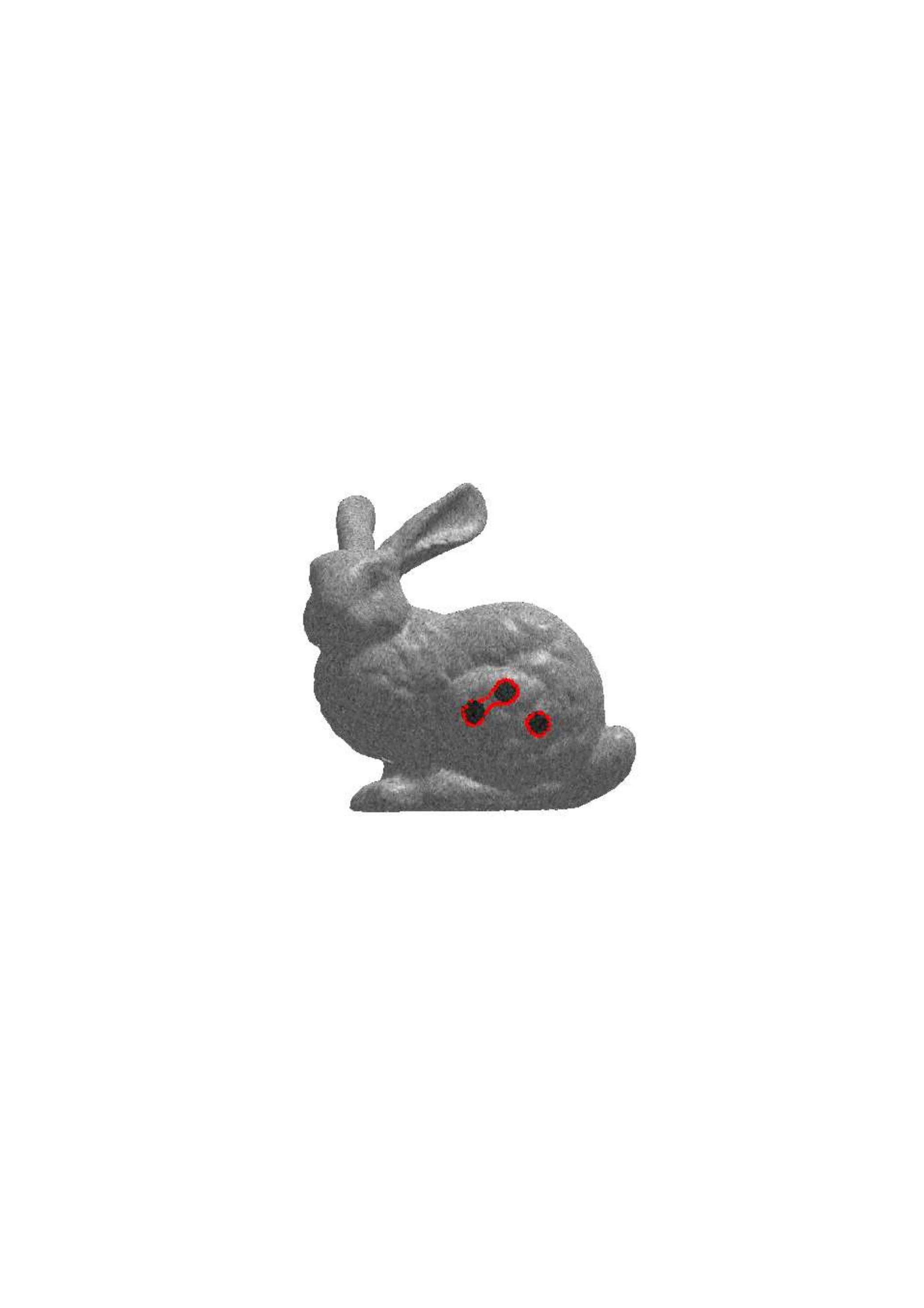}
\includegraphics[viewport = 140 280 430 550, width = 0.23\textwidth]{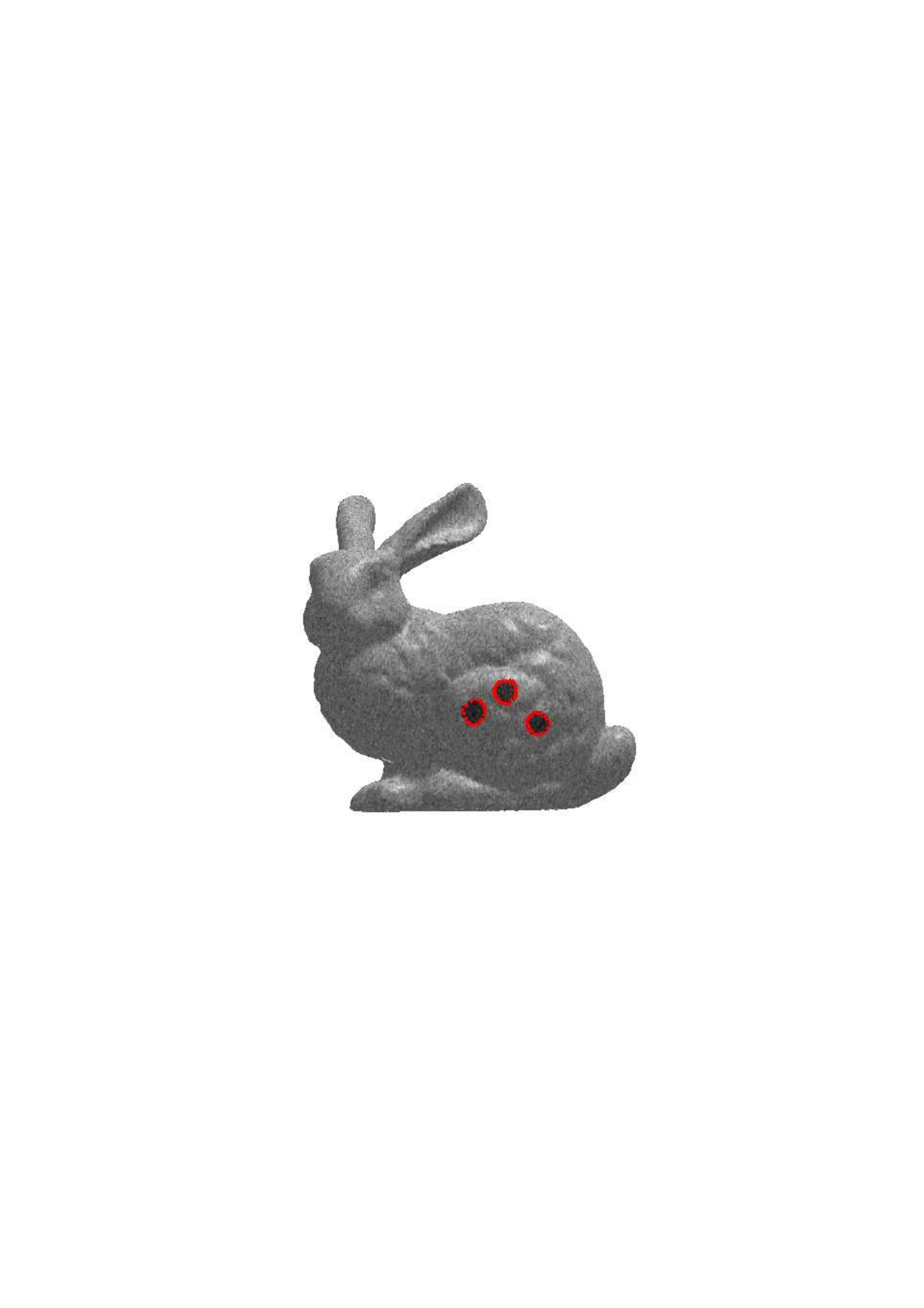}\\
\includegraphics[viewport = 140 280 430 550, width = 0.23\textwidth]{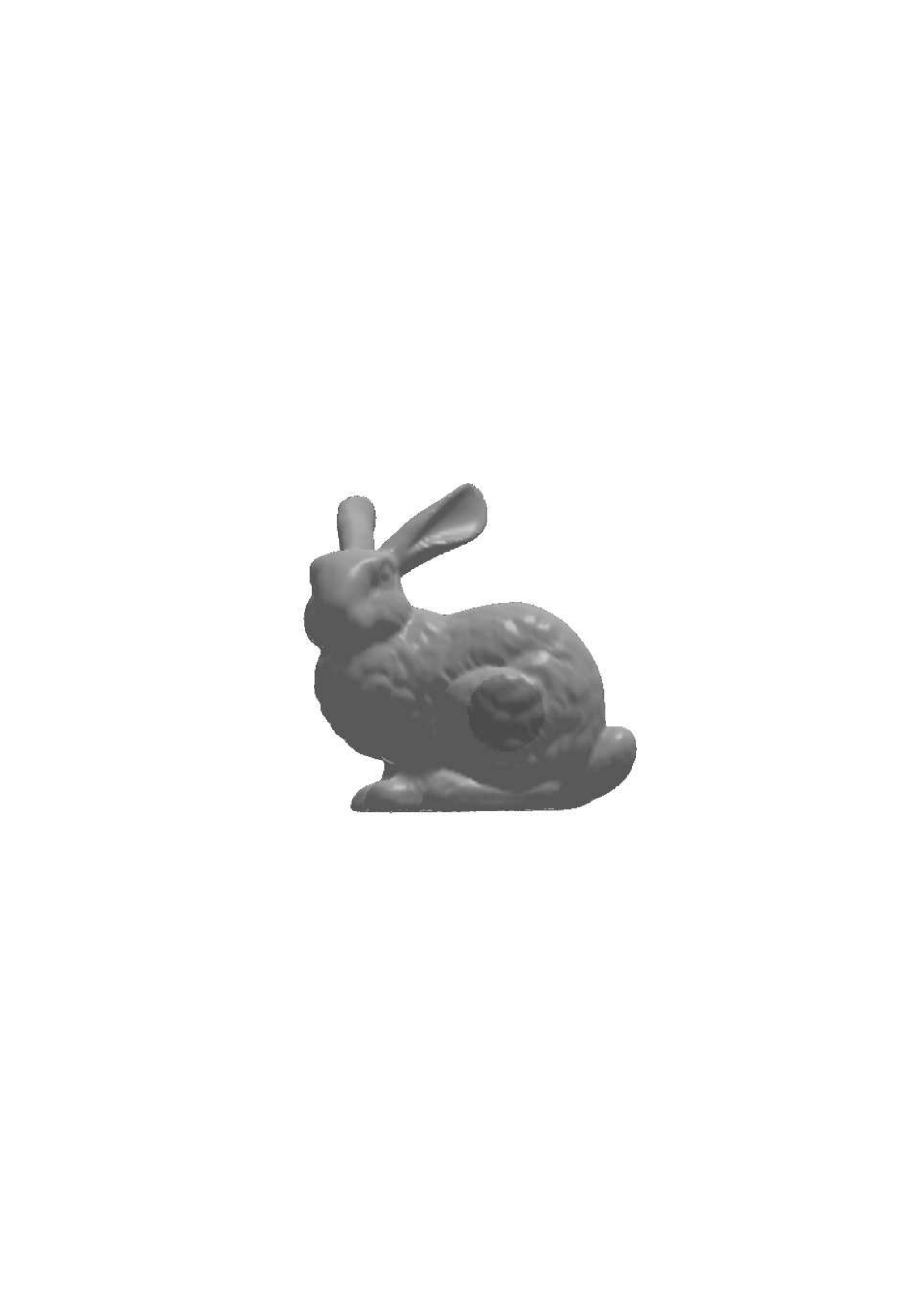}
\includegraphics[viewport = 140 280 430 550, width = 0.23\textwidth]{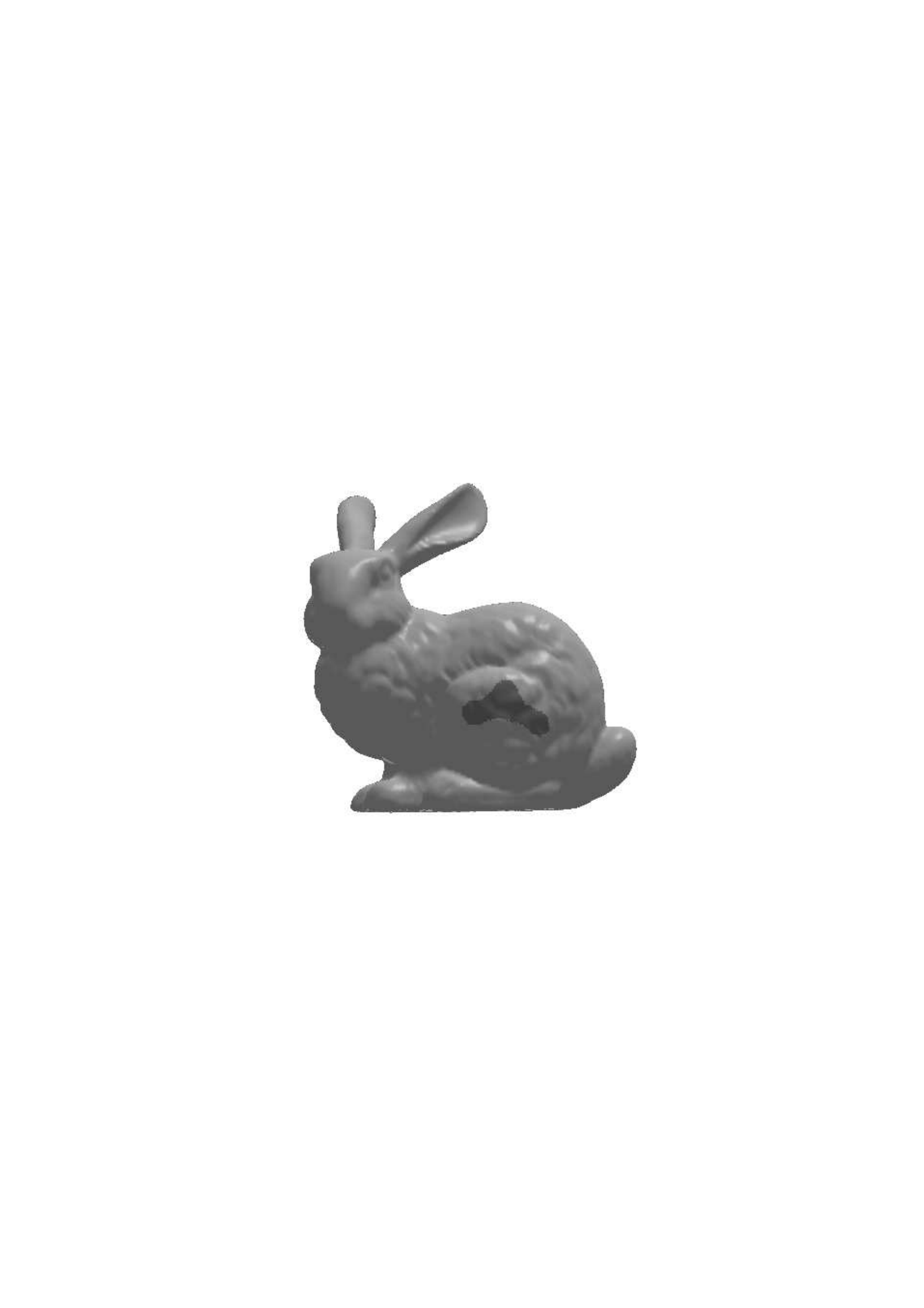}
\includegraphics[viewport = 140 280 430 550, width = 0.23\textwidth]{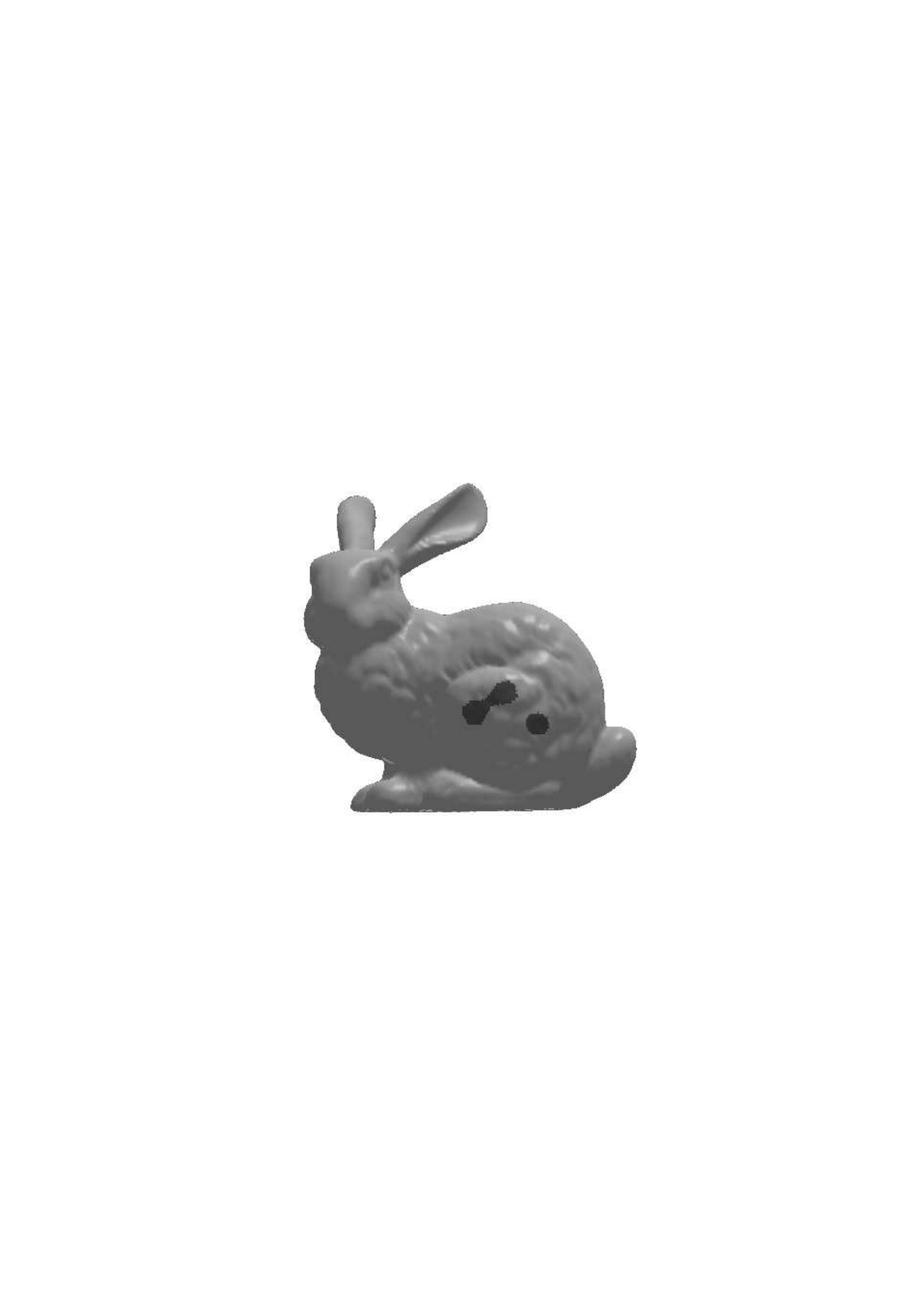}
\includegraphics[viewport = 140 280 430 550, width = 0.23\textwidth]{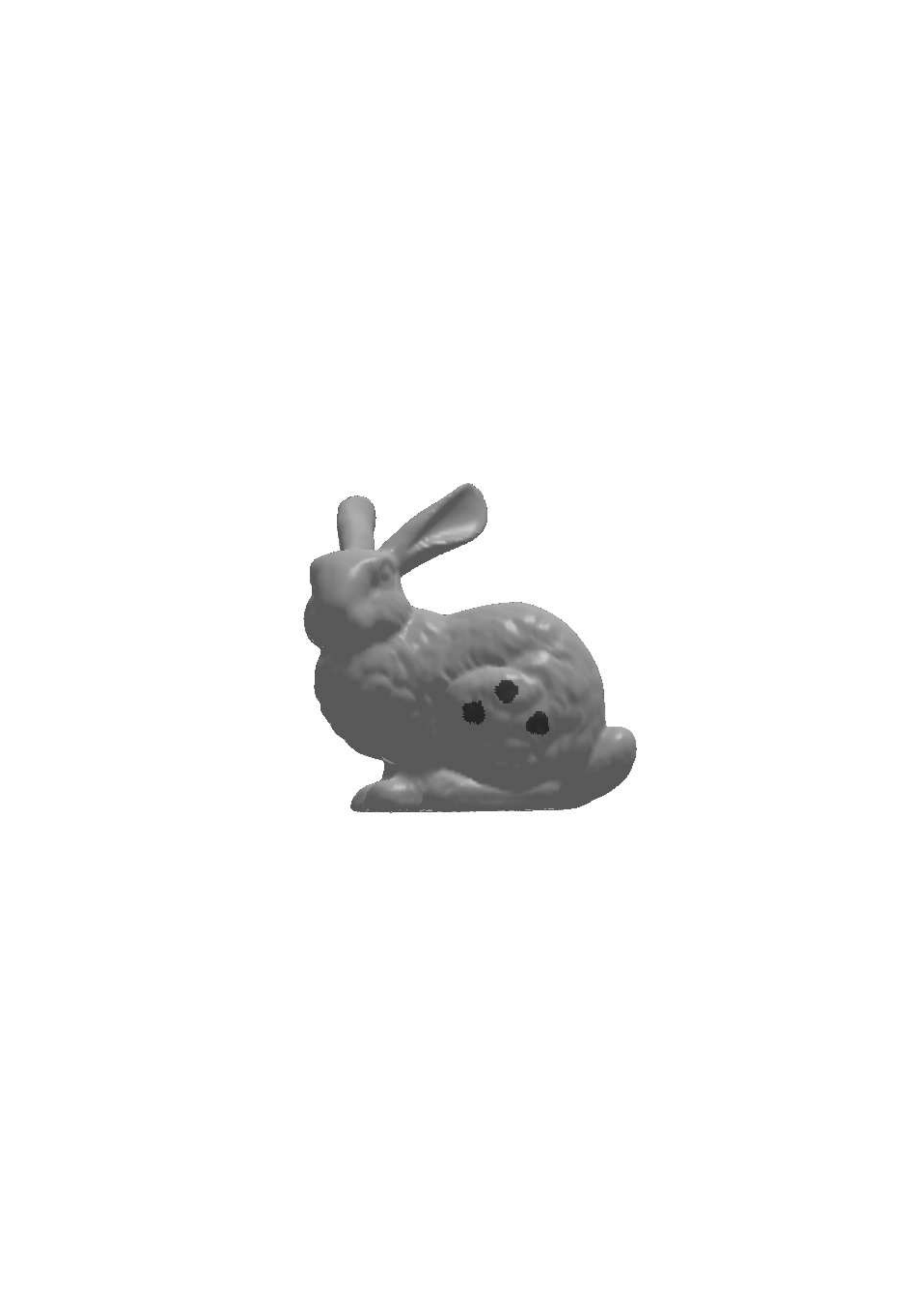}
\caption{Image segmentation of a gray-scaled image on the Stanford bunny. Demonstration of topology changes (splitting). First row: Original image and contours for $m=1, 100, 140, 150$. Second row: Piecewise constant approximation. The surface is from the Stanford Computer Graphics Laboratory, cf. \cite{Turk94}.}
\label{fig:bunny_balls}
\end{figure}

\begin{figure}[t]
\centering
\includegraphics[viewport = 140 280 430 550, width = 0.23\textwidth]{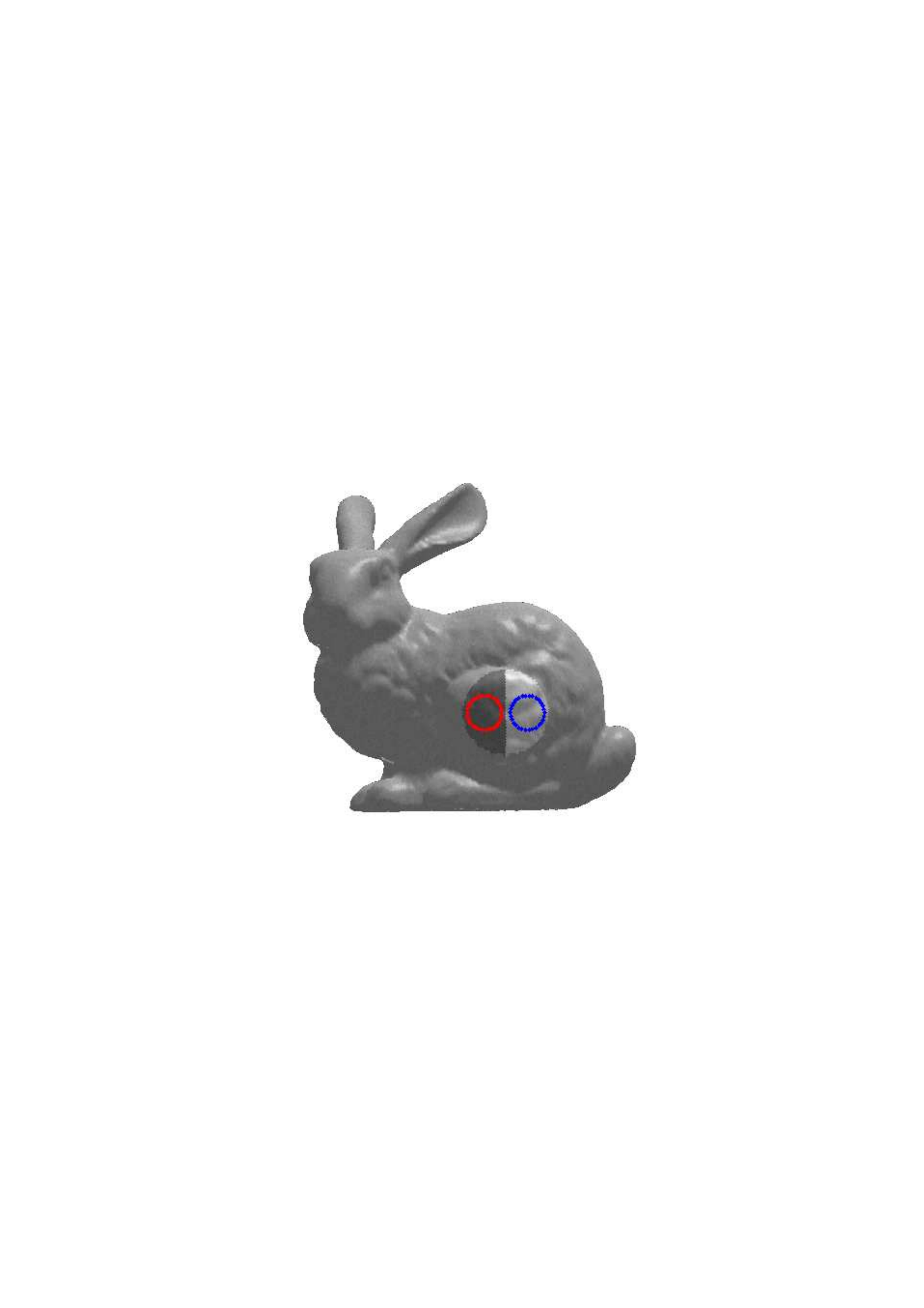}
\includegraphics[viewport = 140 280 430 550, width = 0.23\textwidth]{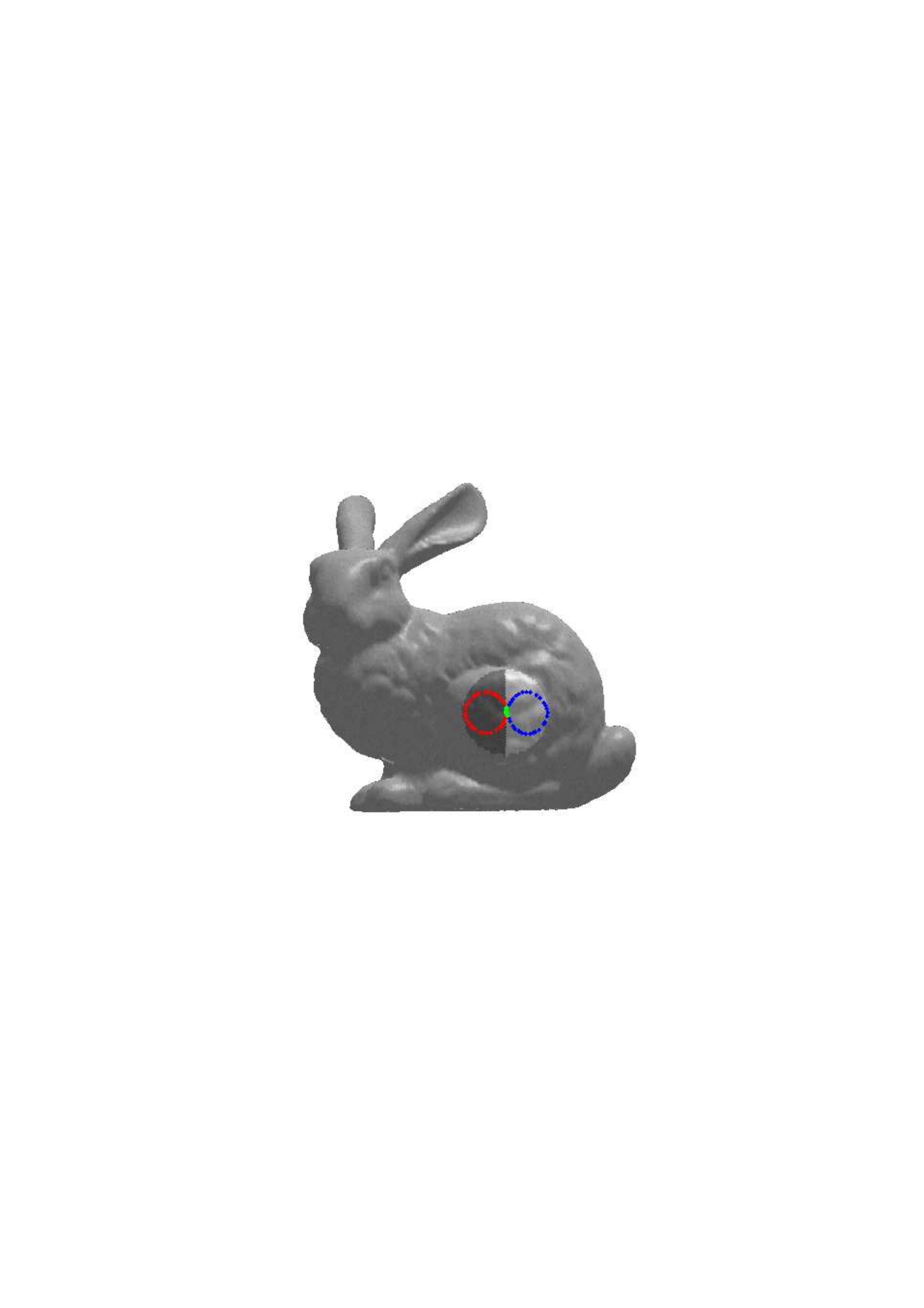}
\includegraphics[viewport = 140 280 430 550, width = 0.23\textwidth]{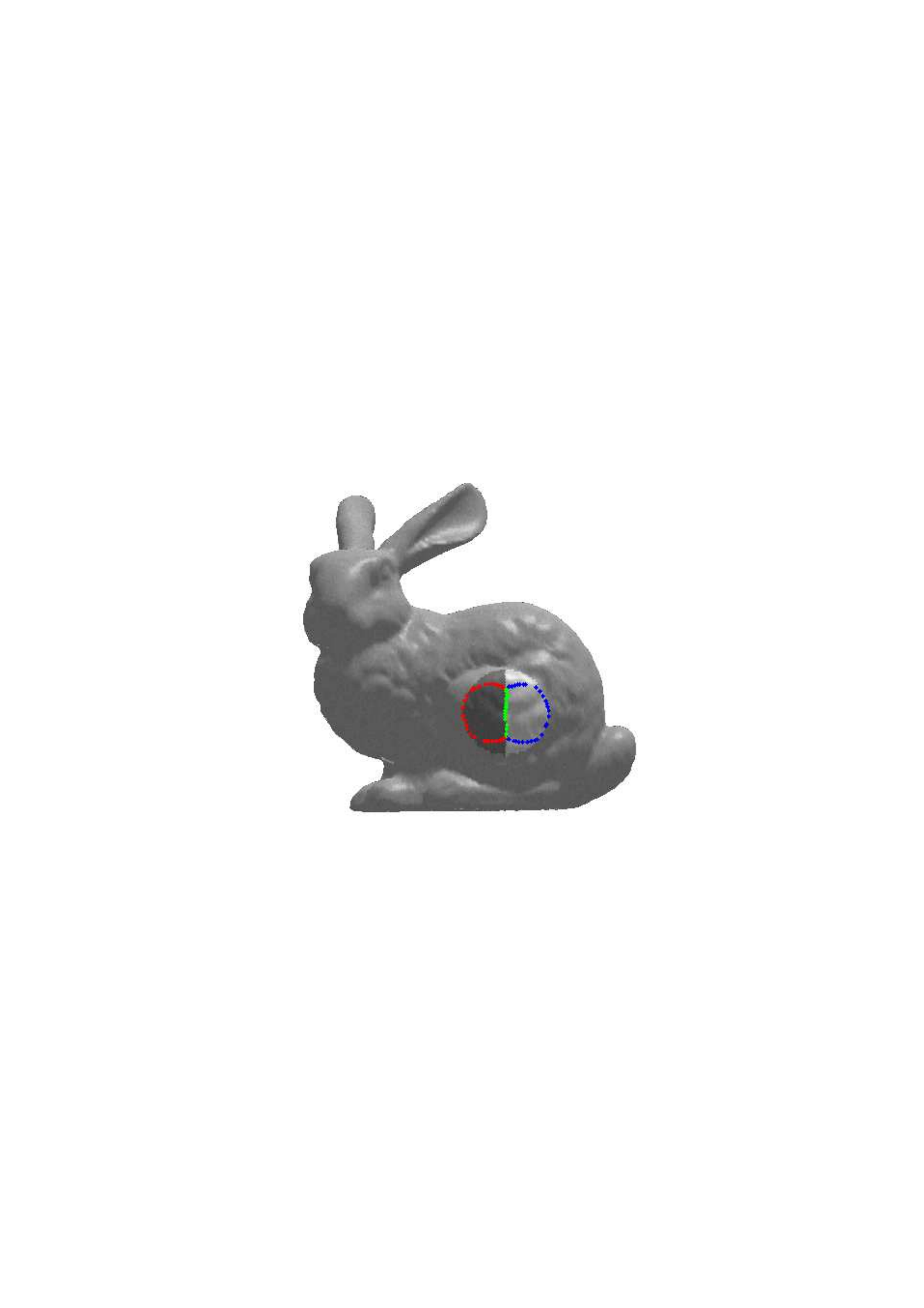}
\includegraphics[viewport = 140 280 430 550, width = 0.23\textwidth]{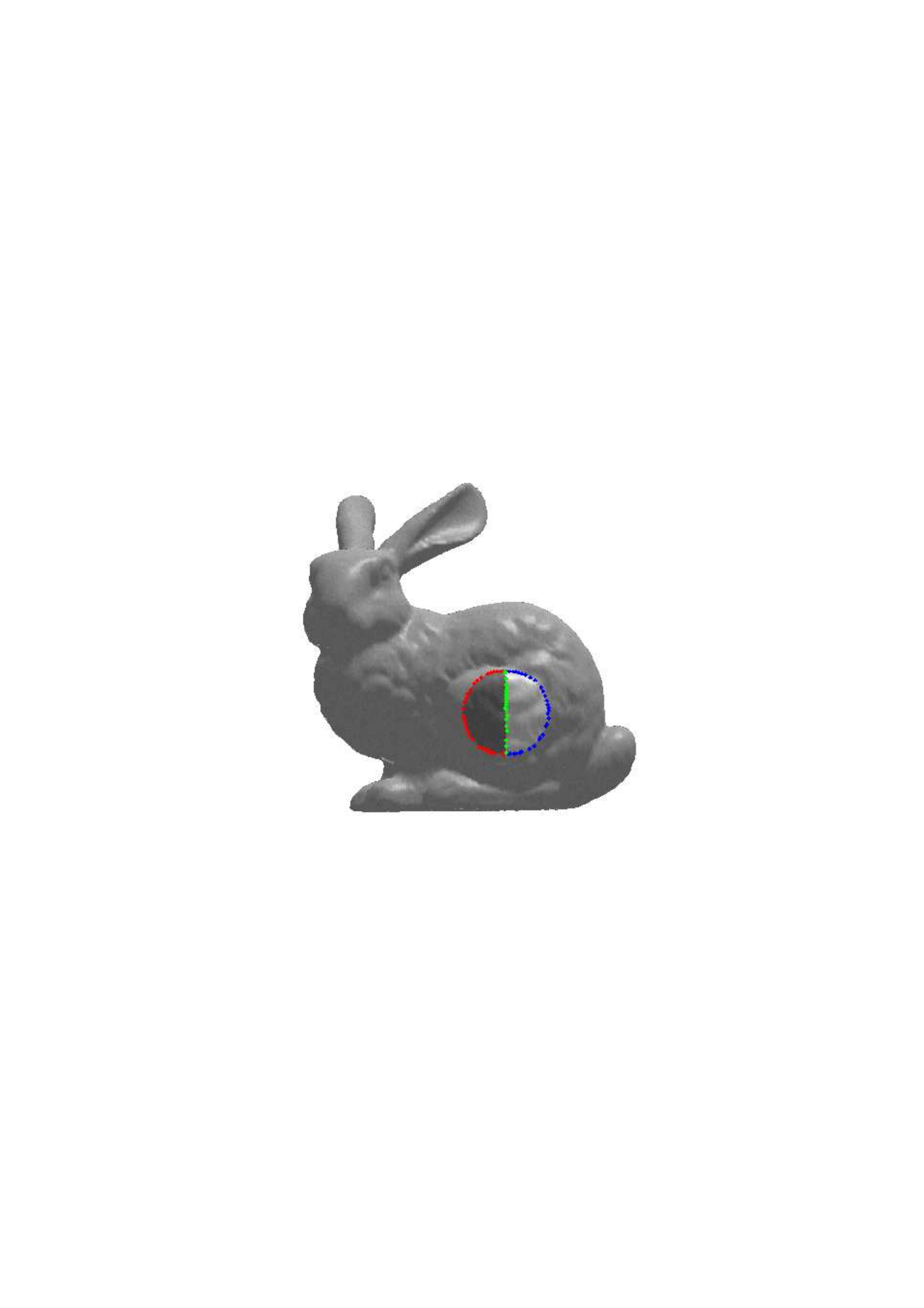}\\
\includegraphics[viewport = 140 280 430 550, width = 0.23\textwidth]{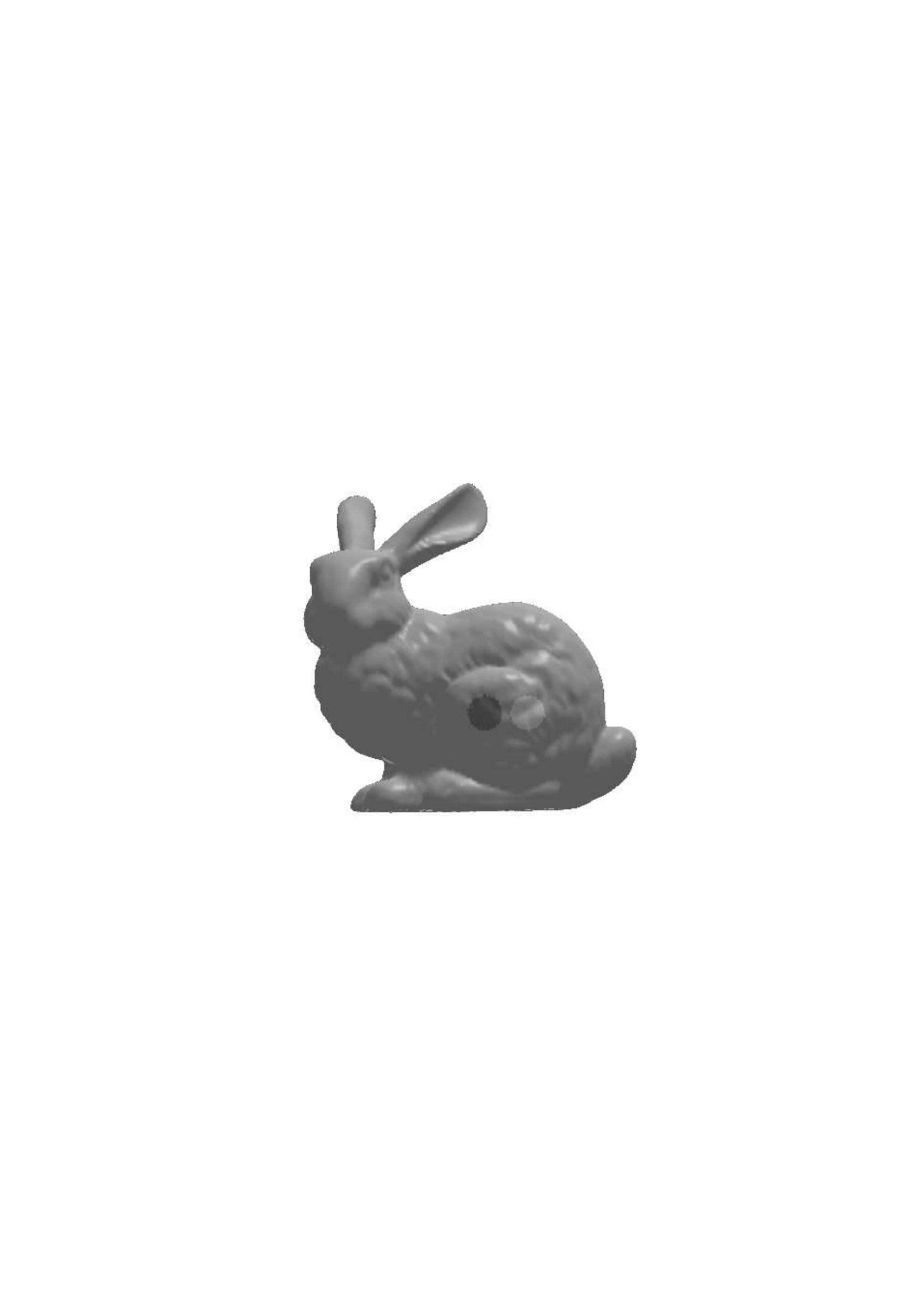}
\includegraphics[viewport = 140 280 430 550, width = 0.23\textwidth]{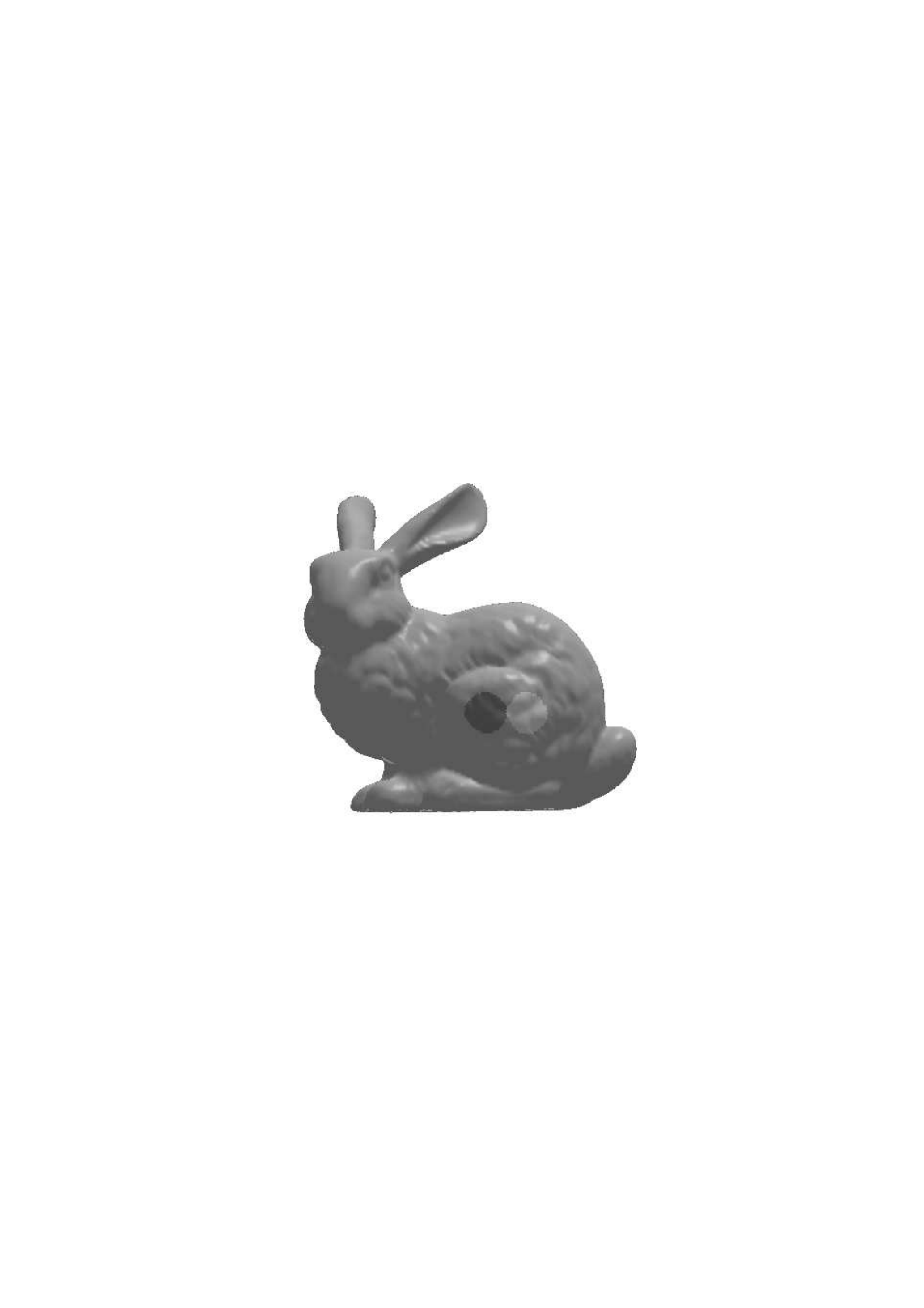}
\includegraphics[viewport = 140 280 430 550, width = 0.23\textwidth]{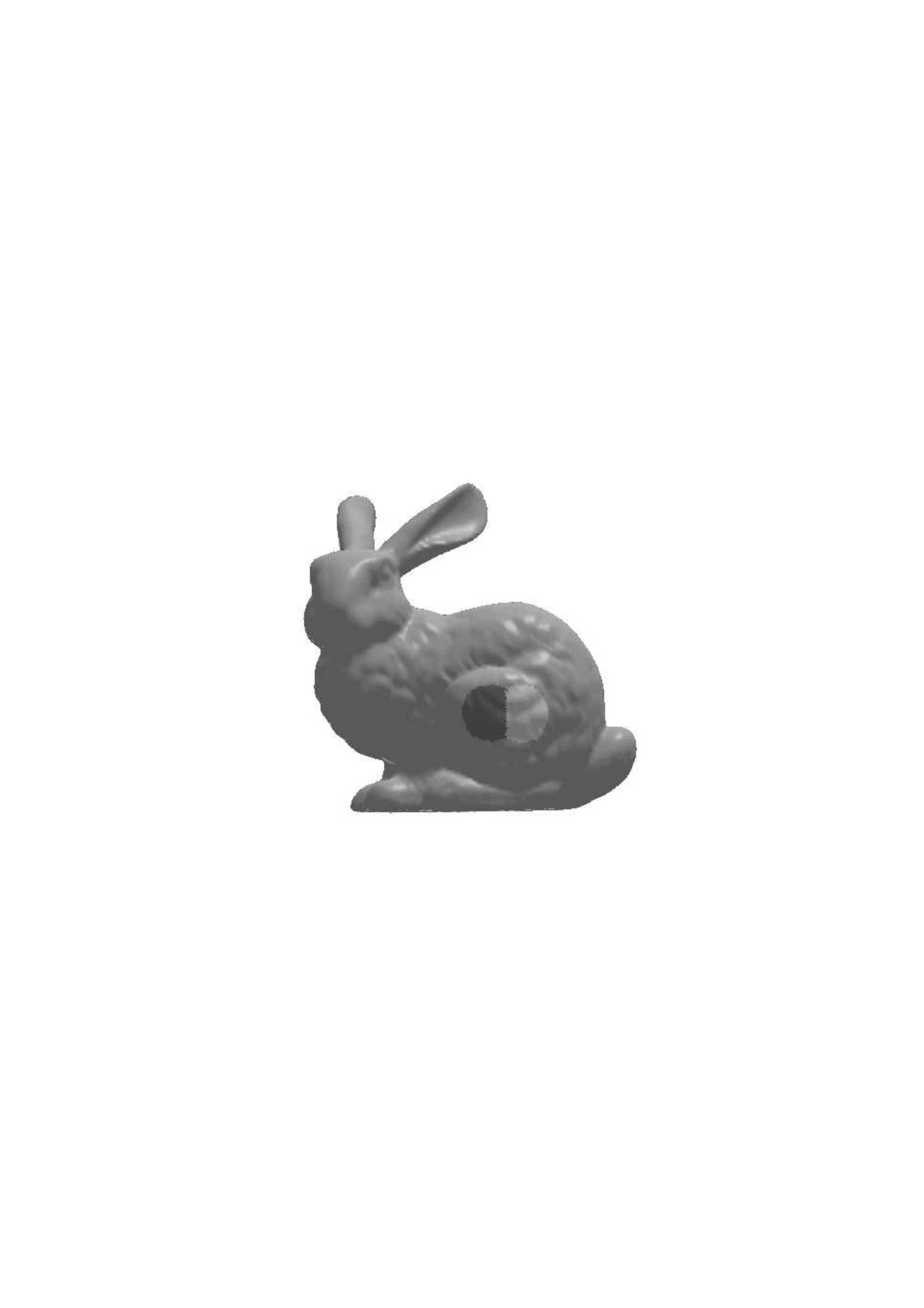}
\includegraphics[viewport = 140 280 430 550, width = 0.23\textwidth]{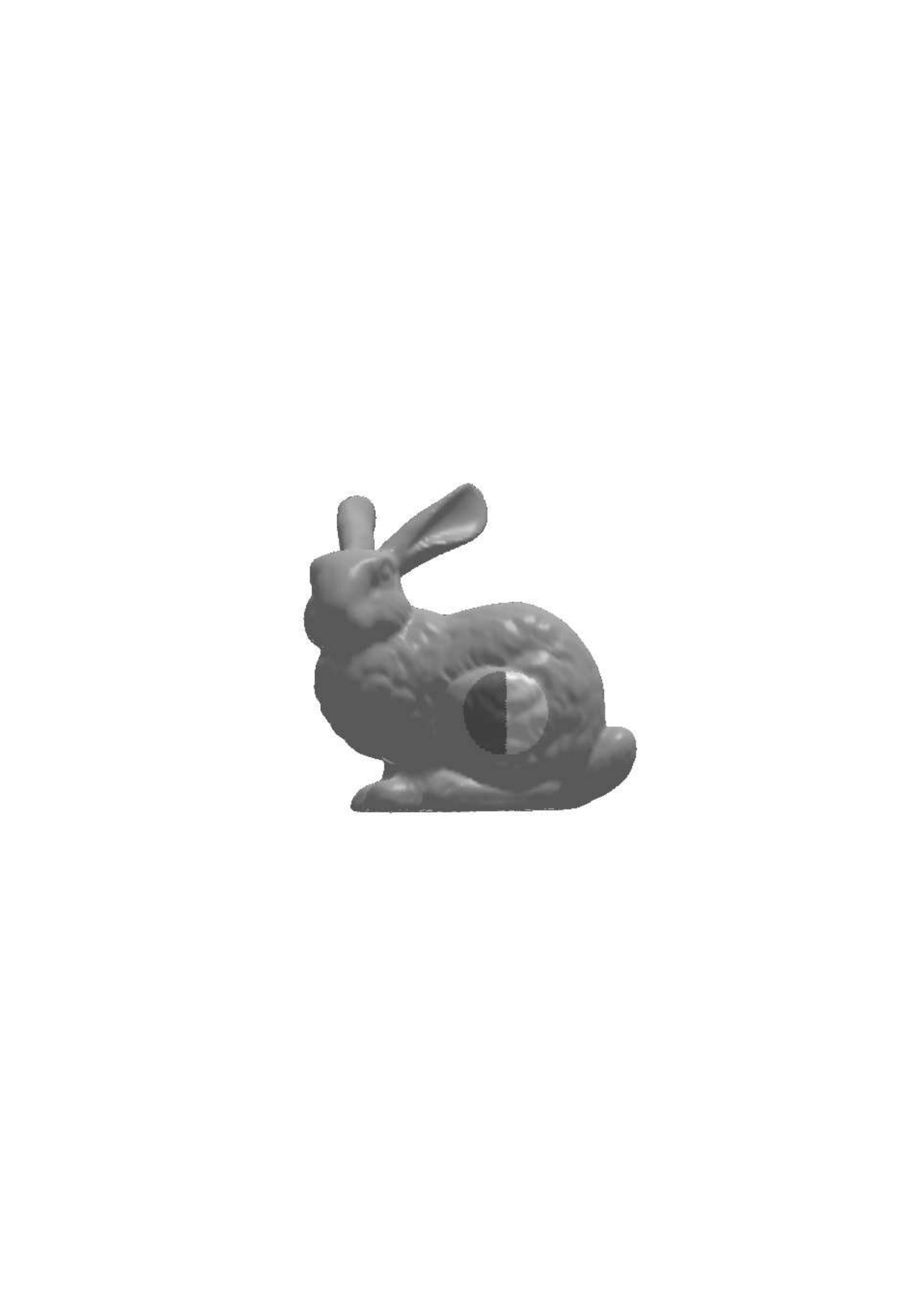}
\caption{Image segmentation of a gray-scaled image on the Stanford bunny. Curve network with triple junctions. First row: Original image and contours for $m=1, 42, 100, 250$. Second row: Piecewise constant approximation. The surface is from the Stanford Computer Graphics Laboratory, cf. \cite{Turk94}.}
\label{fig:bunny_triple}
\end{figure}

\begin{figure}
\centering
\includegraphics[viewport = 140 280 430 550, width = 0.23\textwidth]{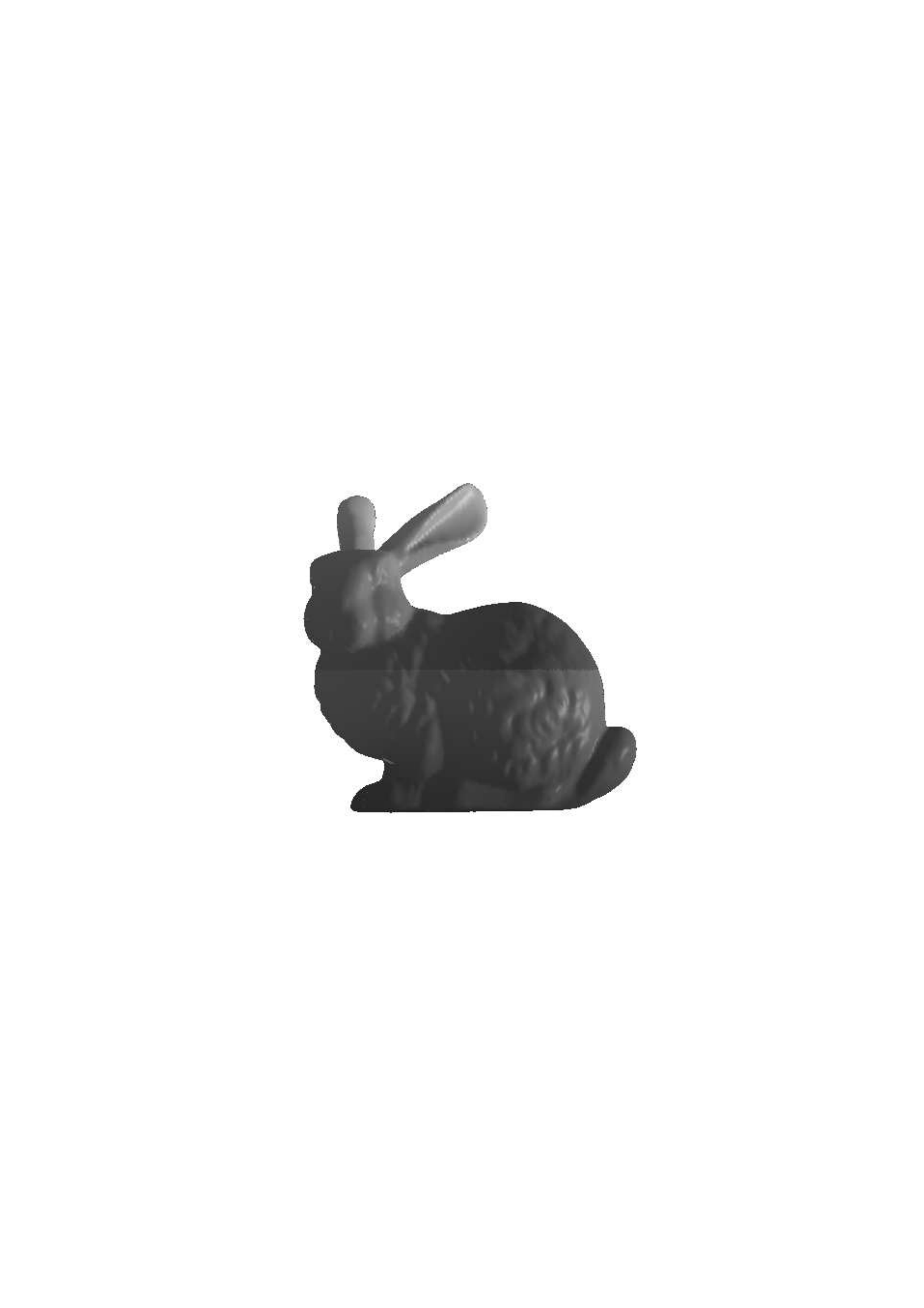}
\includegraphics[viewport = 140 280 430 550, width = 0.23\textwidth]{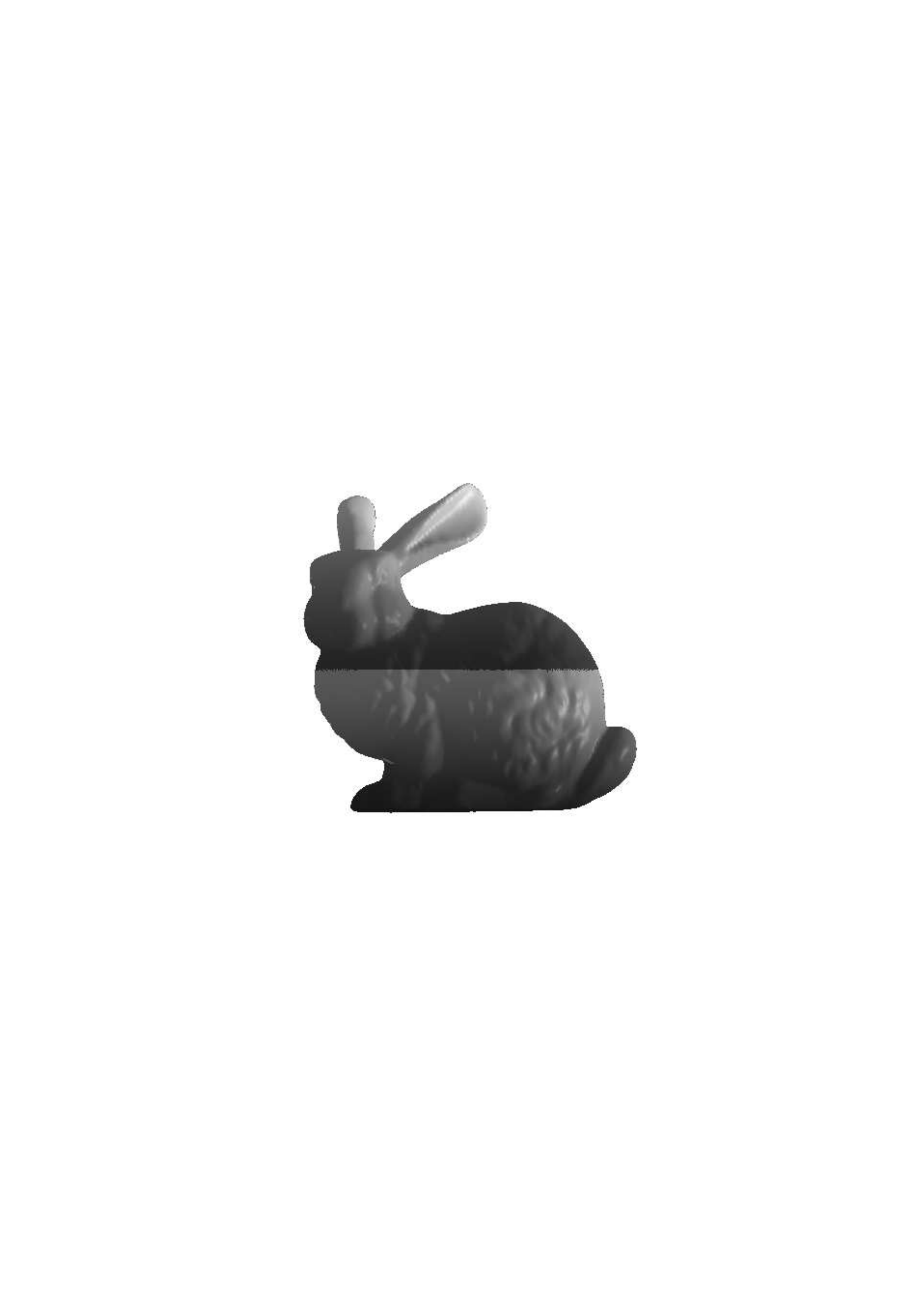}
\includegraphics[viewport = 140 280 430 550, width = 0.23\textwidth]{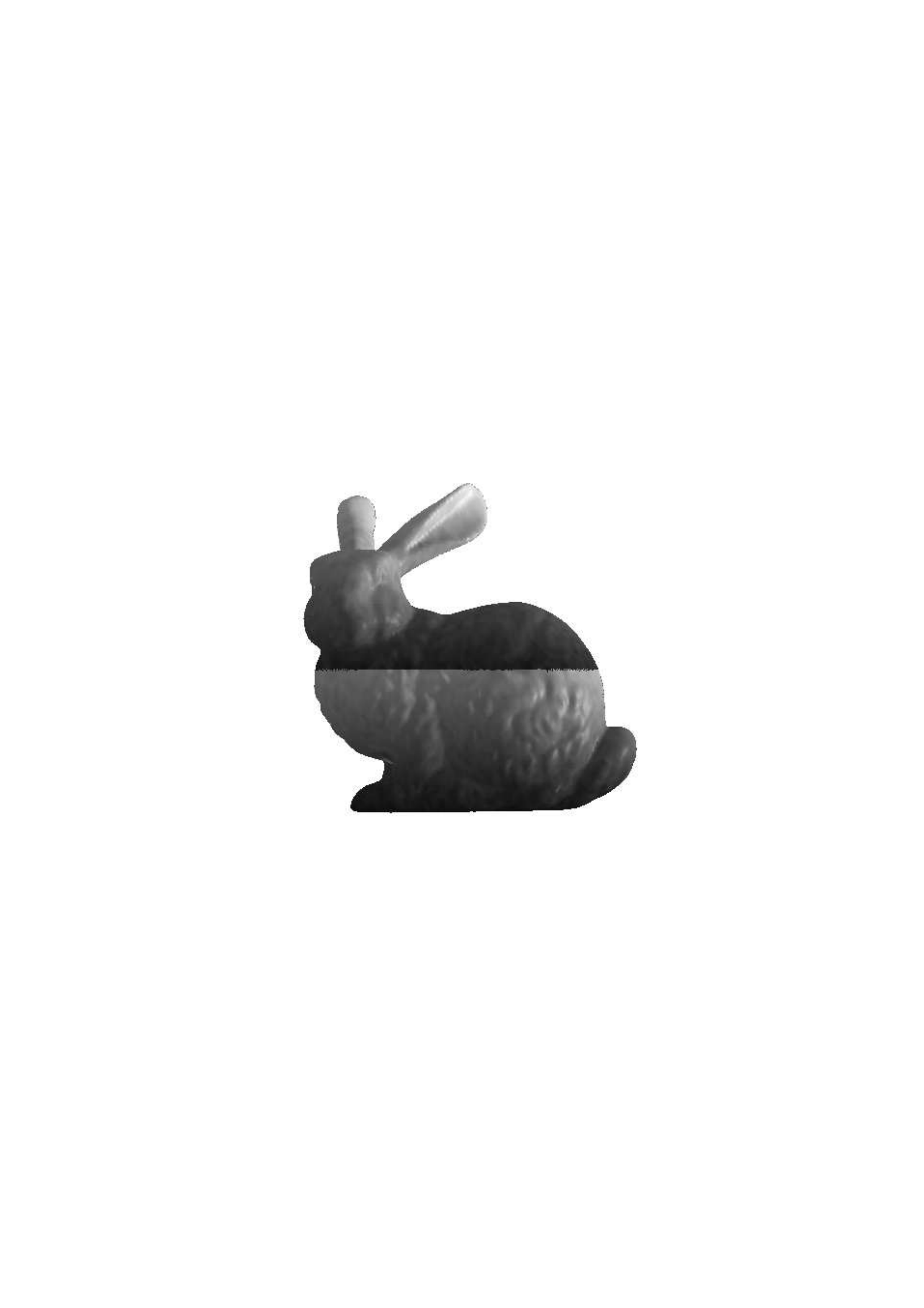}\\
\includegraphics[viewport = 140 280 430 550, width = 0.23\textwidth]{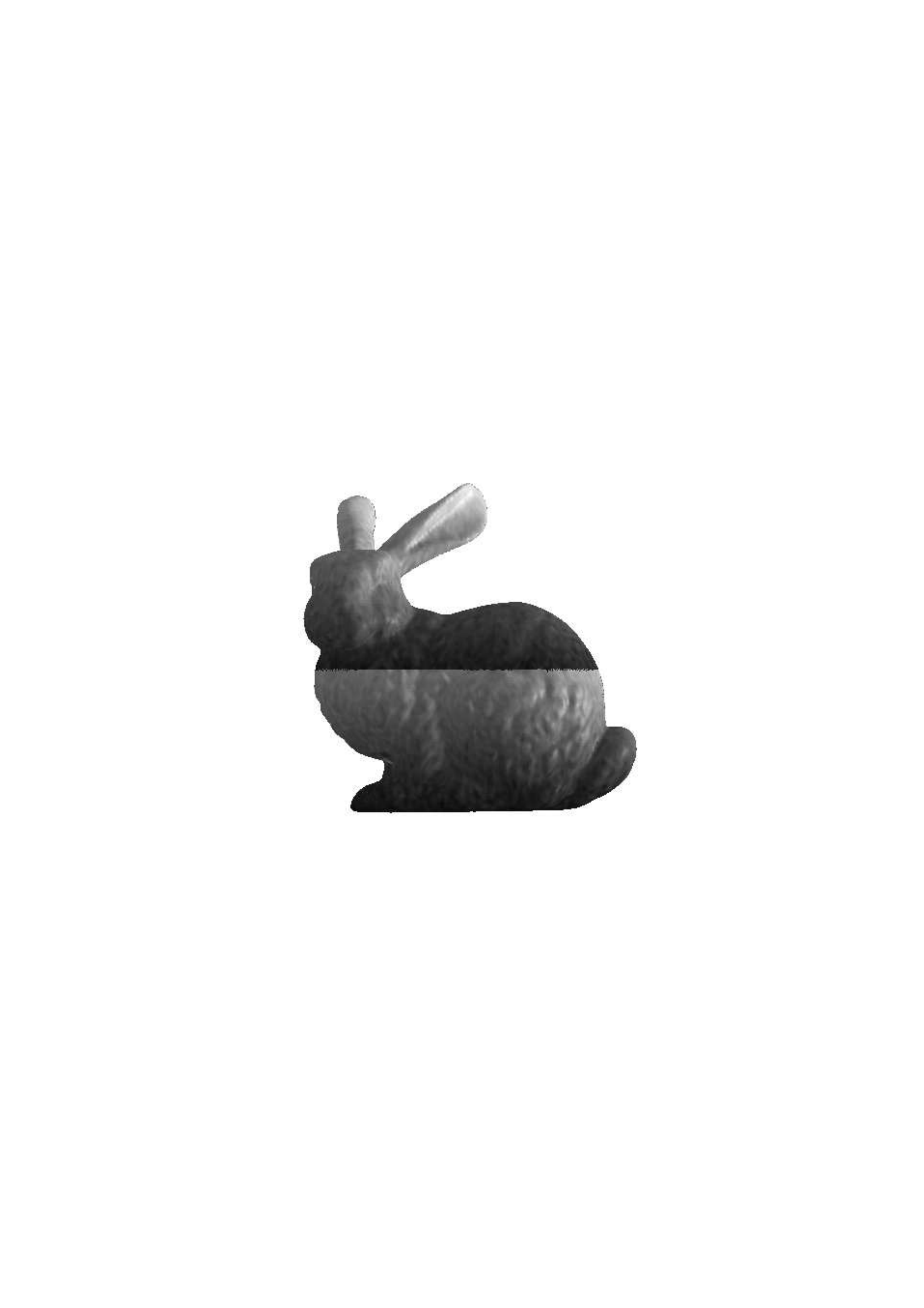}
\includegraphics[viewport = 140 280 430 550, width = 0.23\textwidth]{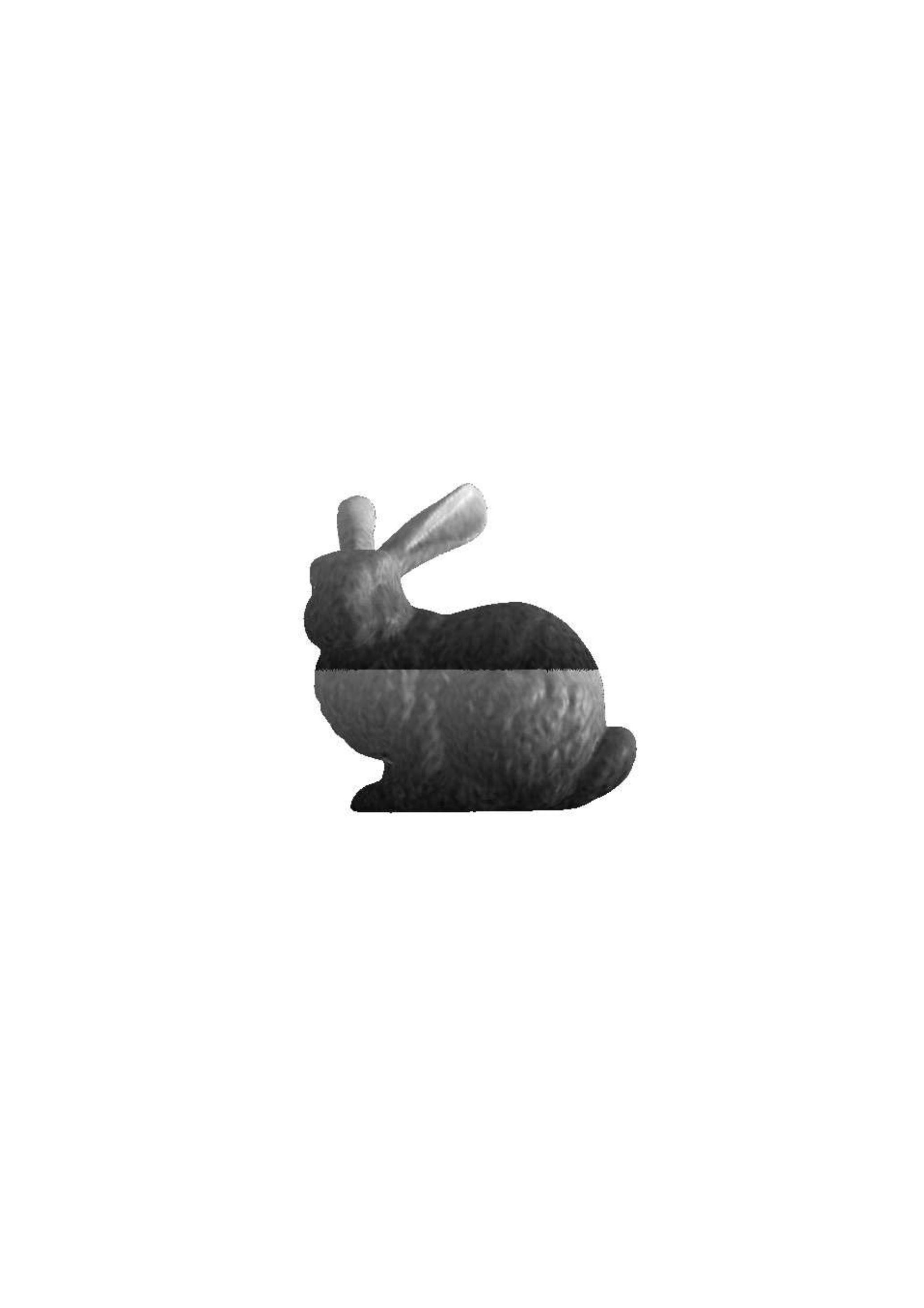}
\includegraphics[viewport = 140 280 430 550, width = 0.23\textwidth]{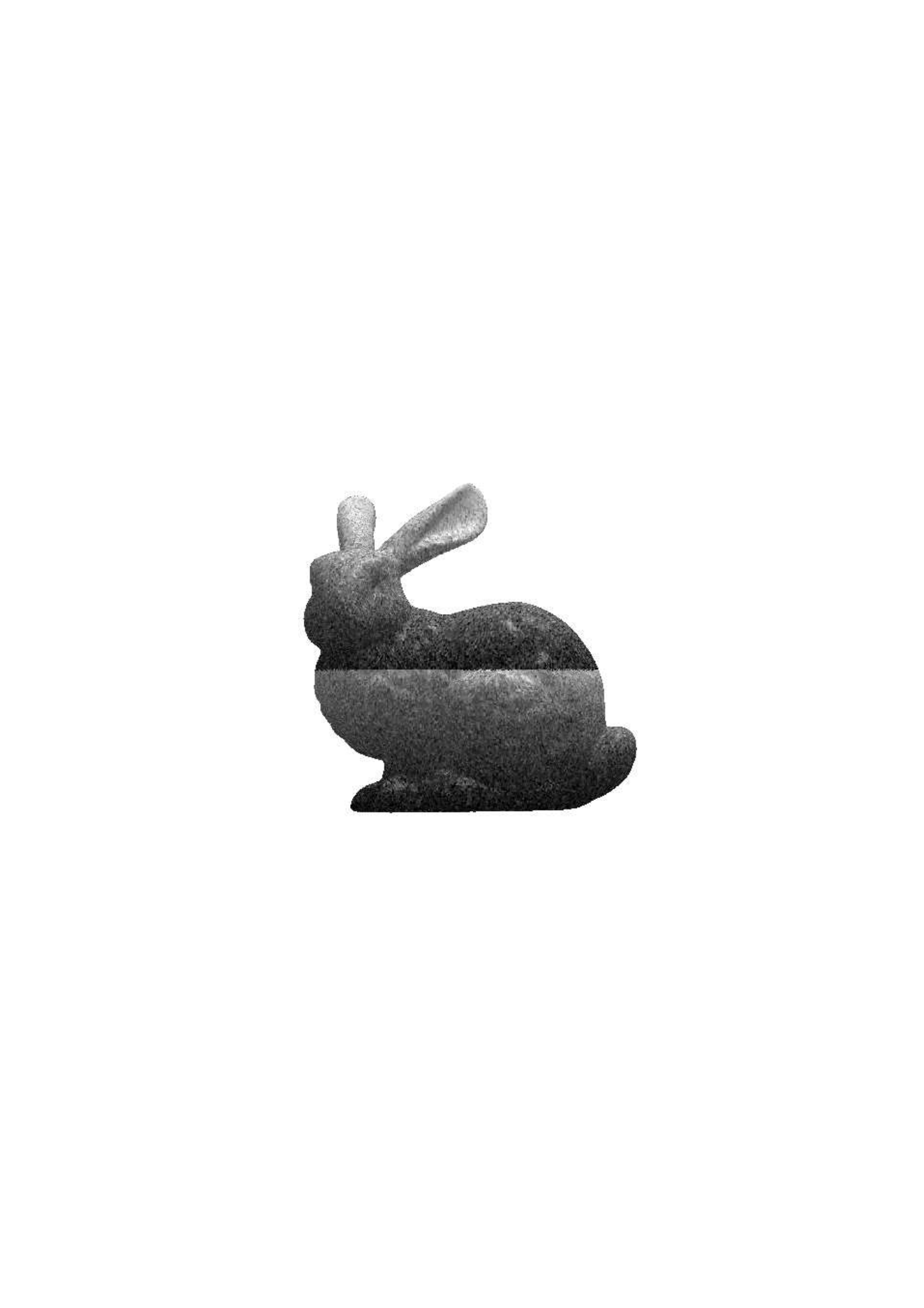}
\caption{Image denoising with edge enhancement. Subfigure 1-5 (row-wise): Denoising result using $\lambda = 0.1, 1, 100, 1000, 10000$.  Subfigure 6: Original image with noise. The surface is from the Stanford Computer Graphics Laboratory, cf. \cite{Turk94}.}
\label{fig:bunny_denosing}
\end{figure}

\paragraph{Images on the Stanford bunny}
We test the developed algorithm for image segmentation by considering artificial images on the Standford bunny (Stanford Computer Graphics Laboratory \cite{Turk94}). In a first experiment, we consider a gray-scaled image showing three dark discs. This example is similar to an experiment presented by Kr\"uger et al. \cite{Krueger08}, who use a level set method to solve a geodesic active contours model with a balloon force for images on surfaces, whereas we use a parametric method for a Chan-Vese like model for images on surfaces. 

Figure \ref{fig:bunny_balls} shows our image segmentation result using the developed direct, parametric approach for image segmentation. We use the parameters $\sigma = 2$ and $\lambda = 50$ to weight the curvature and external term. Let $\Delta t = \tau_m$ denote the time step size. The time step size is set to $\Delta t= 0.01$. This example demonstrates topology changes; in detail it shows how one initial closed curve is split up in three single curves. The contours at four different time steps and the corresponding piecewise constant approximation are presented in Figure \ref{fig:bunny_balls}. Of course, a level set technique as used in  \cite{Krueger08} can handle splitting automatically, whereas we need to detect the change in topology explicitly using the method described in Section \ref{subsec:num_top_changes}. However, our method to detect topology changes is efficient, since it has a computational effort of $\mathcal{O}(N)$, where $N$ is the number of node points of the polygonal curves. 

In a second experiment, we demonstrate the creation and handling of triple junctions for a curve network on the Stanford bunny, cf. Figure \ref{fig:bunny_triple}. For this experiment, we set $\sigma = 1$, $\lambda = 40$ and $\Delta t= 0.01$. The possibility of triple junctions is not considered in \cite{Krueger08}.

Further, we apply the image denoising method described in Section \ref{subsec:image_restoration} and \ref{subsec:finitedifference_imagedenoising_geodesic}, cf. Figure~\ref{fig:bunny_denosing}. Since the term $\Delta_\M u_k$ in \eqref{eq:diffusion_eq_geodesic_strong} is weighted with $1/\lambda$, the denoised image is close to the original image, the larger $\lambda$ is chosen, cf. Figure \ref{fig:bunny_denosing}. Setting $\lambda = 1000$ or $\lambda=10000$, the noise is not completely smoothed out. Setting $\lambda = 0.1$ the resulting denoised version is close to a piecewise constant image approximation. Setting $\lambda = 100$ results in a good denoised image. 

\paragraph{Image on a torus}
We consider an artificial image on a torus to demonstrate that the method can be applied on surfaces of arbitrary topology type (for example arbitrary genus). Figure \ref{fig:torus_objects} shows a torus with a color image from different viewing angles and the curves at different iteration steps during the evolution. The RGB color space is used for the segmentation. As weighting parameter for the curvature $\sigma = 1$ is used, and all three components of the color are weighted equally with $\lambda_1=\lambda_2=\lambda_3=20$.
The time step size in this experiment is set to $\Delta t = 0.0001$. 

Two different topology changes occur in this example: Around $m=335$, two triple junctions and a new curve are created. Shortly after $m=422$, one blue curve splits into two single curves. 

\begin{figure}
\centering  % vorher groesse: 0.25, 0.25, 0.27
\includegraphics[trim = 20mm 20mm 20mm 20mm,clip, width = 0.2\textwidth]{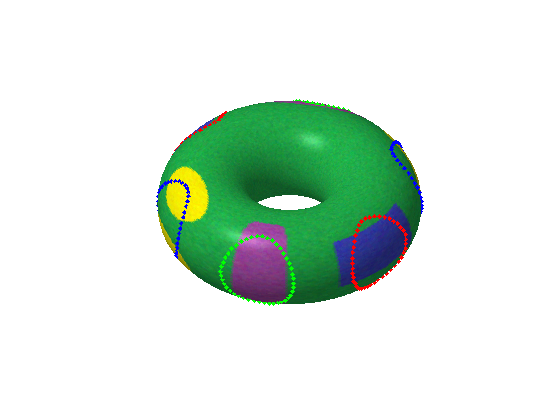}
\includegraphics[trim = 20mm 20mm 20mm 20mm,clip, width = 0.2\textwidth]{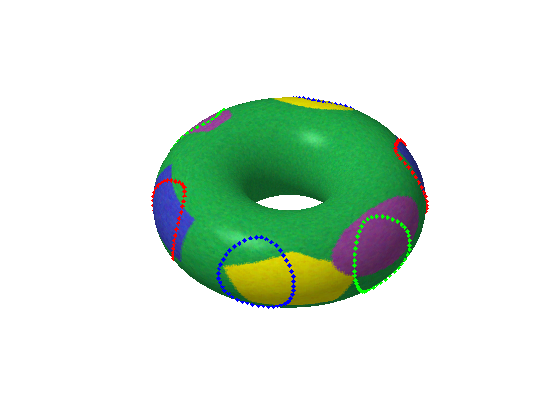}
\includegraphics[trim = 20mm 22mm 20mm 22mm,clip, width = 0.216\textwidth]{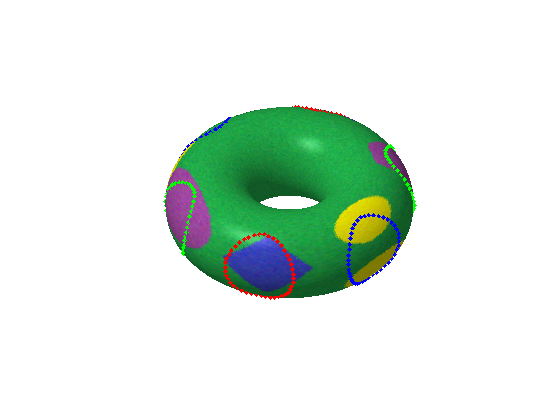}\\[2ex]
\includegraphics[trim = 20mm 20mm 20mm 20mm,clip, width = 0.2\textwidth]{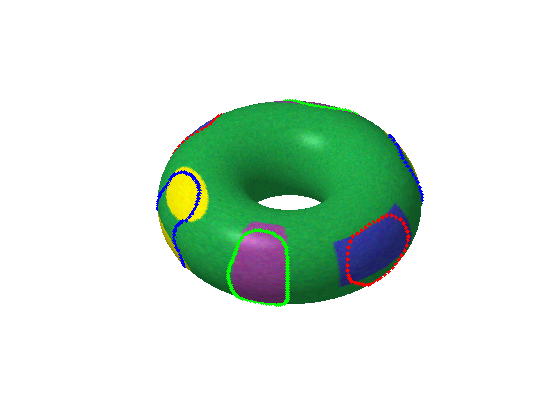}
\includegraphics[trim = 20mm 20mm 20mm 20mm,clip, width = 0.2\textwidth]{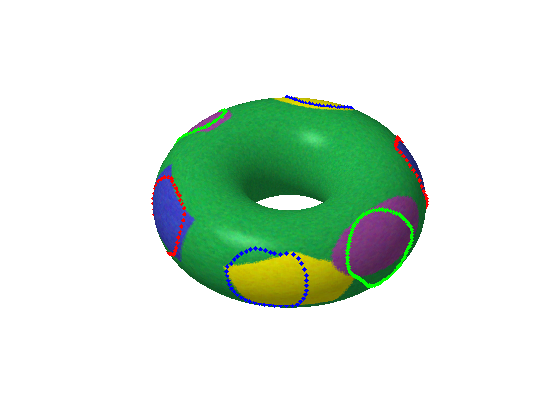}
\includegraphics[trim = 20mm 22mm 20mm 22mm,clip, width = 0.216\textwidth]{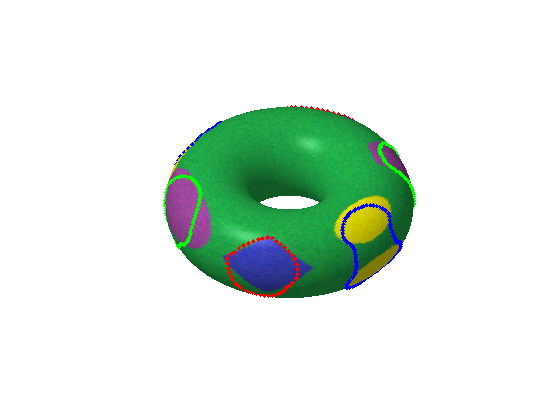}\\[2ex]
\includegraphics[trim = 20mm 20mm 20mm 20mm,clip, width = 0.2\textwidth]{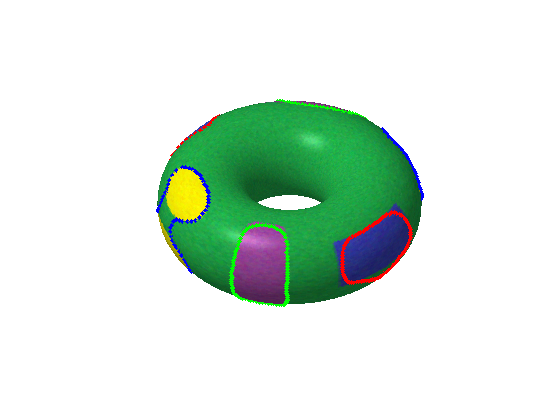}
\includegraphics[trim = 20mm 20mm 20mm 20mm,clip, width = 0.2\textwidth]{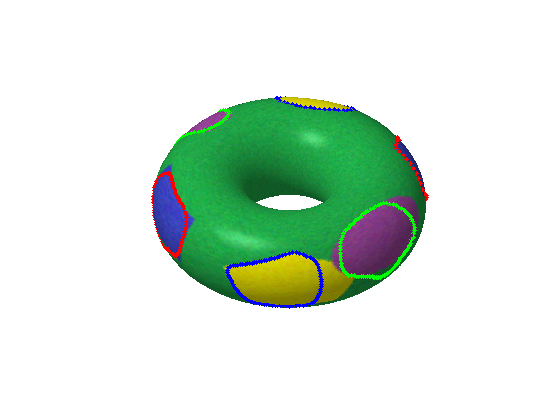}
\includegraphics[trim = 20mm 22mm 20mm 22mm,clip, width = 0.216\textwidth]{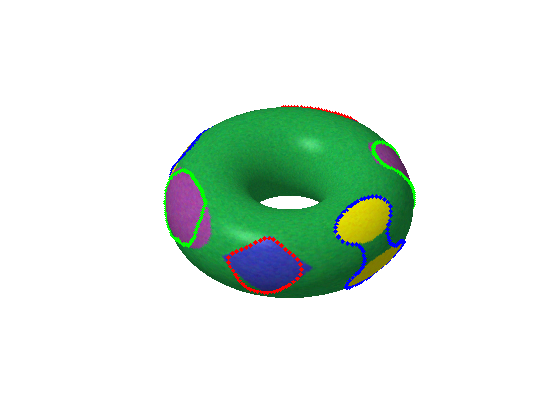}\\[2ex]
\includegraphics[trim = 20mm 20mm 20mm 20mm,clip, width = 0.2\textwidth]{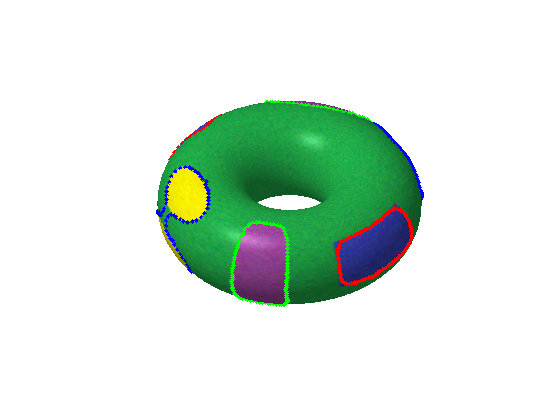}
\includegraphics[trim = 20mm 20mm 20mm 20mm,clip, width = 0.2\textwidth]{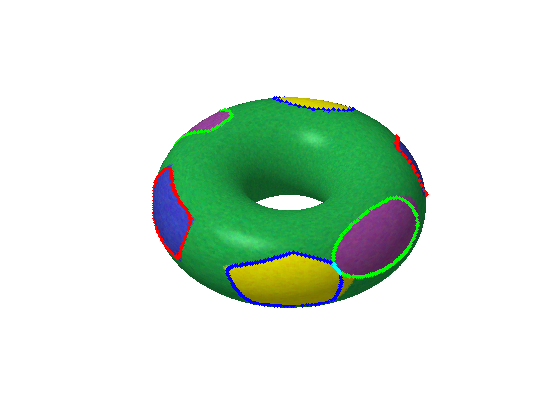}
\includegraphics[trim = 20mm 22mm 20mm 22mm,clip, width = 0.216\textwidth]{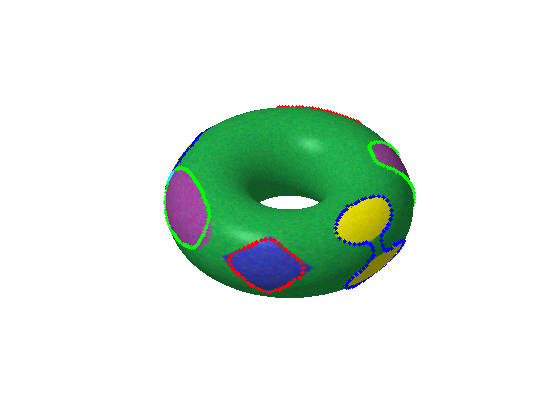}\\[2ex]
\includegraphics[trim = 20mm 20mm 20mm 20mm,clip, width = 0.2\textwidth]{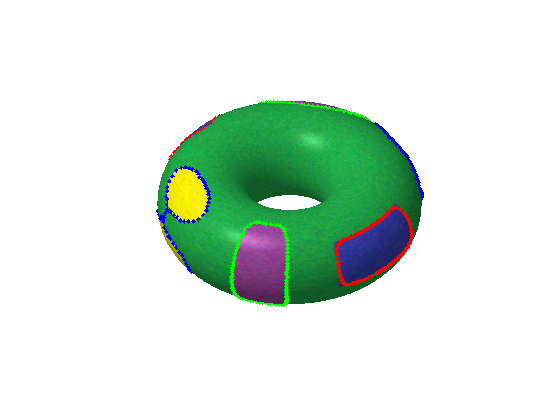}
\includegraphics[trim = 20mm 20mm 20mm 20mm,clip, width = 0.2\textwidth]{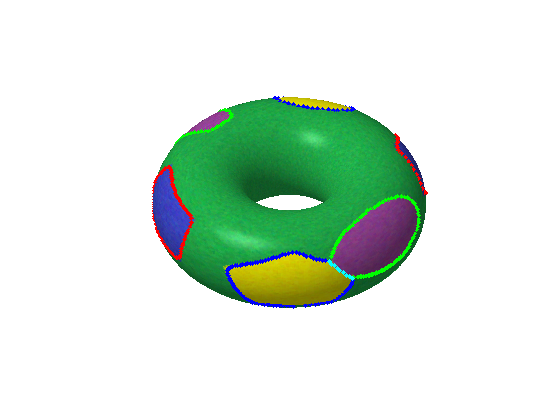}
\includegraphics[trim = 20mm 22mm 20mm 22mm,clip, width = 0.216\textwidth]{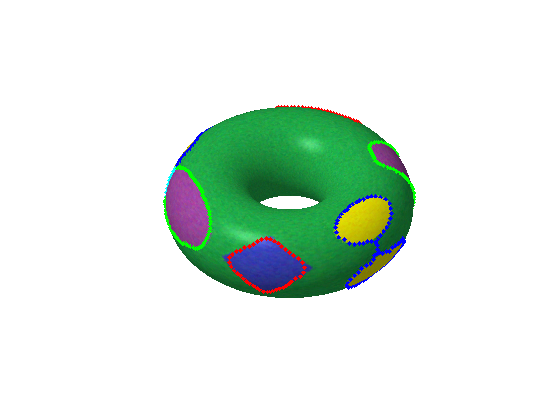}\\[2ex]
\includegraphics[trim = 20mm 20mm 20mm 20mm,clip, width = 0.2\textwidth]{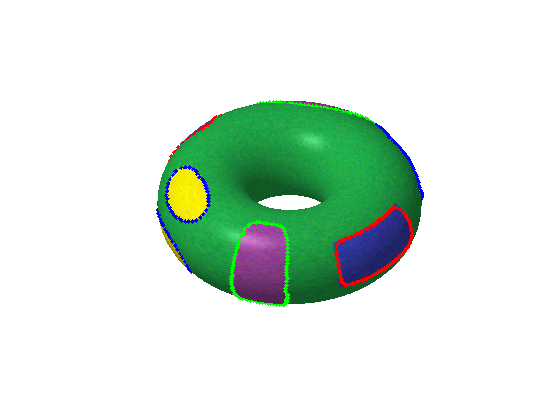}
\includegraphics[trim = 20mm 20mm 20mm 20mm,clip, width = 0.2\textwidth]{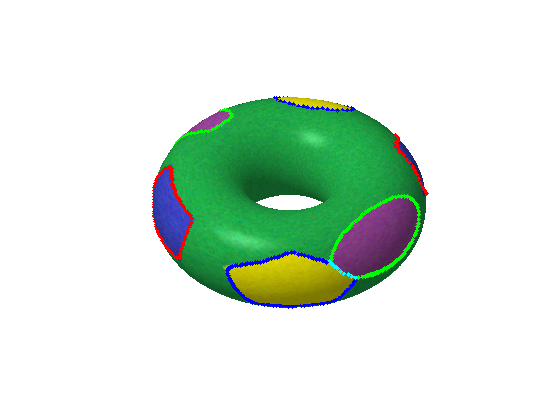}
\includegraphics[trim = 20mm 22mm 20mm 22mm,clip, width = 0.216\textwidth]{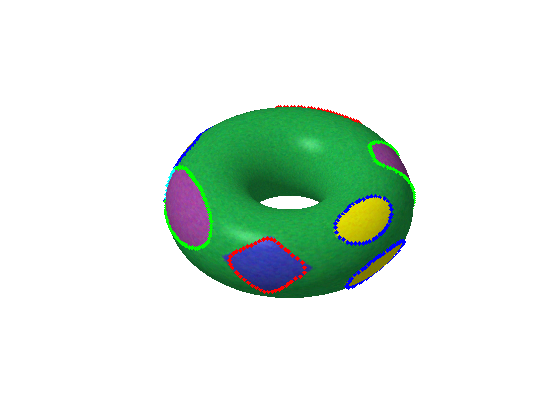}
\caption{Segmentation of an artificial image showing different objects on a torus. First - sixth row: Original image and contours for $m=1, 100, 200, 335, 422, 600$. First - third column: Different viewing angles.}
\label{fig:torus_objects}
\end{figure}

\subsection{Real Images}
\paragraph{Lip contour segmentation}
We consider an application where lip contours should be detected on given face image data. Figure \ref{fig:lip_contours} shows the results where the image segmentation algorithm is applied on three sample images of the 3D face scans from the 3D Basel Face Model (BFM) published by the Computer Science department of the University of Basel\footnote{\url{http://faces.cs.unibas.ch/bfm/main.php?nav=1-0&id=basel_face_model}}, see also \cite{Paysen09}. The images are segmented using the chromaticity-brightness color space with $\sigma = 0.25$, $\lambda_C = 200$, $\lambda_B = 20$ and $\Delta t = 0.001$. The initial contour is a closed curve placed around the lips. Applying our algorithm for image segmentation, we obtain the final lip contours.

\begin{figure}[t]
\centering
\includegraphics[viewport = 170 300 410 530, width = 0.2\textwidth]{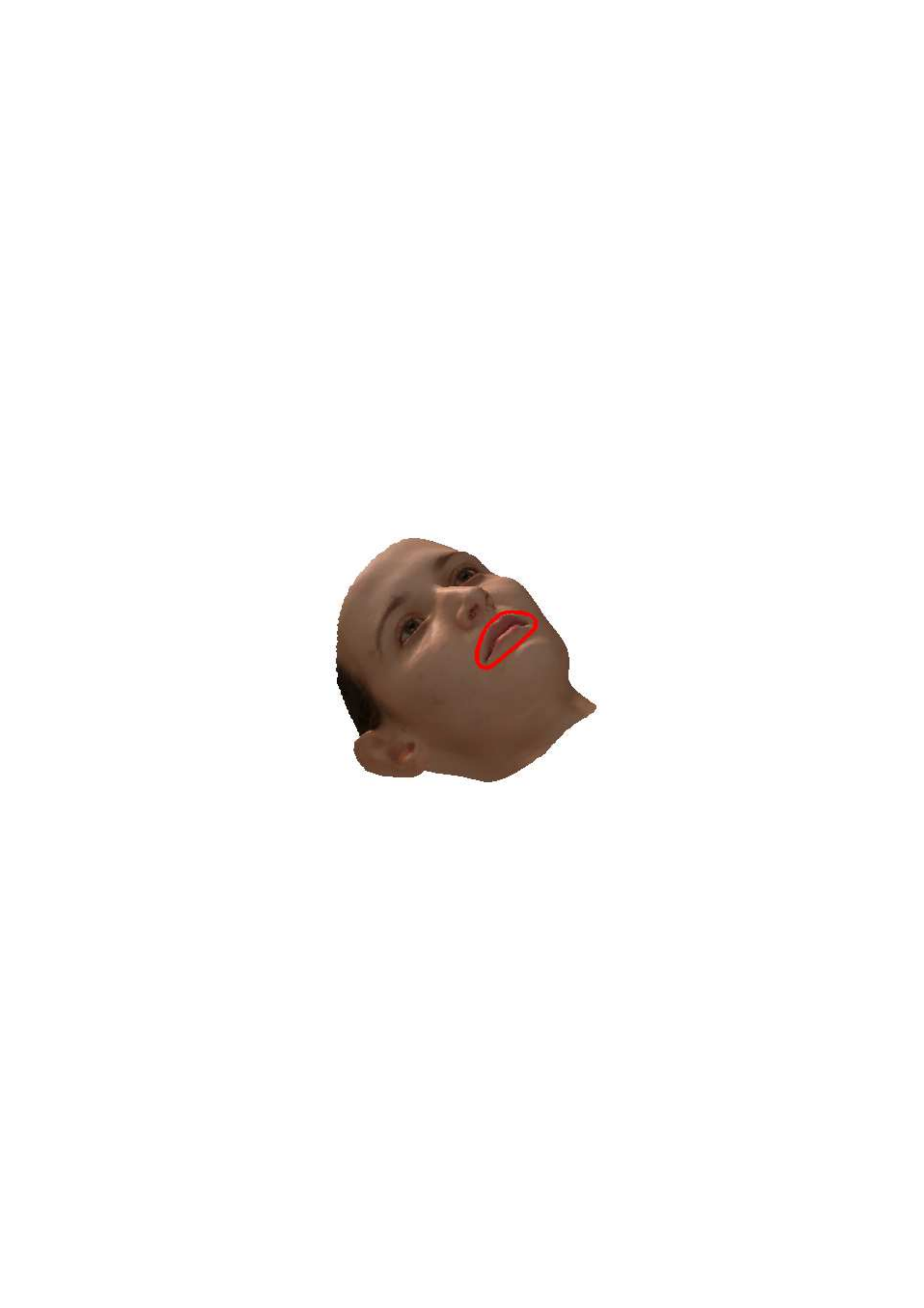}
\includegraphics[viewport = 170 300 410 530, width = 0.2\textwidth]{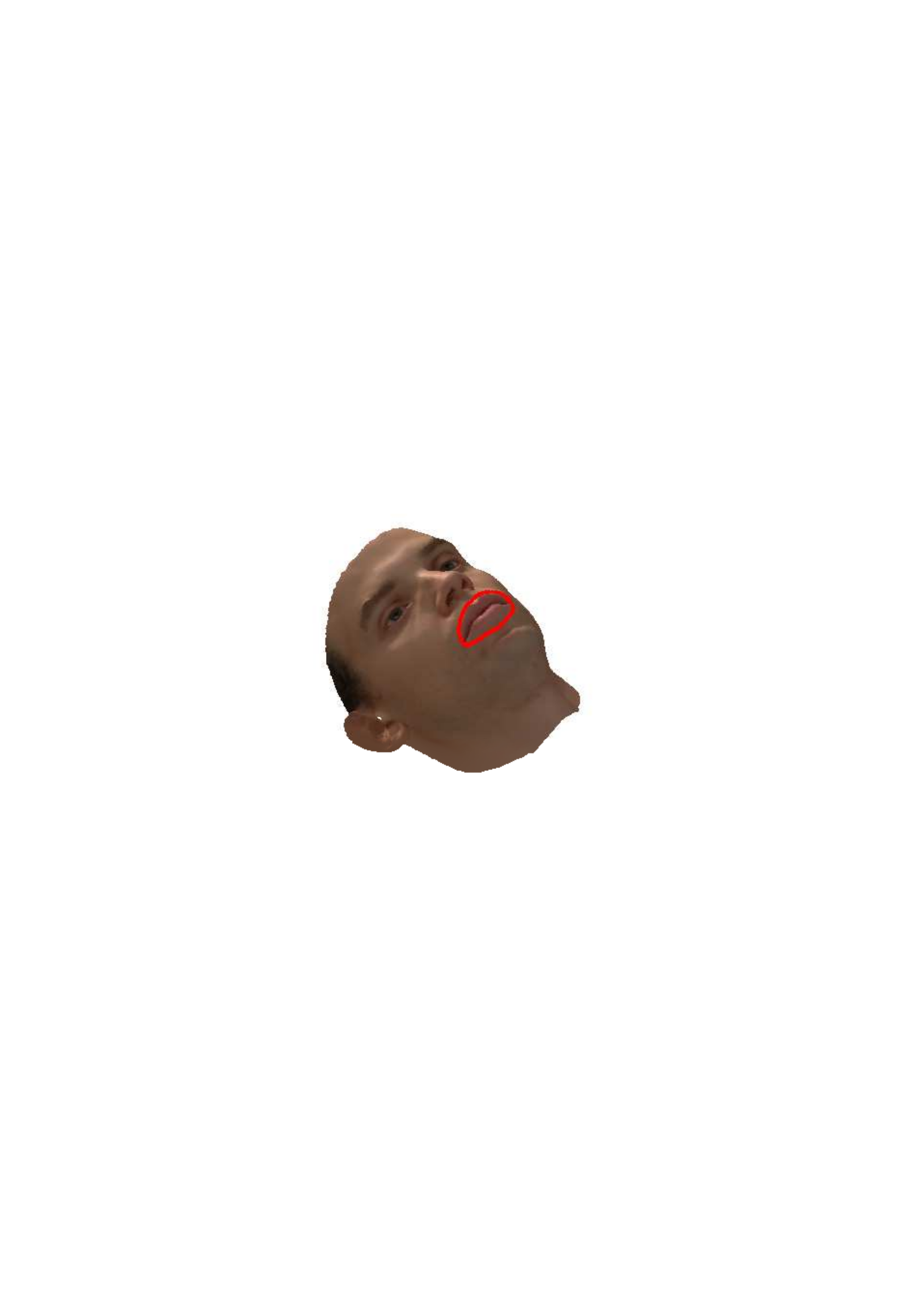}
\includegraphics[viewport = 170 300 410 530, width = 0.2\textwidth]{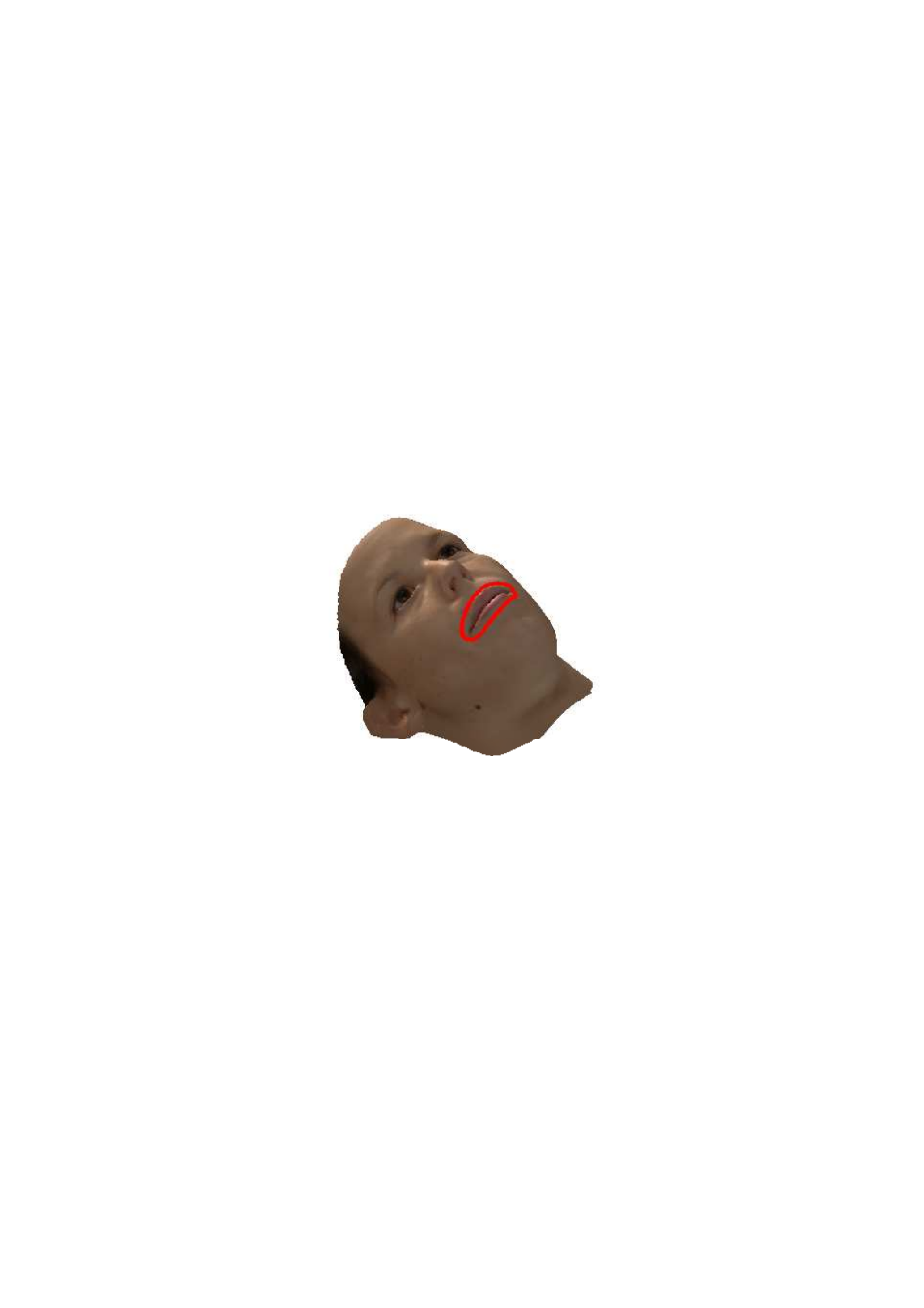}\\
\includegraphics[viewport = 170 300 410 530, width = 0.2\textwidth]{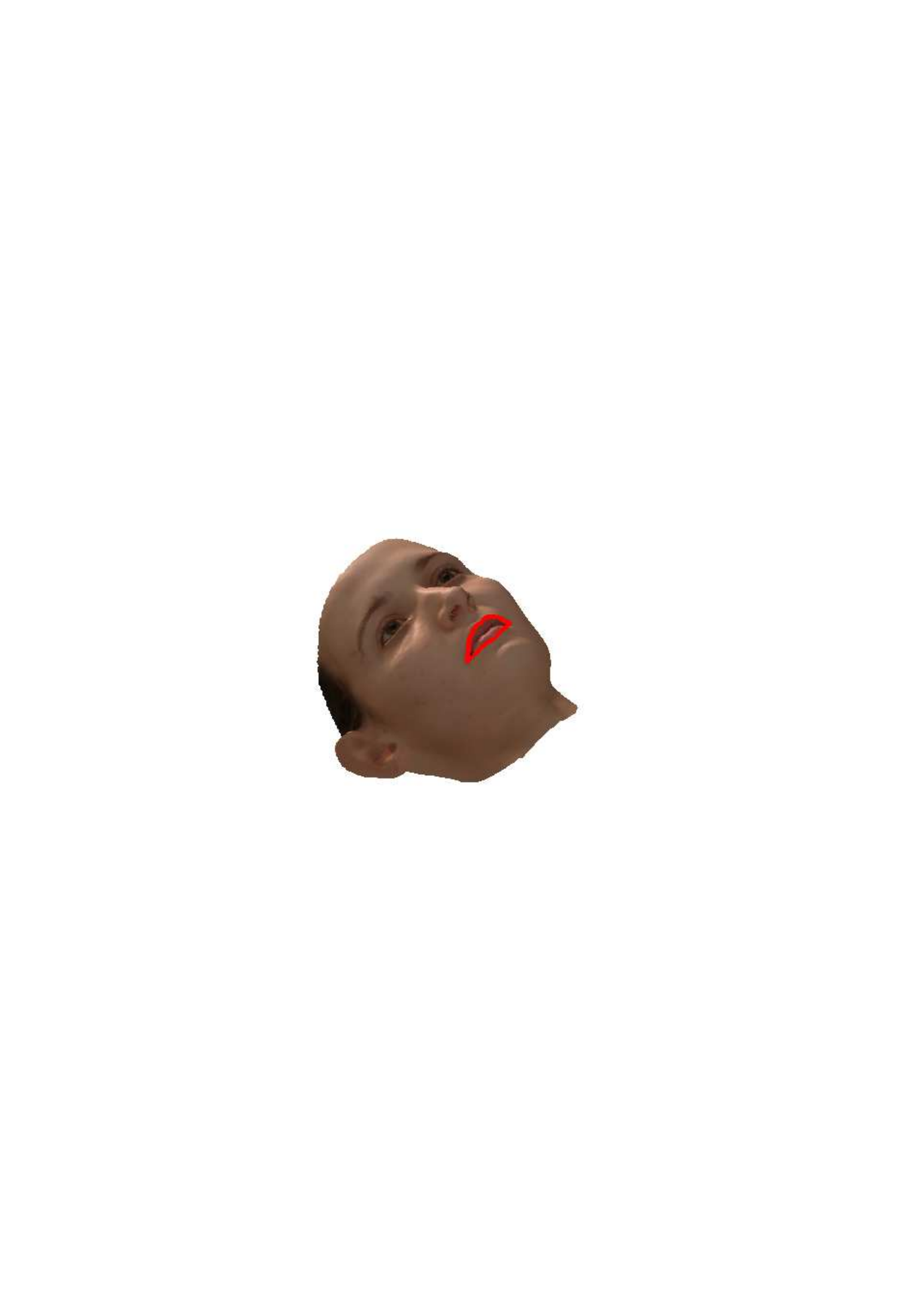}
\includegraphics[viewport = 170 300 410 530, width = 0.2\textwidth]{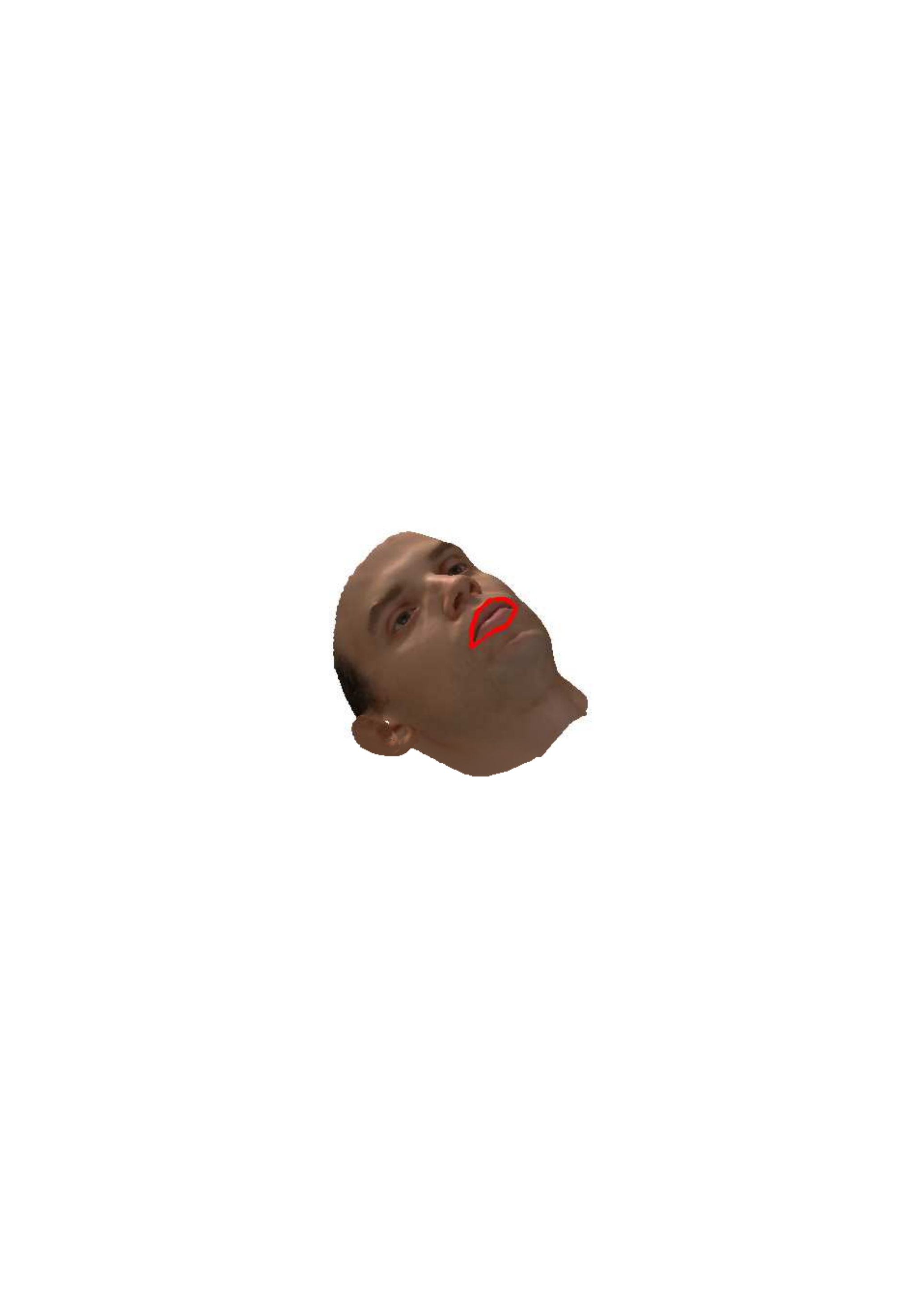}
\includegraphics[viewport = 170 300 410 530, width = 0.2\textwidth]{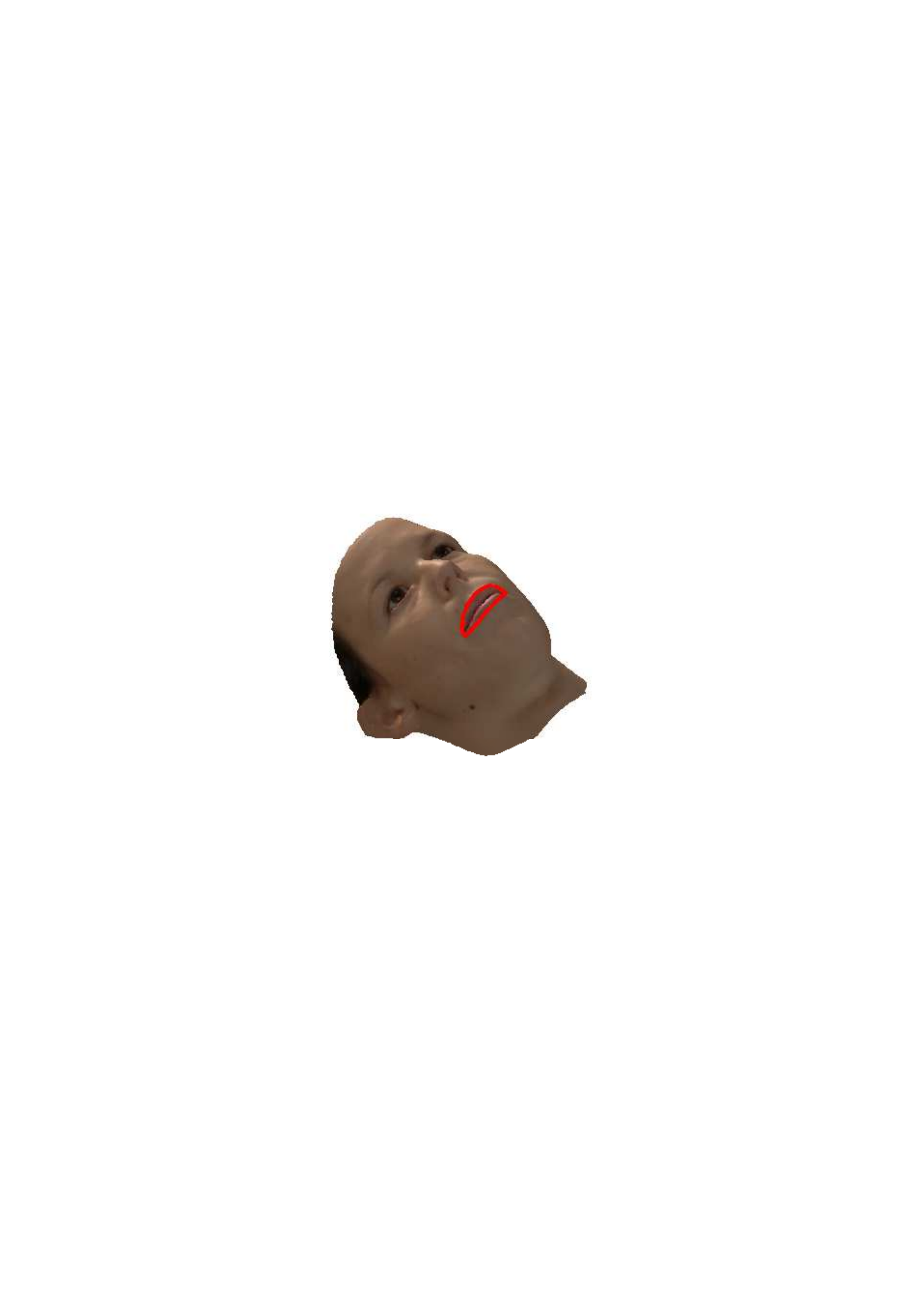}\\
\caption{Lip contour detection with a two-phase segmentation. Initial (first row) and final (second row) contours. The surfaces and images are from the 3D Basel Face Model (BFM) of the Computer Science department of the University of Basel, cf. \cite{Paysen09}.}
\label{fig:lip_contours}
\end{figure}

\paragraph{Processing of global Earth observation data}
Another application is the processing of Earth observation data. Global Earth observation data can be interpreted as an image given on a sphere. 
We apply the image segmentation and denoising method on data from the NASA Earth Observation data set\footnote{\url{http://neo.sci.gsfc.nasa.gov/}}, cf. \cite{NEO}.   
We segment an image showing outgoing longwave radiation\footnote{Imagery by Jesse Allen, NASA Earth Observatory, based on FLASHFlux data. FLASHFlux data are produced using CERES observations convolved with MODIS measurements from both the Terra and Aqua satellite. Data provided by the FLASHFlux team, NASA Langley Research Center.}, see Figure \ref{fig:earth_longwave_radiation1} - \ref{fig:earth_longwave_radiation3}. The colors represent the amount of outgoing longwave radiation leaving the Earth's atmosphere in one month (here: January 2014). Yellow and orange color represent greater heat emission (around $300$ to $350\,\mathrm{Wm}^{-2}$); purple and blue color represent intermediate emissions (around $200\,\mathrm{Wm}^{-2}$).

Figure \ref{fig:earth_longwave_radiation1} presents the given image data and the contours at different time steps (rows), observed from different viewing angles (columns). 
For the segmentation we used the RGB color space and set the weighting parameters for the curvature and the external term to $\sigma = 1$ and $\lambda_1 = \lambda_2 = \lambda_3 = \lambda = 50$. As time step size we set $\Delta t = 0.0005$. Several topology changes (splitting and merging) occur during the segmentation (cf. e.g. $m=75$). Figure \ref{fig:earth_longwave_radiation2} shows the corresponding piecewise constant approximations. Figure \ref{fig:earth_longwave_radiation3} presents the result of the postprocessing image restoration using the parameters $\lambda = 100$, $\lambda = 1000$ and $\lambda = 10000$. 

\begin{figure}[t]
\centering %0.25  0.235 0.255
\includegraphics[trim = 10mm 2mm 10mm 2mm,clip, width = 0.25 \textwidth]{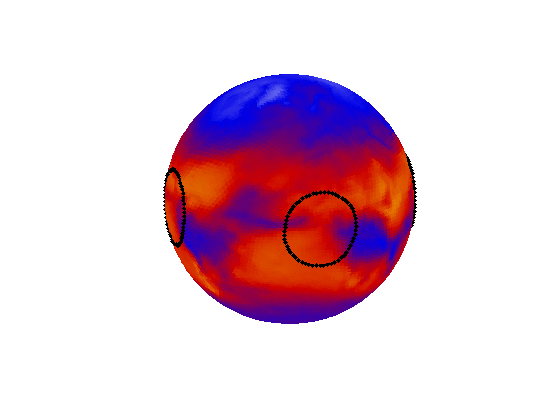}
\includegraphics[trim = 10mm 0mm 10mm 0mm,clip, width = 0.235\textwidth]{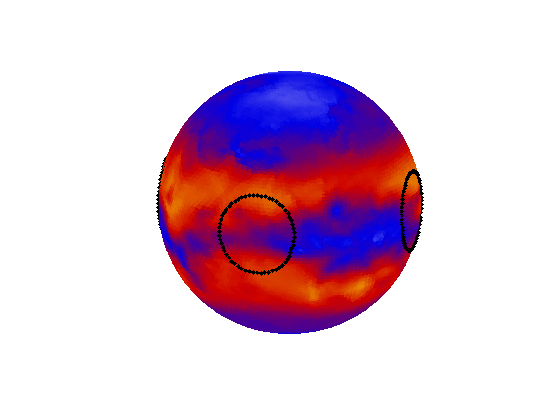}
\includegraphics[trim = 10mm 3mm 10mm 3mm,clip, width = 0.255\textwidth]{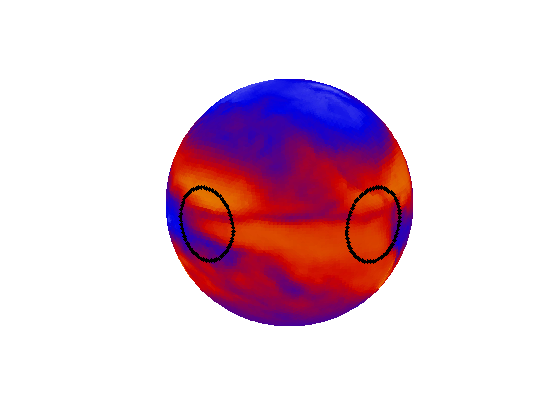}\\[0ex]
\includegraphics[trim = 10mm 2mm 10mm 2mm,clip, width = 0.25 \textwidth]{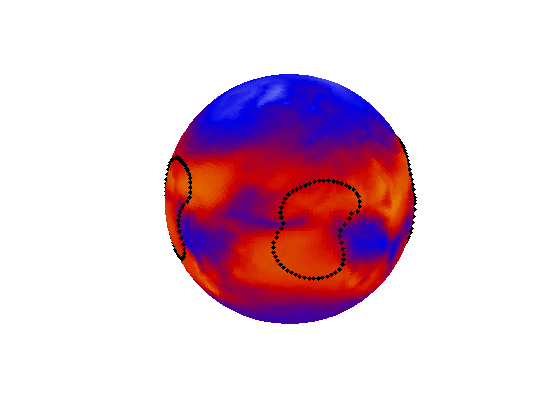}
\includegraphics[trim = 10mm 0mm 10mm 0mm,clip, width = 0.235\textwidth]{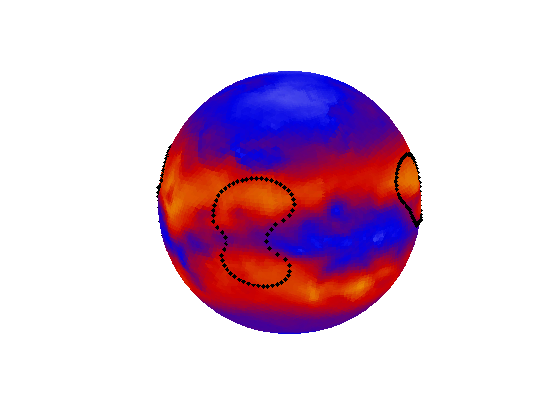}
\includegraphics[trim = 10mm 3mm 10mm 3mm,clip, width = 0.255\textwidth]{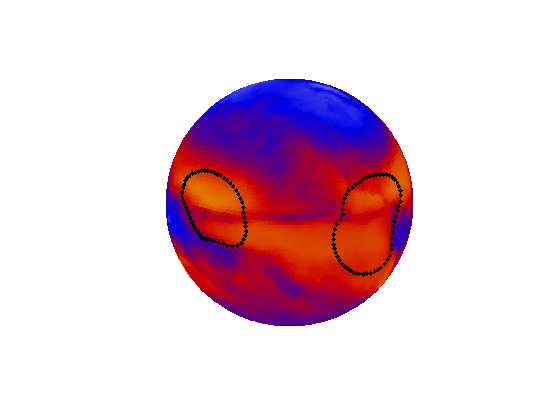}\\[0ex]
\includegraphics[trim = 10mm 2mm 10mm 2mm,clip, width = 0.25 \textwidth]{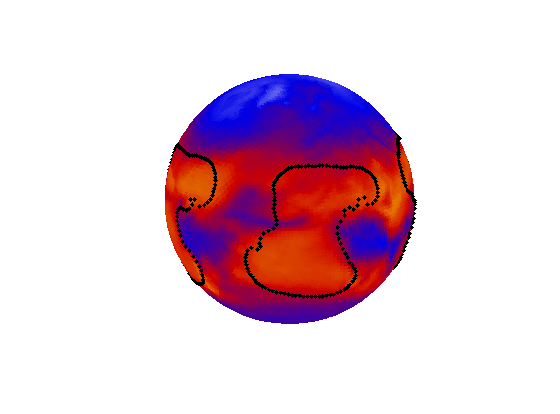}
\includegraphics[trim = 10mm 0mm 10mm 0mm,clip, width = 0.235\textwidth]{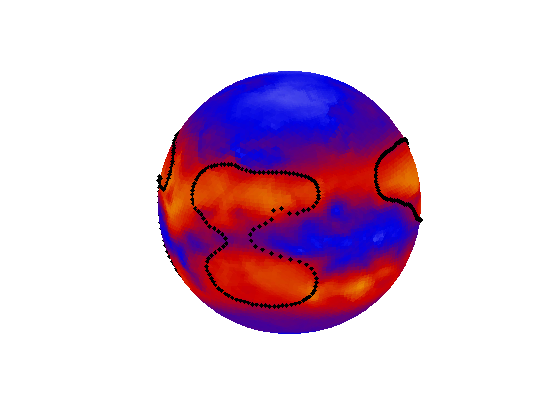}
\includegraphics[trim = 10mm 3mm 10mm 3mm,clip, width = 0.255\textwidth]{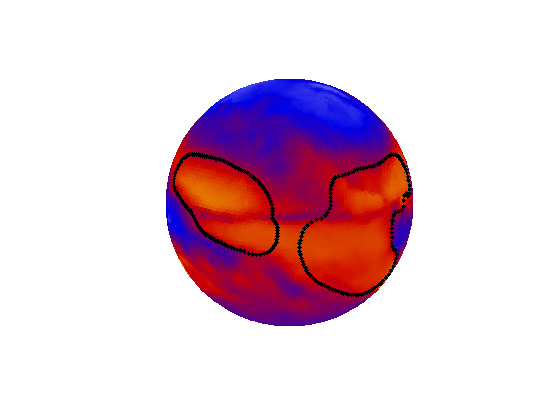}\\[0ex]
\includegraphics[trim = 10mm 2mm 10mm 2mm,clip, width = 0.25 \textwidth]{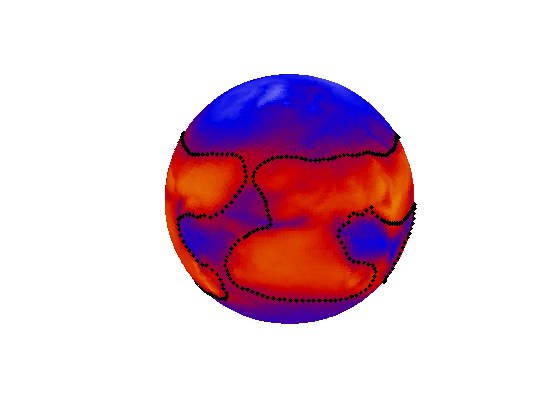}
\includegraphics[trim = 10mm 0mm 10mm 0mm,clip, width = 0.235\textwidth]{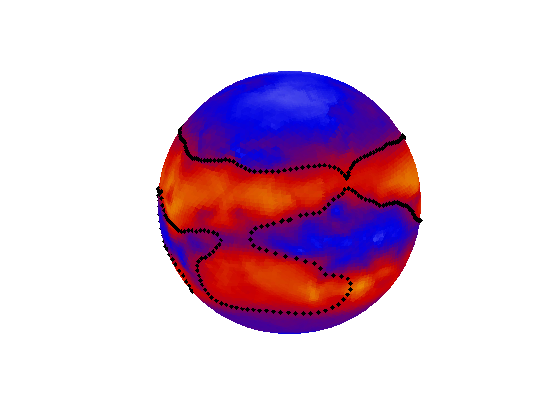}
\includegraphics[trim = 10mm 3mm 10mm 3mm,clip, width = 0.255\textwidth]{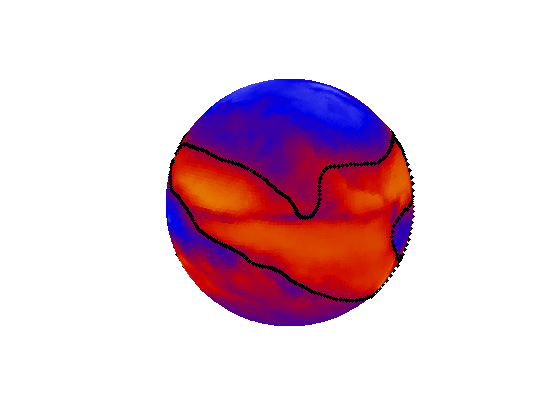}\\[0ex]
\includegraphics[trim = 10mm 2mm 10mm 2mm,clip, width = 0.25 \textwidth]{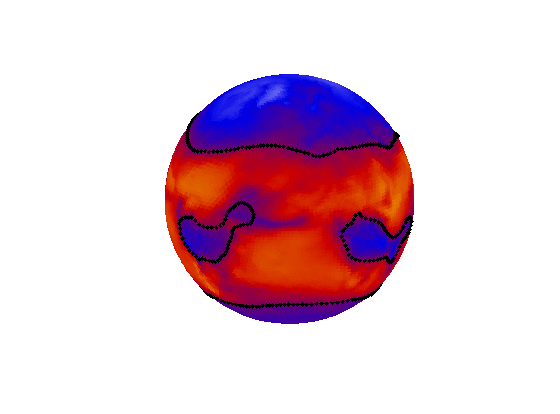}
\includegraphics[trim = 10mm 0mm 10mm 0mm,clip, width = 0.235\textwidth]{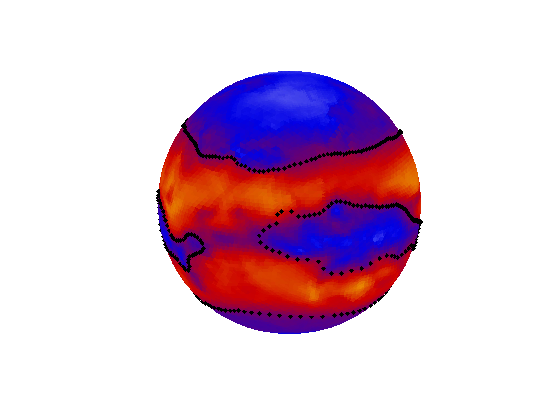}
\includegraphics[trim = 10mm 3mm 10mm 3mm,clip, width = 0.255\textwidth]{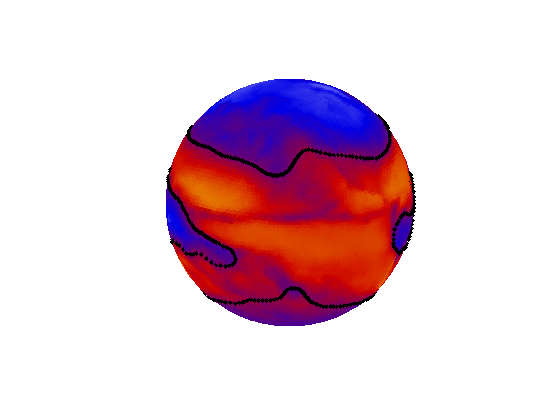}
\caption{Segmentation of longwave radiation data given on the Earth's surface. First - fifth row: Original image  and contours for $m=1, 20, 50, 75, 150$. First - third column: Different viewing angles. The original image is from the NASA Earth Observation data set, \cite{NEO}.}
\label{fig:earth_longwave_radiation1}
\end{figure}

\begin{figure}[t]
\centering
\includegraphics[trim = 10mm 2mm 10mm 2mm,clip, width = 0.25 \textwidth]{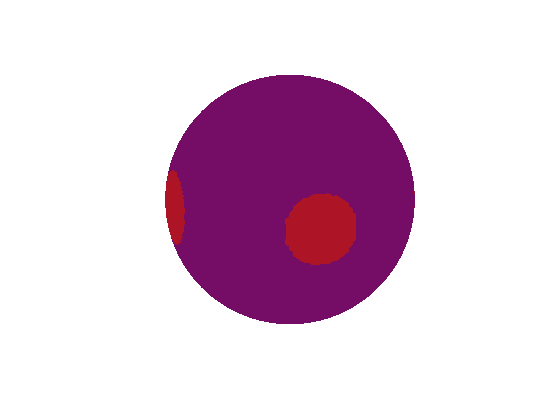}
\includegraphics[trim = 10mm 0mm 10mm 0mm,clip, width = 0.235\textwidth]{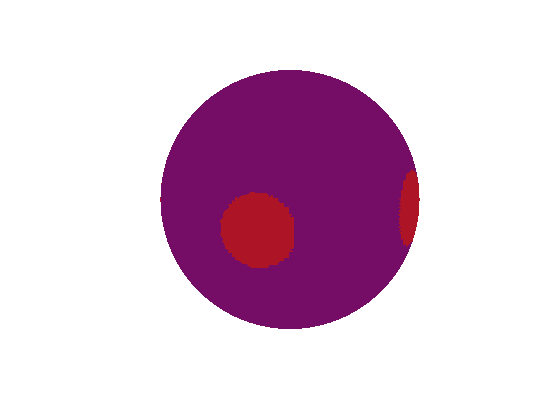}
\includegraphics[trim = 10mm 3mm 10mm 3mm,clip, width = 0.255\textwidth]{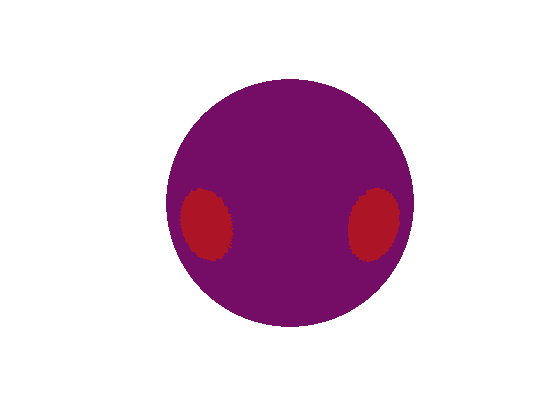}\\[0ex]
\includegraphics[trim = 10mm 2mm 10mm 2mm,clip, width = 0.25 \textwidth]{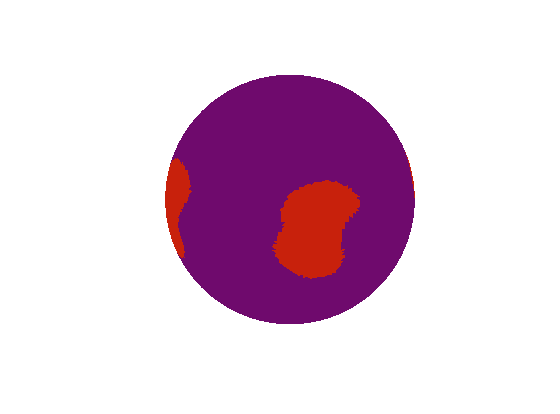}
\includegraphics[trim = 10mm 0mm 10mm 0mm,clip, width = 0.235\textwidth]{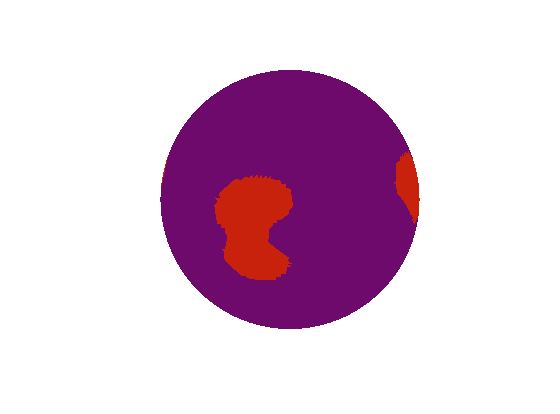}
\includegraphics[trim = 10mm 3mm 10mm 3mm,clip, width = 0.255\textwidth]{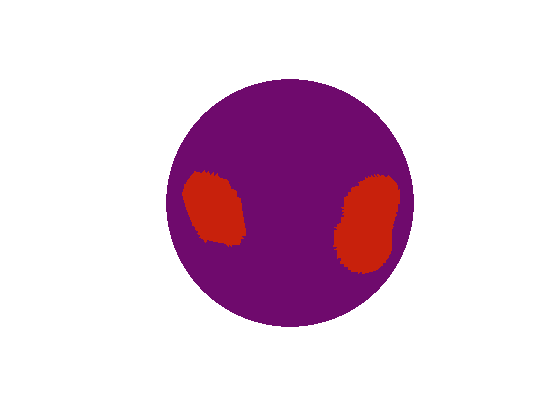}\\[0ex]
\includegraphics[trim = 10mm 2mm 10mm 2mm,clip, width = 0.25 \textwidth]{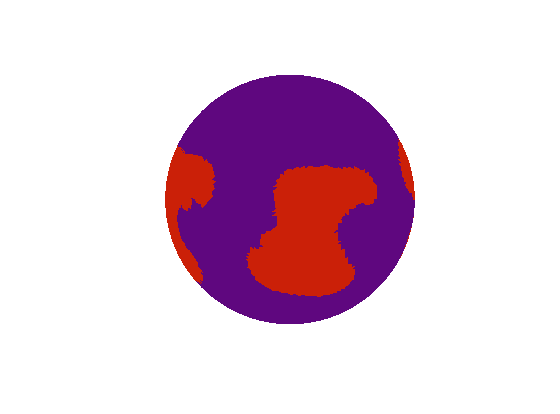}
\includegraphics[trim = 10mm 0mm 10mm 0mm,clip, width = 0.235\textwidth]{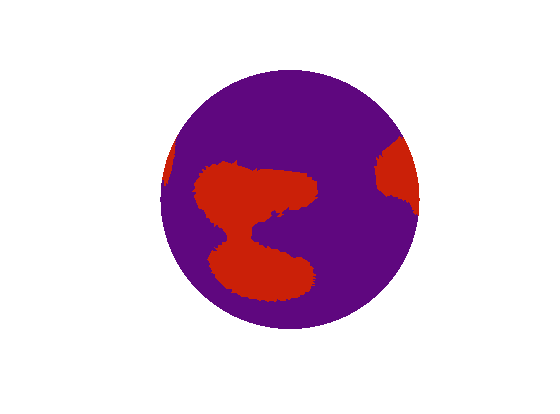}
\includegraphics[trim = 10mm 3mm 10mm 3mm,clip, width = 0.255\textwidth]{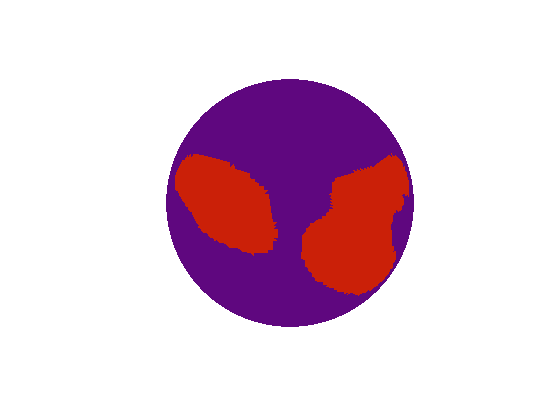}\\[0ex]
\includegraphics[trim = 10mm 2mm 10mm 2mm,clip, width = 0.25 \textwidth]{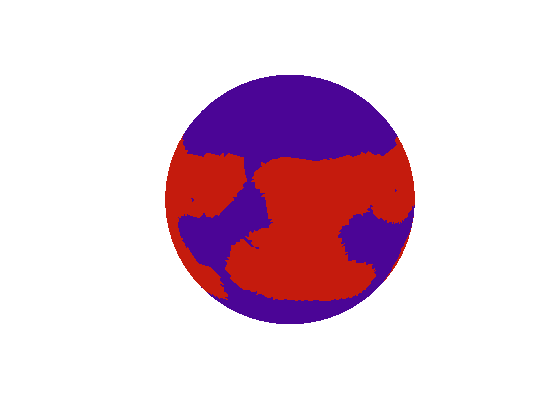}
\includegraphics[trim = 10mm 0mm 10mm 0mm,clip, width = 0.235\textwidth]{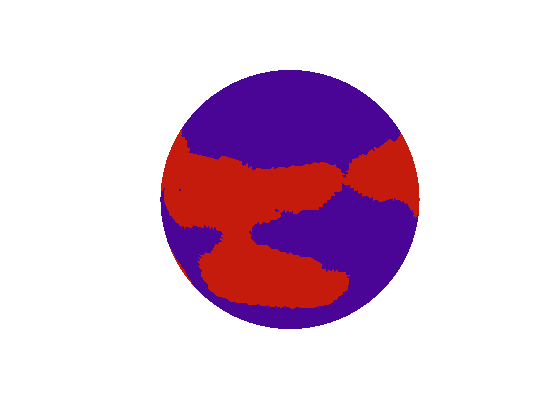}
\includegraphics[trim = 10mm 3mm 10mm 3mm,clip, width = 0.255\textwidth]{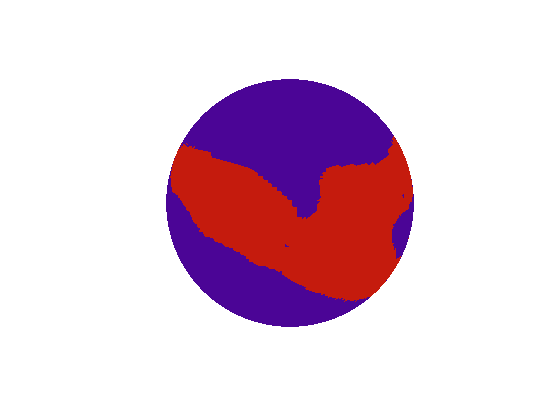}\\[0ex]
\includegraphics[trim = 10mm 2mm 10mm 2mm,clip, width = 0.25 \textwidth]{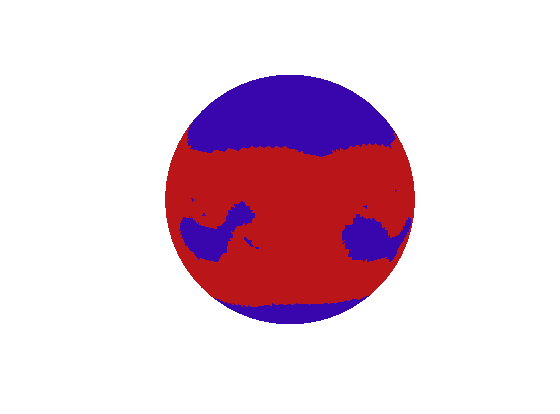}
\includegraphics[trim = 10mm 0mm 10mm 0mm,clip, width = 0.235\textwidth]{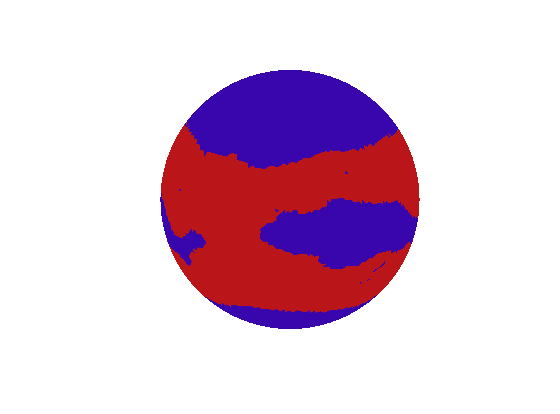}
\includegraphics[trim = 10mm 3mm 10mm 3mm,clip, width = 0.255\textwidth]{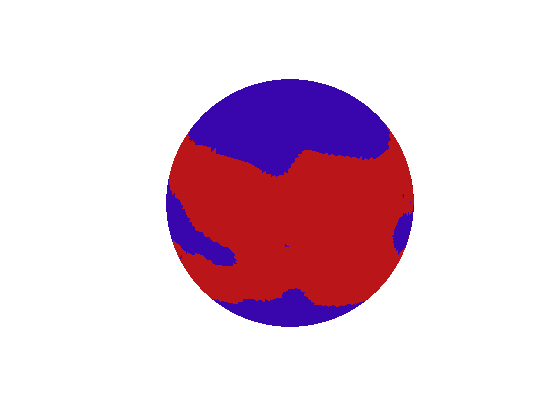}
\caption{Segmentation of longwave radiation data given on the Earth's surface. First - fifth row: Piecewise constant approximation for $m=1, 20, 50, 75, 150$. First - third column: Different viewing angles. The original image is from the NASA Earth Observation data set, \cite{NEO}.}
\label{fig:earth_longwave_radiation2}
\end{figure}

\begin{figure}[t]
\centering
\includegraphics[trim = 10mm 2mm 10mm 2mm,clip, width = 0.25 \textwidth]{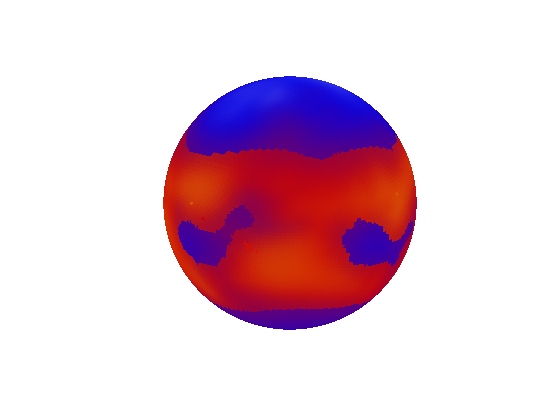}
\includegraphics[trim = 10mm 0mm 10mm 0mm,clip, width = 0.235\textwidth]{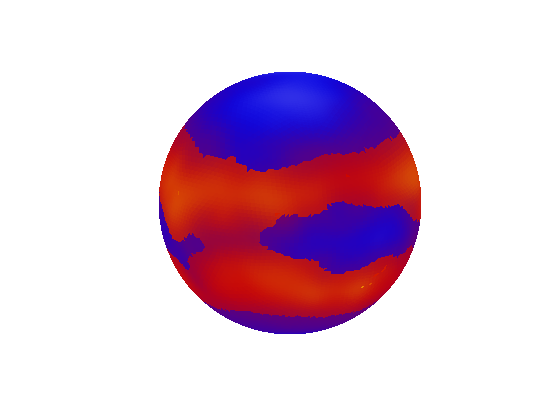}
\includegraphics[trim = 10mm 3mm 10mm 3mm,clip, width = 0.255\textwidth]{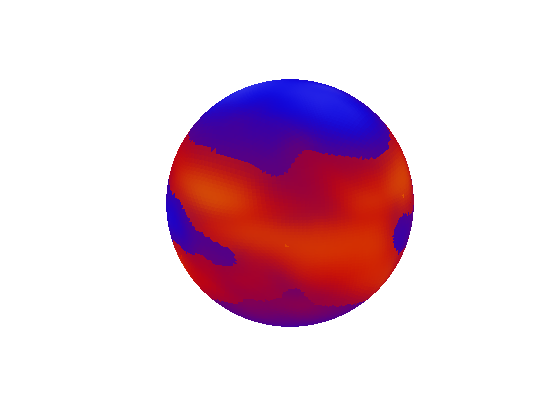}\\[0ex]
\includegraphics[trim = 10mm 2mm 10mm 2mm,clip, width = 0.25 \textwidth]{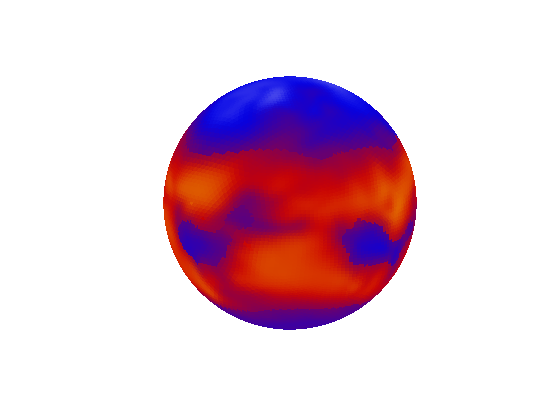}
\includegraphics[trim = 10mm 0mm 10mm 0mm,clip, width = 0.235\textwidth]{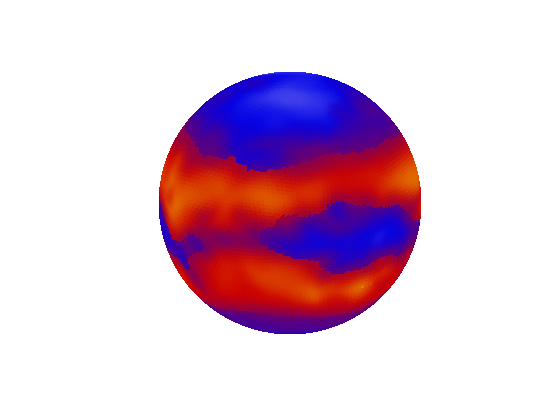}
\includegraphics[trim = 10mm 3mm 10mm 3mm,clip, width = 0.255\textwidth]{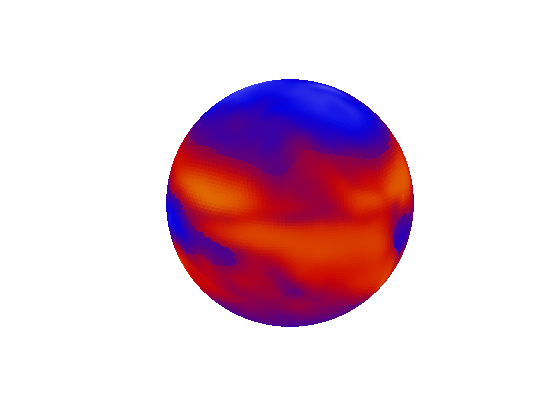}\\[0ex]
\includegraphics[trim = 10mm 2mm 10mm 2mm,clip, width = 0.25 \textwidth]{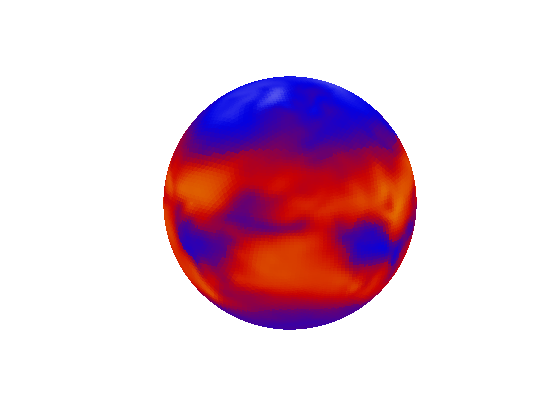}
\includegraphics[trim = 10mm 0mm 10mm 0mm,clip, width = 0.235\textwidth]{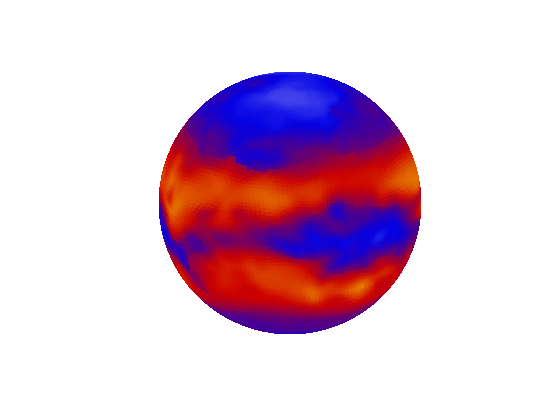}
\includegraphics[trim = 10mm 3mm 10mm 3mm,clip, width = 0.255\textwidth]{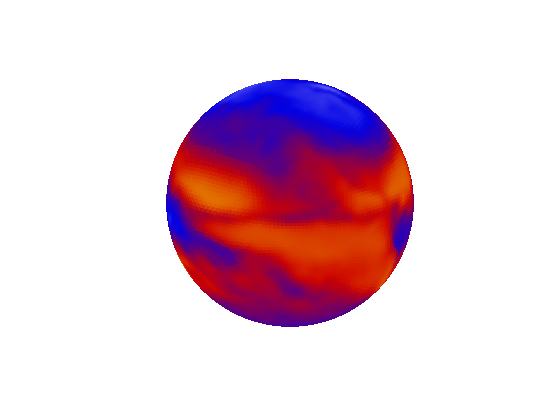}
\caption{Denoising of the longwave radiation data using the detected regions from time step $m=M=150$. First row: $\lambda = 100$. Second row: $\lambda = 1000$. Third row: $\lambda = 10000$. First - third column: Different viewing angles. The original image is from the NASA Earth Observation data set, \cite{NEO}.}
\label{fig:earth_longwave_radiation3}
\end{figure}

For demonstration of multiphase image segmentation, we consider a second example image from the NASA Earth Observation data set, cf. \cite{NEO}. We now segment an image showing the Earth's net radiation.
The net radiation is defined as the difference between the amount of solar energy which enters the Earth system and the amount of heat energy which escapes into space during one month (here: March 2014). Red color represents a net radiation around $280 \,\mathrm{Wm}^{-2}$, yellow color a net radiation around  $0 \,\mathrm{Wm}^{-2}$ and blue-green color a net radiation of  $-280 \,\mathrm{Wm}^{-2}$. 

Figure \ref{fig:earth_net_radiation1} presents the original image with the contours at different time steps (rows), observed from different viewing angles (columns). For the segmentation we used $\sigma = 1$ and $\lambda_1 = \lambda_2 = \lambda_3 = \lambda = 300$ and the RGB color space. As time step size we set $\Delta t = 0.002$. The detected regions are not separated by sharp image edges. Here, the detected boundaries are weak edges, i.e. they lie at locations in the image where the color smoothly changes from yellow to orange or yellow to green, respectively. At $m=71$ a merging and at $m=149$ a splitting occurs. Figure \ref{fig:earth_net_radiation2} shows the corresponding piecewise constant approximation.

\begin{figure}
\centering
\includegraphics[trim = 10mm 0mm 10mm 0mm,clip, width = 0.25\textwidth]{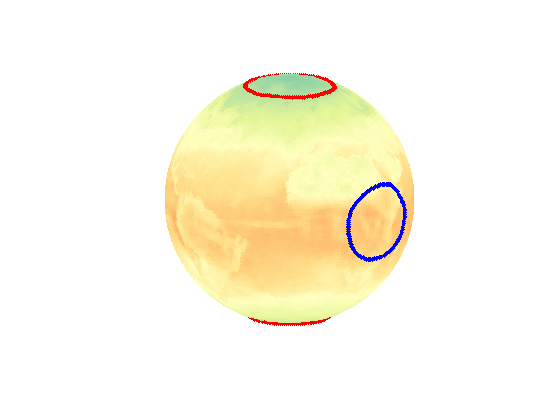}
\includegraphics[trim = 10mm 0mm 10mm 0mm,clip, width = 0.25\textwidth]{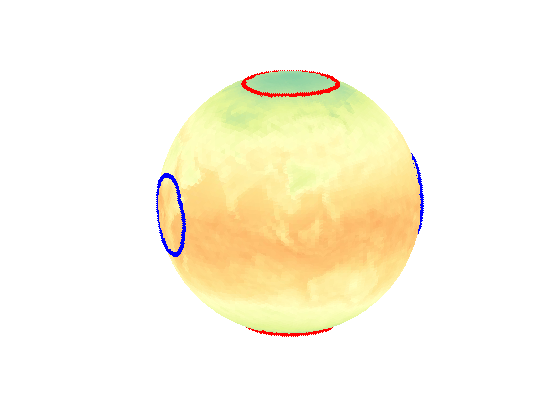}
\includegraphics[trim = 15mm 8mm 15mm 8mm,clip, width = 0.27\textwidth]{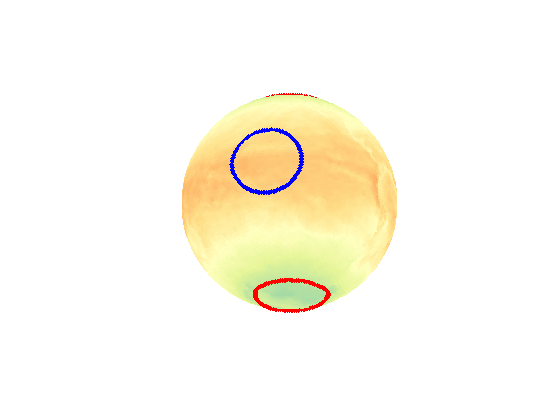}\\[0ex]
\includegraphics[trim = 10mm 0mm 10mm 0mm,clip, width = 0.25\textwidth]{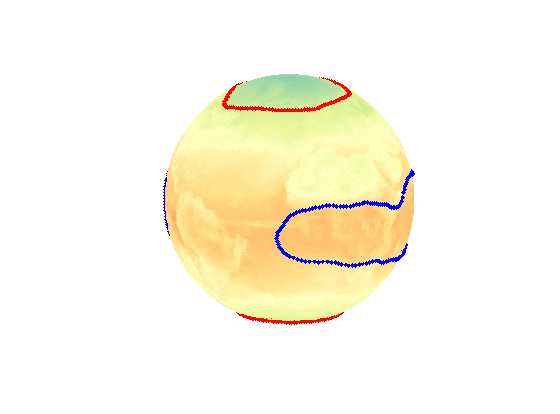}
\includegraphics[trim = 10mm 0mm 10mm 0mm,clip, width = 0.25\textwidth]{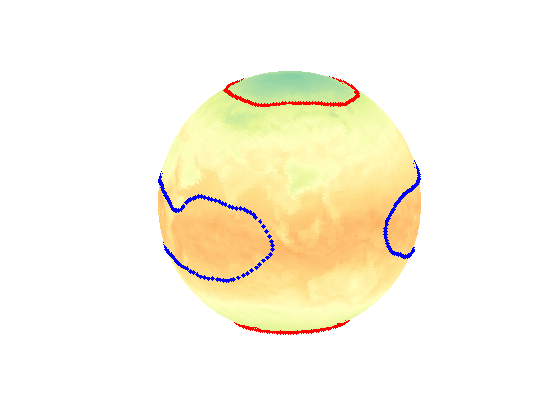}
\includegraphics[trim = 15mm 8mm 15mm 8mm,clip, width = 0.27\textwidth]{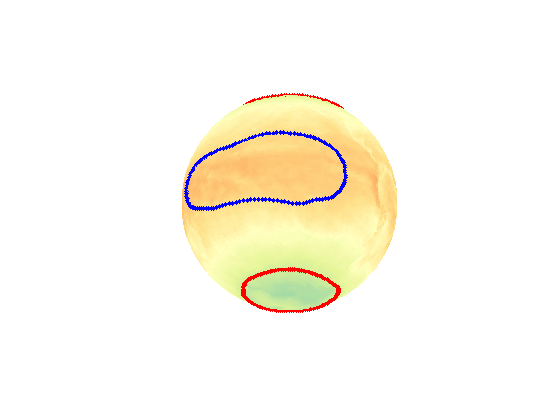}\\[0ex]
\includegraphics[trim = 10mm 0mm 10mm 0mm,clip, width = 0.25\textwidth]{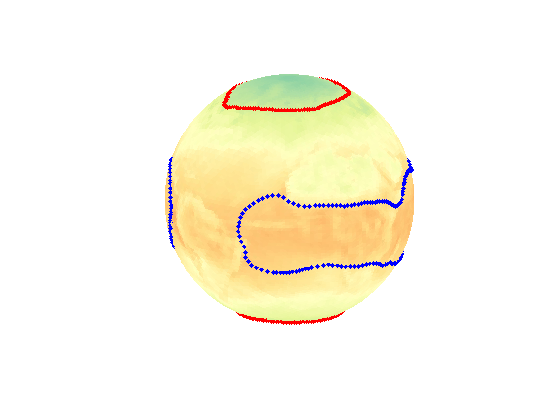}
\includegraphics[trim = 10mm 0mm 10mm 0mm,clip, width = 0.25\textwidth]{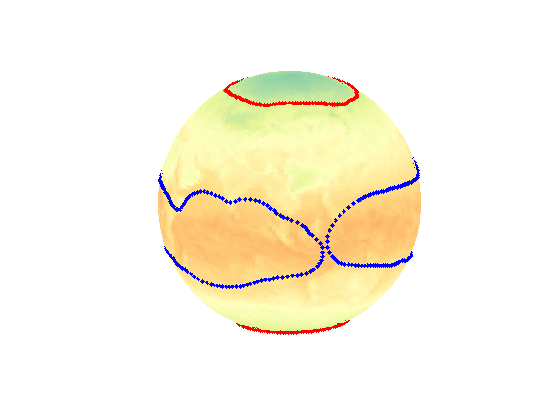}
\includegraphics[trim = 15mm 8mm 15mm 8mm,clip, width = 0.27\textwidth]{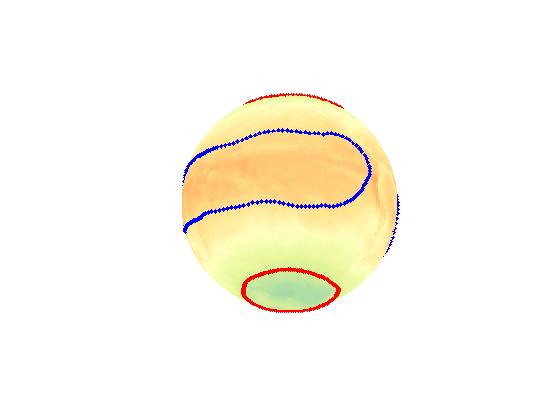}\\[0ex]
\includegraphics[trim = 10mm 0mm 10mm 0mm,clip, width = 0.25\textwidth]{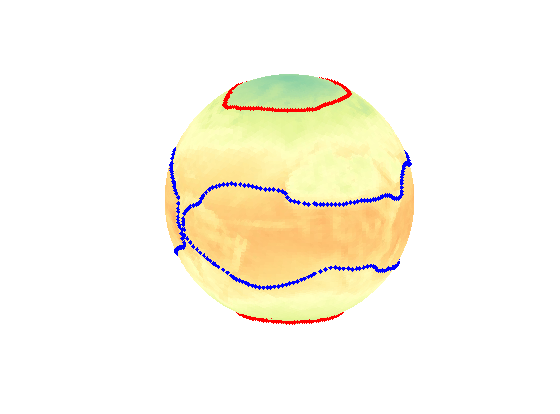}
\includegraphics[trim = 10mm 0mm 10mm 0mm,clip, width = 0.25\textwidth]{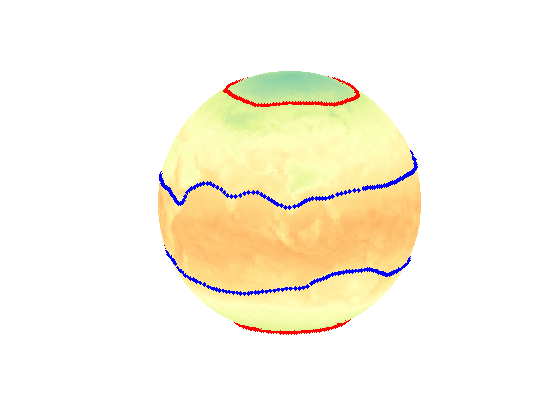}
\includegraphics[trim = 15mm 8mm 15mm 8mm,clip, width = 0.27\textwidth]{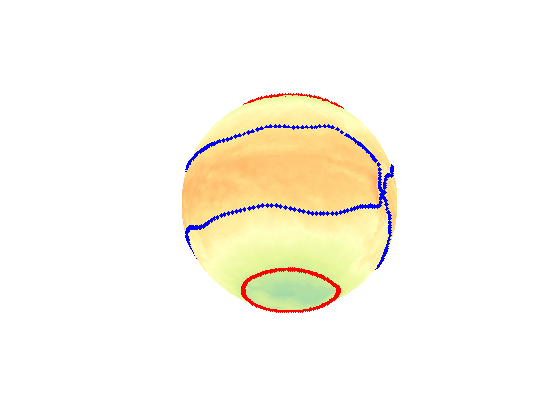}\\[0ex]
\includegraphics[trim = 10mm 0mm 10mm 0mm,clip, width = 0.25\textwidth]{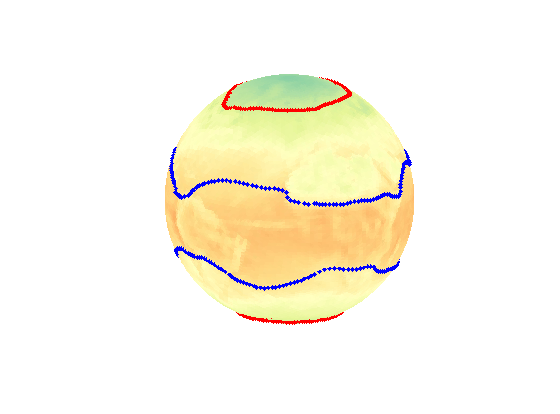}
\includegraphics[trim = 10mm 0mm 10mm 0mm,clip, width = 0.25\textwidth]{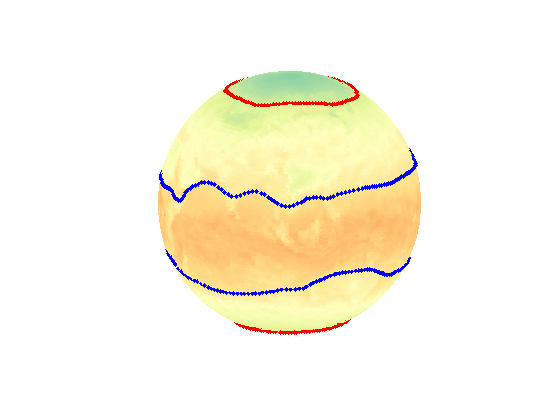}
\includegraphics[trim = 15mm 8mm 15mm 8mm,clip, width = 0.27\textwidth]{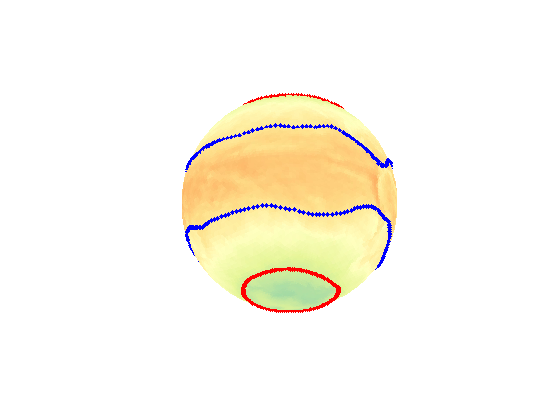}
\caption{Segmentation of net radiation data given on the Earth's surface. First - fifth row: Original image  and contours for $m=1, 50, 71, 149, 180$. First - third column: Different viewing angles. The original image is from the NASA Earth Observation data set, \cite{NEO}.}
\label{fig:earth_net_radiation1}
\end{figure}

\begin{figure}
\centering
\includegraphics[trim = 10mm 0mm 10mm 0mm,clip, width = 0.25\textwidth]{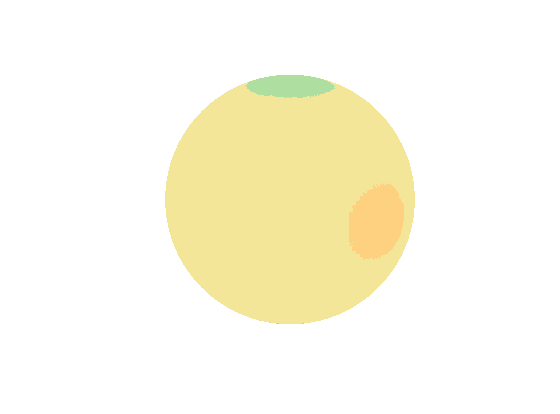}
\includegraphics[trim = 10mm 0mm 10mm 0mm,clip, width = 0.25\textwidth]{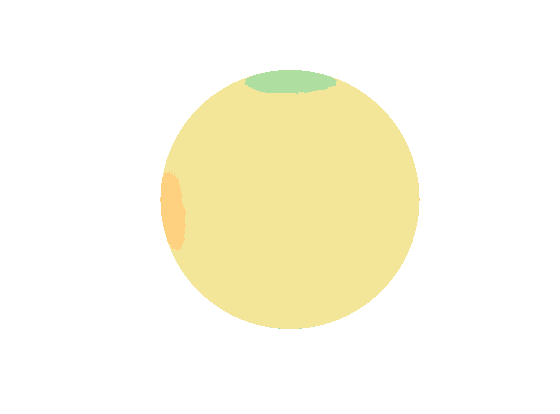}
\includegraphics[trim = 15mm 8mm 15mm 8mm,clip, width = 0.27\textwidth]{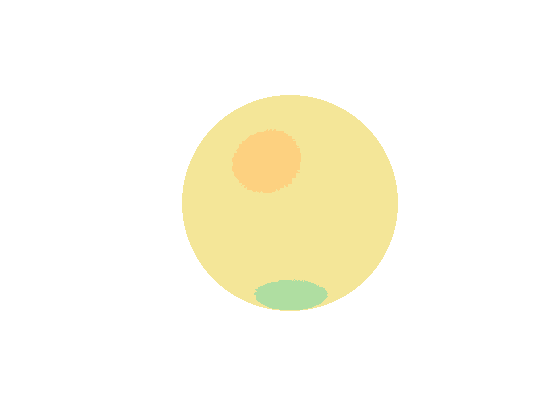}\\[0ex]
\includegraphics[trim = 10mm 0mm 10mm 0mm,clip, width = 0.25\textwidth]{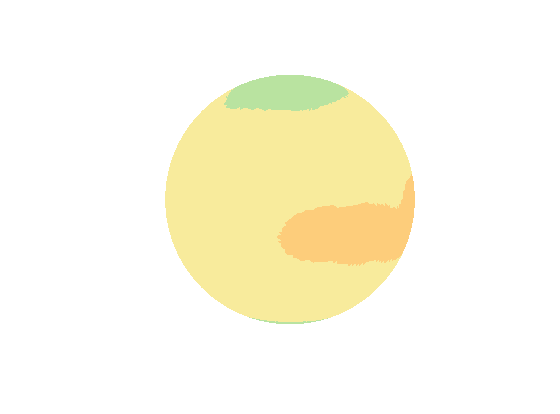}
\includegraphics[trim = 10mm 0mm 10mm 0mm,clip, width = 0.25\textwidth]{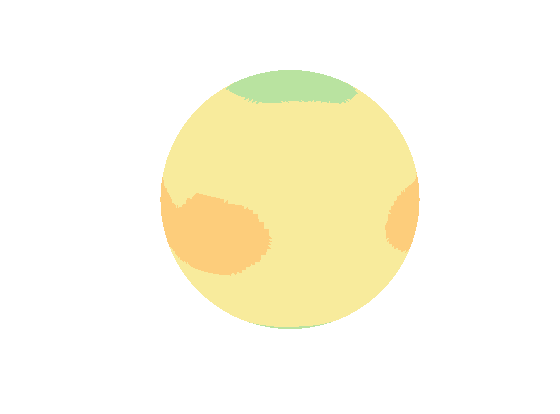}
\includegraphics[trim = 15mm 8mm 15mm 8mm,clip, width = 0.27\textwidth]{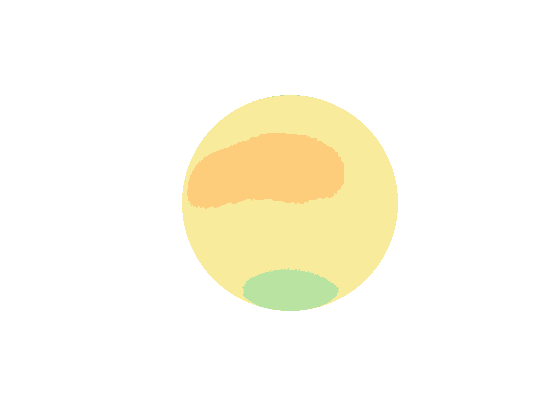}\\[0ex]
\includegraphics[trim = 10mm 0mm 10mm 0mm,clip, width = 0.25\textwidth]{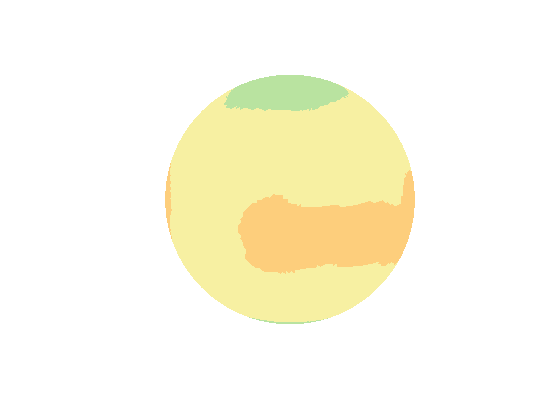}
\includegraphics[trim = 10mm 0mm 10mm 0mm,clip, width = 0.25\textwidth]{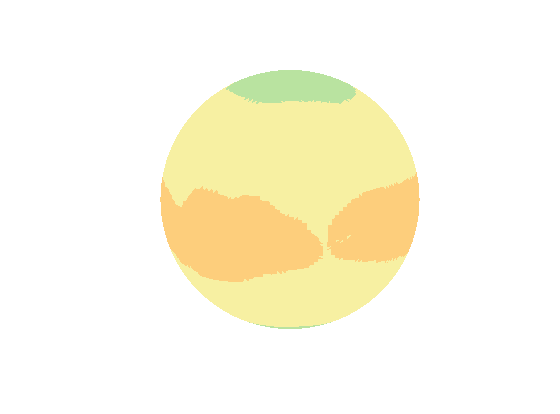}
\includegraphics[trim = 15mm 8mm 15mm 8mm,clip, width = 0.27\textwidth]{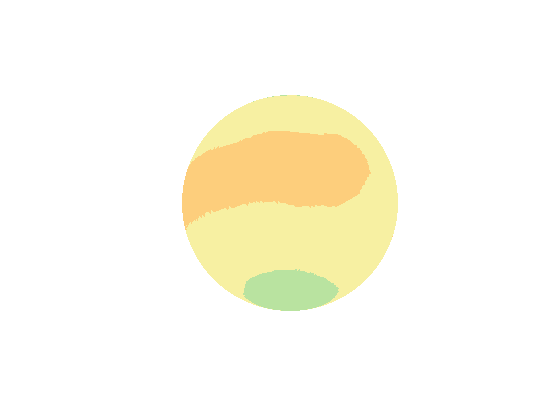}\\[0ex]
\includegraphics[trim = 10mm 0mm 10mm 0mm,clip, width = 0.25\textwidth]{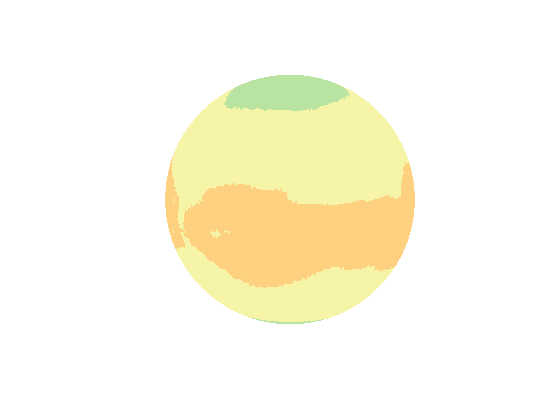}
\includegraphics[trim = 10mm 0mm 10mm 0mm,clip, width = 0.25\textwidth]{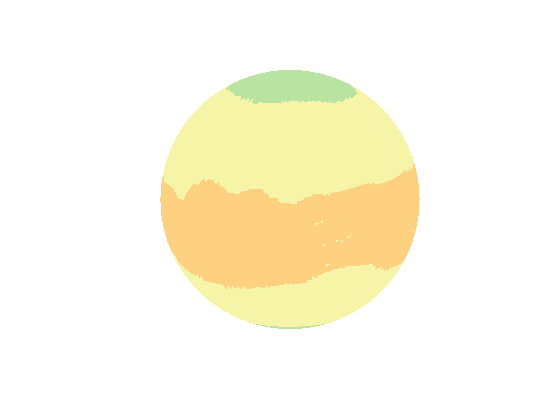}
\includegraphics[trim = 15mm 8mm 15mm 8mm,clip, width = 0.27\textwidth]{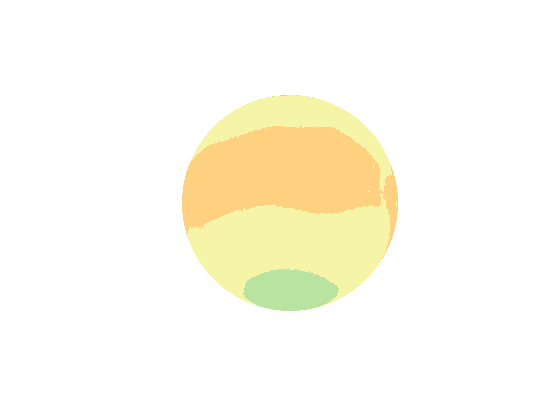}\\[0ex]
\includegraphics[trim = 10mm 0mm 10mm 0mm,clip, width = 0.25\textwidth]{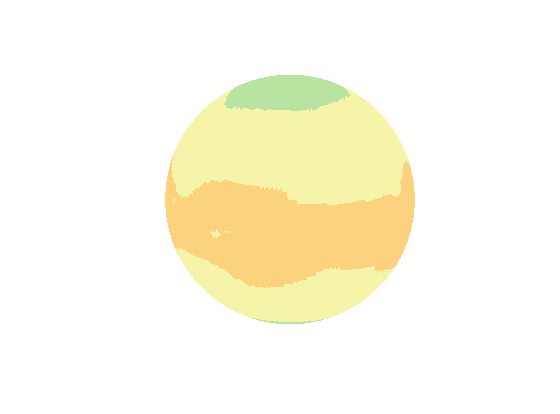}
\includegraphics[trim = 10mm 0mm 10mm 0mm,clip, width = 0.25\textwidth]{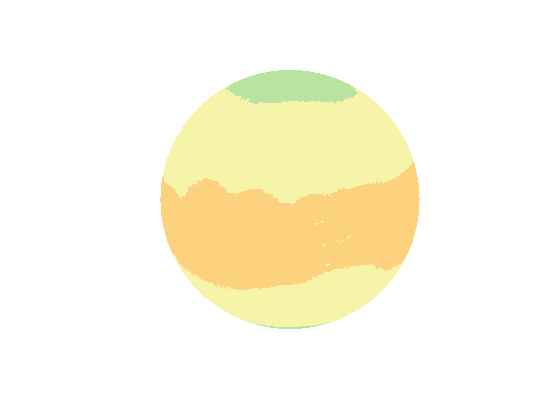}
\includegraphics[trim = 15mm 8mm 15mm 8mm,clip, width = 0.27\textwidth]{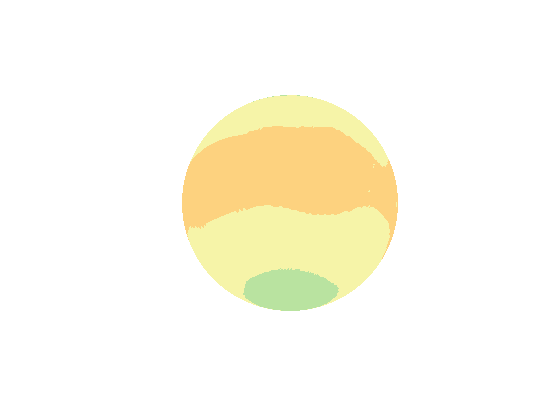}
\caption{Segmentation of net radiation data given on the Earth's surface. First - fifth row: Piecewise constant approximation for $m=1, 50, 71, 149, 180$. First - third column: Different viewing angles. The original image is from the NASA Earth Observation data set, \cite{NEO}.}
\label{fig:earth_net_radiation2}
\end{figure}

\begin{figure}
\centering
\includegraphics[trim = 10mm 0mm 10mm 0mm,clip, width = 0.25\textwidth]{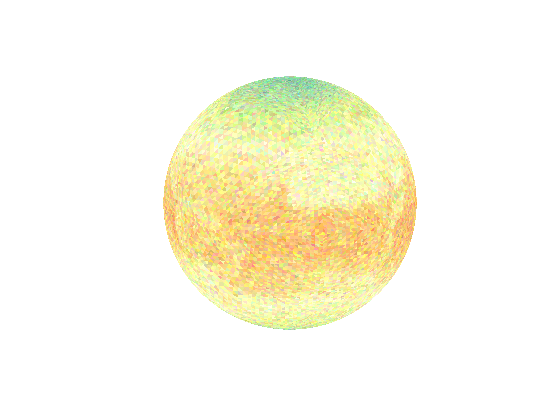}
\includegraphics[trim = 10mm 0mm 10mm 0mm,clip, width = 0.25\textwidth]{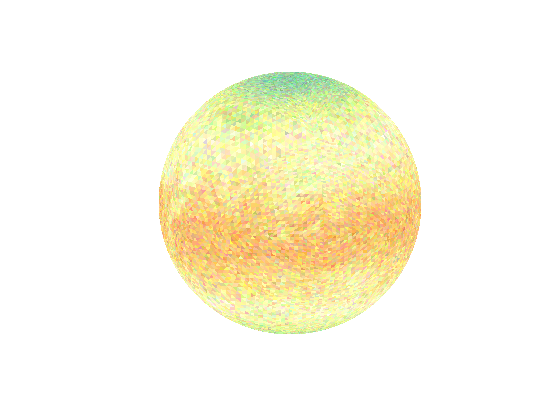}  
\includegraphics[trim = 15mm 8mm 15mm 8mm,clip, width = 0.27\textwidth]{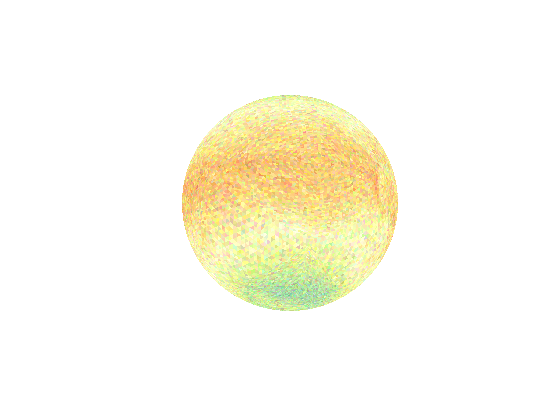}\\[0ex]
\includegraphics[trim = 10mm 0mm 10mm 0mm,clip, width = 0.25\textwidth]{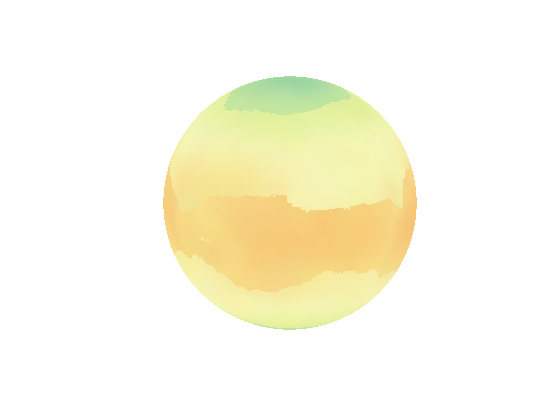}
\includegraphics[trim = 10mm 0mm 10mm 0mm,clip, width = 0.25\textwidth]{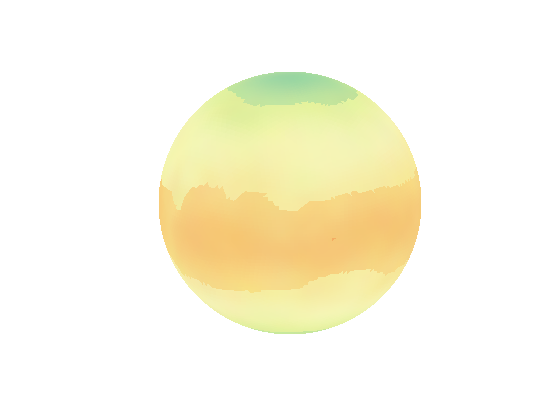}
\includegraphics[trim = 15mm 8mm 15mm 8mm,clip, width = 0.27\textwidth]{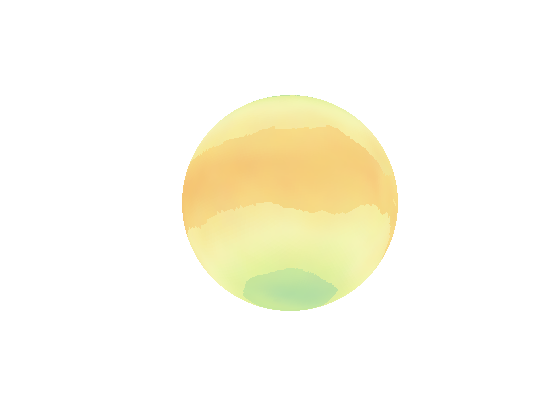}\\[0ex]
\includegraphics[trim = 10mm 0mm 10mm 0mm,clip, width = 0.25\textwidth]{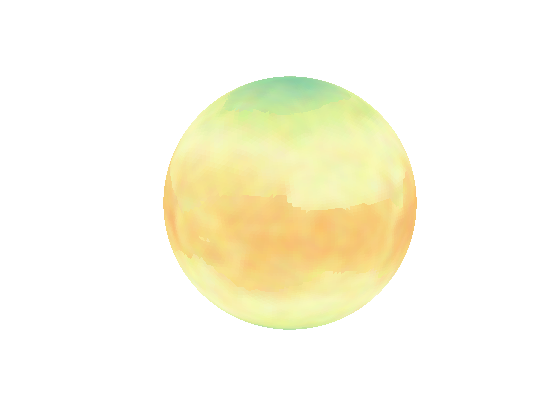}
\includegraphics[trim = 10mm 0mm 10mm 0mm,clip, width = 0.25\textwidth]{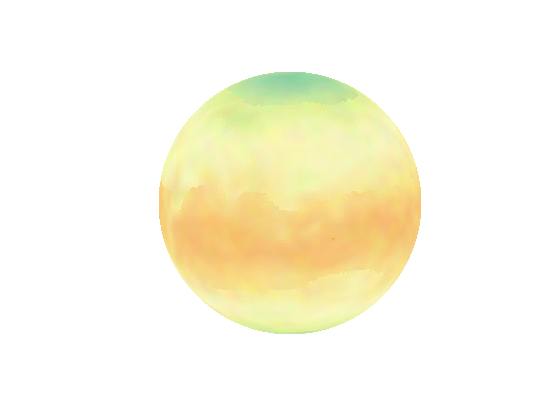}
\includegraphics[trim = 15mm 8mm 15mm 8mm,clip, width = 0.27\textwidth]{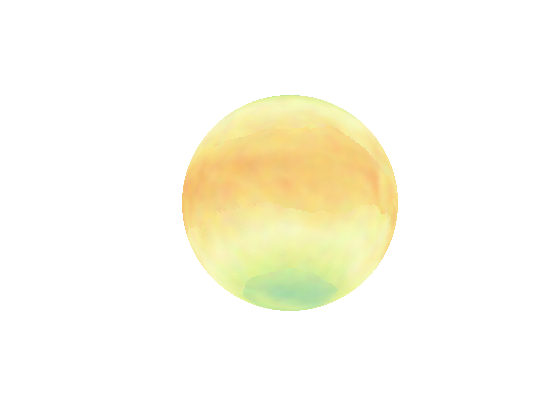}\\[0ex]
\includegraphics[trim = 10mm 0mm 10mm 0mm,clip, width = 0.25\textwidth]{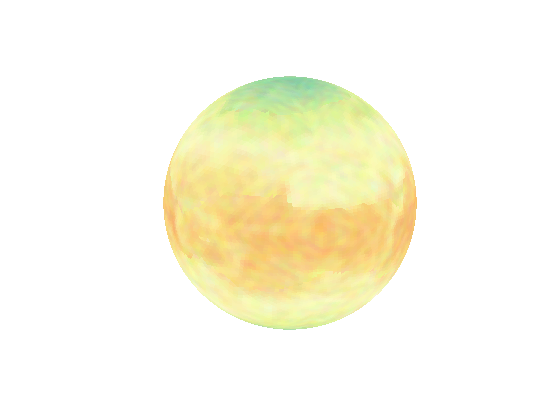}
\includegraphics[trim = 10mm 0mm 10mm 0mm,clip, width = 0.25\textwidth]{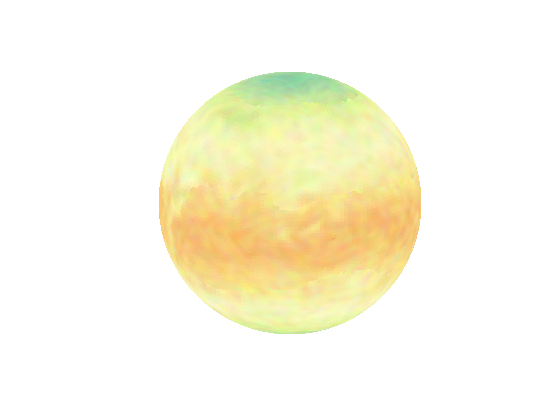}
\includegraphics[trim = 15mm 8mm 15mm 8mm,clip, width = 0.27\textwidth]{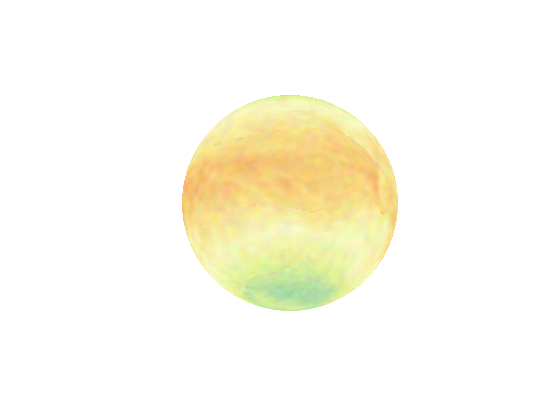}
\caption{Denoising of the net radiation data (image with added noise). First row: Noise added image to be smoothed. Second row: Smoothing result using $\lambda = 100$. Third row: $\lambda = 1000$. Forth row: $\lambda = 10000$.  First - third column: Different viewing angles. The original image is from the NASA Earth Observation data set, \cite{NEO}.}
\label{fig:earth_net_radiation_noise2}
\end{figure}

\begin{figure}
\centering
\includegraphics[width = 0.2\textwidth]{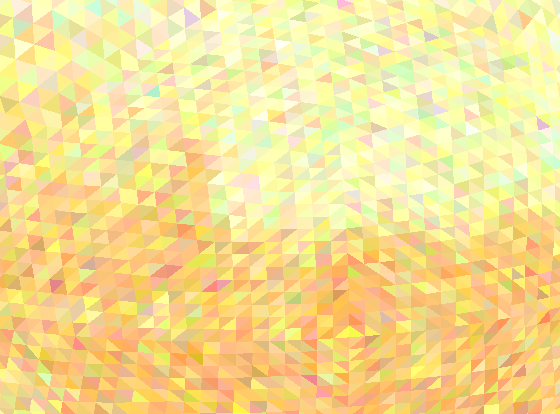} \hspace{1ex}
\includegraphics[width = 0.2\textwidth]{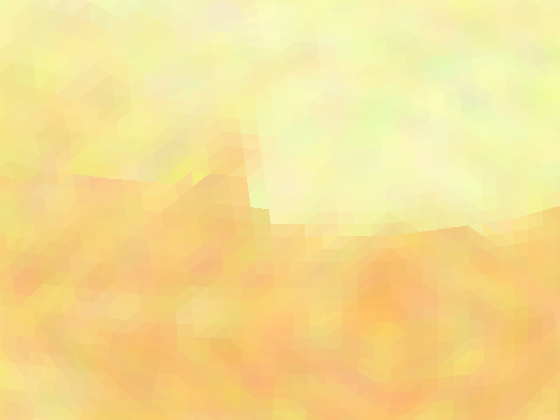}
\caption{Magnification of a part of the image. Left: noisy, original image. Right: smoothed version with $\lambda = 10000$.}
\label{fig:earth_net_radiation_noise3}
\end{figure}

Since the original image has little noise, we add some artificial, Gaussian noise to the image and repeat the segmentation using the generated image. As postprocessing step, we smooth the noisy image applying our restoration scheme, cf. Section \ref{subsec:finitedifference_imagedenoising_geodesic}. Figure \ref{fig:earth_net_radiation_noise2} shows the image with added noise and the denoised image for different parameters $\lambda$ under different viewing angles. We compare the results for several choices of $\lambda$: Using $\lambda=100$, the denoised image is very close to the piecewise constant image. For $\lambda=1000$ and $\lambda=10000$, we obtain images with removed noise, but which still contain sufficient details of the original data set. The data is not smoothed out too strong compared to $\lambda=100$. Figure \ref{fig:earth_net_radiation_noise3} shows a magnification of a part of the surface. It shows the noisy image (left) and the result of the denoising using $\lambda = 10000$ (right). We observe that the noise is well smoothed out. 

We have shown how the developed method can be applied on segmentation of global Earth data. The given data need not be a classical image generated by a camera. The data can be any data defined on the Earth's surface, like radiation data as in the examples presented here. 

One may argue that global Earth data can also be processed on a flat 2D domain, with a rectangular grid given by a discrete set of longitudes and latitudes. In principle, one can apply a two-dimensional image segmentation and restoration method as developed in \cite{Benninghoff2014a} for 2D images. However, performing the image processing using the 2D image has some disadvantages: To map the global Earth data to a rectangular 2D image, image boundaries at the poles and at longitude 0 have to be created. Points which are close to each other on the sphere (like at the two opposite sides of longitude 0) are not close to each other in the 2D image. Also topology changes like boundary intersection will occur in the 2D image. The coordinates of the boundary nodes on the left and the right boundary of the image may not fit, i.e. they lie on longitude 0, but can have different latitude values resulting in two different 3D points. Further, the length of curves and the area of regions near the poles are differently scaled compared to those near the equator. Polar regions always appear larger in 2D images. After a segmentation is performed, the size of the segmented regions are often of interest. Therefore, the area of the regions can be easily computed in a postprocessing step using the sphere data. This is not directly possible from the 2D image due to the different scaling of the polar and equatorial regions.
In summary,  it is beneficial to consider global Earth data directly as images on a surface.   

\section{Conclusion}
We presented how images on a surface can be efficiently segmented by curve evolution with a parametric approach. Furthermore, we showed how restoration with edge enhancement can be performed as a postprocessing step. We considered extensions of the Mumford-Shah \cite{Mumford89} and Chan-Vese \cite{Chan01} models to images on surfaces. The velocity of the parameterized curves was restricted to the tangent space of the surface, which guarantees that all curves  modeled as parametric curves in $\mathbb{R}^3$ stay on the surface during the evolution. We introduced an efficient numerical scheme based on a method for geodesic curvature flow \cite{BGN10}. Topology changes can be detected fast with an effort of $\mathcal{O}(N)$, where $N$ is the number of node points of the discretized curves. 
The applicability of the developed schemes on different images and different surfaces has been demonstrated in several experiments. 

\setlength{\parskip}{0cm}

\bibliographystyle{plain}
\bibliography{literatur}   

\vfill

\end{document}